\documentclass{article}

\makeatletter
\def\input@path{{./styles/neurips/}{./styles/shared/}}
\makeatother

\PassOptionsToPackage{numbers,sort&compress}{natbib}
\usepackage[main,final]{styles/neurips/neurips_2026}
\usepackage[utf8]{inputenc}
\usepackage[T1]{fontenc}
\usepackage{microtype}
\usepackage{graphicx}
\usepackage{wrapfig}
\usepackage{needspace}
\graphicspath{{./}}
\usepackage{subcaption}
\usepackage{booktabs}
\usepackage{makecell}
\usepackage{hyperref}
\usepackage{url}
\usepackage{xcolor}
\usepackage{amsmath}
\usepackage{amssymb}
\usepackage{amsfonts}
\usepackage{mathtools}
\usepackage{amsthm}
\usepackage[capitalize,noabbrev]{cleveref}

\newcommand{\initA}{\texttt{Init[A]}\xspace}
\newcommand{\initAB}{\texttt{Init[AB]}\xspace}
\newcommand{\initAA}{\texttt{Init[$\text{A}^2$]}\xspace}

\newcommand{\initB}{\texttt{Init[B]}\xspace}
\newlength{\appendixtocitemsep}
\newlength{\appendixtocsectionsep}
\newcommand{\appendixtocsection}[2]{%
  \par\addvspace{\appendixtocsectionsep}%
  \noindent\textbf{\hyperref[#1]{#2}}\par\vspace{0.1em}%
}
\newcommand{\appendixtocitem}[2]{%
  \noindent\hspace*{1.5em}-- \hyperref[#1]{#2}\par\vspace{\appendixtocitemsep}%
}

\theoremstyle{plain}
\newtheorem{condition}{Condition}

\input{watml-math}

\usepackage[textsize=tiny]{todonotes}

\title{Rethinking LoRA Initialization for Robust Asymmetric Learning Rates}

\author{%
  Disen Liao \\
  University of Waterloo \\
  Vector Institute \\
  \texttt{d7liao@uwaterloo.ca}
  \And
  Yaoliang Yu \\
  University of Waterloo \\
  Vector Institute \\
  \texttt{yaoliang.yu@uwaterloo.ca}
}
\begin{document}

\maketitle

\begin{abstract}
    Low-Rank Adaptation (LoRA) is often highly sensitive to learning-rate (LR) choices. Recent empirical studies suggest that, once LRs are carefully tuned, vanilla LoRA can remain competitive with many proposed LoRA variants. This shifts the central practical question from designing ever more variants to reducing the cost and brittleness of LR selection. LoRA+ addresses this issue with asymmetric LRs, using a larger LR for the up-projection matrix $B$ than for the down-projection matrix $A$. However, the ratio $\eta_B / \eta_A$ itself becomes an additional hyperparameter: large ratios can improve adaptation, but under the standard LoRA initialization they can also cause unstable training and ratio-dependent performance collapse. We identify the source of this brittleness through a signal--noise decomposition of LoRA dynamics. Under standard \initA, the update contains an initialization-induced noise term whose magnitude can be affected by the LR for $B$, $\eta_B$, the initialization scale $\beta$, and the adapter rank $r$. In practical LLM fine-tuning, this non-vanishing noise can limit the performance gains expected from LoRA+ style asymmetric LRs. To suppress this effect, we propose \initAA, a simple initialization that scales the variance of $A$ as $\sigma_A^2 \propto n^{-2}$ while keeping $B_0=0$. Under matched rank and training time for both initializations, this reduces the initialization-noise energy by an additional factor of $1/n$, stabilizing asymmetric LoRA training while preserving zero initial adapter output. Empirically, \initAA substantially widens the stable LR region across initialization scales, adapter ranks, and LoRA update multipliers. Across tasks and models, \initAA reduces best-ratio drift, prevents the high-ratio collapse observed with standard \initA, and makes the width-based guideline $\eta_B \approx n \eta_A$ a useful starting point for finite pretrained models.
\end{abstract}

\section{Introduction}
Low-Rank Adaptation (LoRA) \citep{hulora} is a standard method for parameter-efficient fine-tuning of large language models (LLMs) \citep{houlsby2019parameter,li2021prefix,lester2021power}. Given a width-$n$ LoRA layer with pretrained weight $W\in\Rbb^{n\times n}$ and input $\underline{Z} \in \Rbb^{n}$, LoRA uses
$\bar{Z}=W\underline{Z}+\frac{\alpha}{r}BA\underline{Z}$,
where $A\in\Rbb^{r\times n}$ and $B\in\Rbb^{n\times r}$ are trainable with learning rates (LRs) $\eta_A$ and $\eta_B$. Here $r \ll n$ is the adapter rank and $\alpha/r$ is the LoRA update multiplier. We use $\beta$ to denote the initialization-magnitude hyperparameter: standard LoRA initialization, denoted \initA, initializes $B_0=0$ and initializes $A_0$ randomly with mean zero and standard deviation $\beta n^{-1/2}$, so the adapter preserves the pretrained model output at step zero.

\begin{figure}[!t]
    \centering
    \includegraphics[width=1\linewidth]{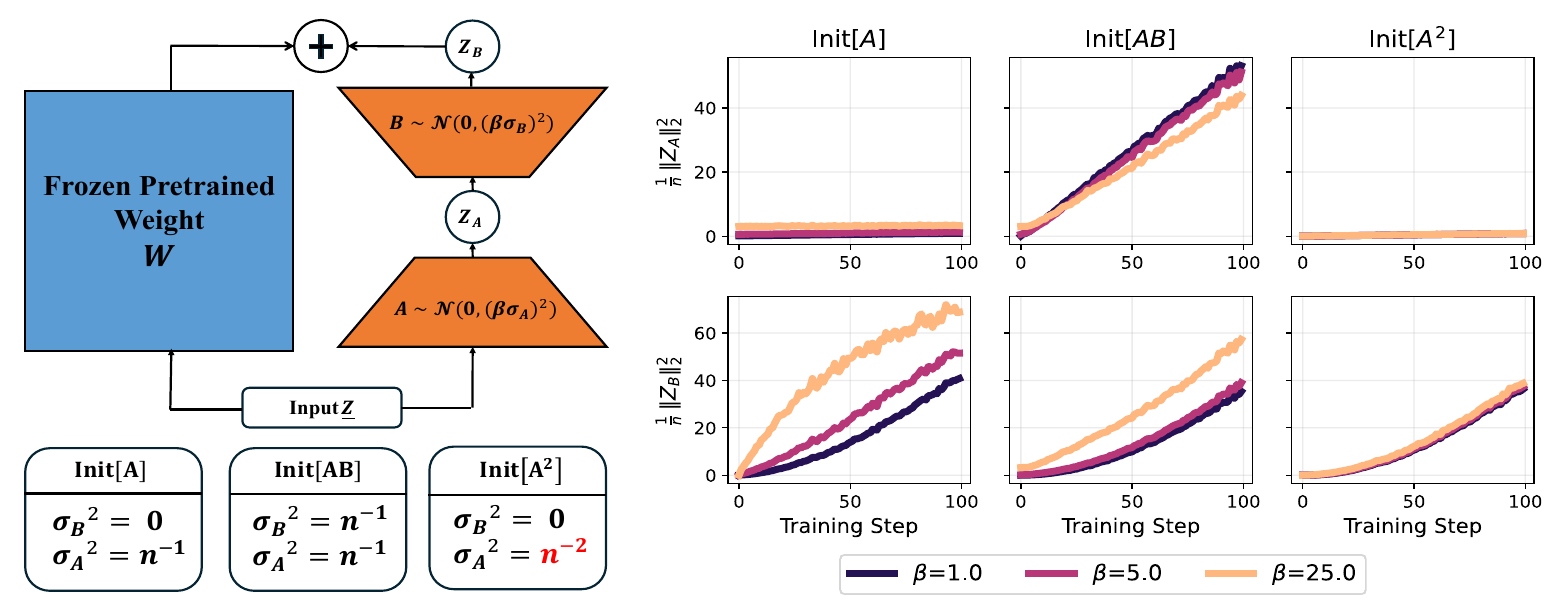}
    \caption{Overview of LoRA initialization and feature-norm stability. \textbf{Left}: LoRA branch notation and the initialization scalings compared in this work, including standard \initA, \initAB, and the proposed \initAA. \textbf{Right}: feature-norm trajectories at width $n=4096$ for different initialization scales $\beta$. \initAA keeps the LoRA features nearly invariant to $\beta$, while the alternative scalings exhibit stronger initialization-dependent growth.}
    \label{fig:intro_overview}
    \label{fig:norms}
    \vspace{-1.5em}
\end{figure}

Despite its simplicity, LoRA is sensitive to hyperparameters (HPs). The rank $r$ controls the expressive capacity of the adapter \citep{zengexpressive}, but its effect is
inseparable from the update multiplier $\lambda_r=\alpha/r$. Kalajdzievski
~\citep{kalajdzievski2023rank} shows that the conventional choice $\alpha/r$
is not rank-stabilized: it can shrink adapter activations and gradients as
$r$ grows, slowing learning at high ranks and motivating the alternative
$\alpha/\sqrt r$. LRs introduce a further source of brittleness. Recent empirical work shows that, after sufficiently broad LR sweeps, vanilla LoRA can remain competitive with many proposed LoRA variants \citep{lee2026learning}. This shifts the practical question from only designing new adapters to understanding LoRA HP selection.

LoRA+ \citep{hayou2024lora+} gives a principled route toward understanding LR selection by assigning different LRs to the two LoRA matrices. Its infinite-width feature learning analysis suggests that efficient LoRA feature
updates require an asymmetric regime with $\eta_B \gg \eta_A$ and, under idealized scaling assumptions, a ratio $\eta_B/\eta_A \approx n$. We write this ratio as $\rho=\eta_B/\eta_A$. The asymmetric regime is attractive because $B$ directly controls the LoRA branch output, so increasing $\eta_B$ can accelerate adaptation. In finite pretrained LLMs, however, $\rho$ often becomes another hyperparameter to tune: large ratios can improve performance, but under \initA they can also trigger instability. We address the gap between the theoretical LR guidance and practical LLM fine-tuning in this paper.

We show that the fragility of asymmetric LRs comes from an initialization-dependent noise term that survives even though the LoRA branch has zero output at initialization. Although $B_0=0$, the random matrix $A_0$ affects the branch as soon as $B$ moves away from zero. A signal--noise decomposition of the LoRA feature update reveals two components: a task-driven signal from optimizer updates and a noise term from the interaction between the learned $B$ and the random initial $A$. Both terms scale with $\eta_B$, so increasing $\eta_B$ amplifies useful adaptation and initialization noise at the same time. In the small-LR regime common in pretrained LLM fine-tuning, the signal can remain controlled while the \initA noise contribution does not vanish with width, making large $\rho$, large initialization scale $\beta$, and large update multiplier $\alpha/r$ especially brittle.

We propose \initAA, a minimal initialization change that suppresses this noise at its source.\footnote{Our code is available at \url{https://github.com/liaodisen/LoRA-initA2}.} \initAA keeps the standard zero initialization for $B$, but initializes $A$ with one additional factor of width scaling: instead of $\sigma_A^2=\Theta(n^{-1})$, it uses $\sigma_A^2=\Theta(n^{-2})$. This preserves the LoRA architecture, training objective, and zero initial adapter output. The purpose is not to make the adapter inactive, but to prevent the random initial projection from being amplified by $\eta_B$, $\alpha/r$, or $\beta$ before a task-driven signal has formed. For comparable rank and time scaling of the optimizer updates, this reduces the initialization-noise energy by a factor of $1/n$. We do not claim that $\rho=n$ is always the finite-model optimum; instead, \initAA reduces the need for exhaustive two-dimensional LR search by making the high-ratio regime substantially more stable.

We validate this claim through LR sweeps and fine-tuning experiments. On a controlled toy problem, \initAA improves hyperparameter transfer across widths and keeps internal and output LoRA feature norms stable as $\beta$ grows. On encoder-only RoBERTa fine-tuning and decoder-only arithmetic and commonsense benchmarks, \initAA preserves the benefit of $\eta_B>\eta_A$ while widening the stable LR region and reducing best-ratio drift. These results position \initAA as a lightweight initialization rule for making LoRA+-style asymmetric LR tuning practical in finite pretrained models.

Our contributions are threefold:
\vspace{-0.4em}
{\setlength{\leftmargini}{2em}

\begin{itemize}

\item We identify an LR-ratio failure mode in LoRA: asymmetric LRs with $\eta_B>\eta_A$ can improve adaptation, but under standard \initA the ratio $\rho$ becomes brittle because $\eta_B$ amplifies both the useful signal and an initialization-dependent noise term.

\item We introduce \initAA, a simple LoRA initialization rule that sets $\sigma_A^2=\Theta(n^{-2})$ while keeping $B_0=0$. It preserves the feature-learning property of standard \initA under the same asymmetric LR scaling, while suppressing the initialization-noise term without changing the LoRA architecture, objective, zero-output initialization, or training cost.

\item We show theoretically and empirically that \initAA widens the stable LR region, improves robustness to large initialization scales and LoRA update multipliers, and makes asymmetric LR selection more predictable across models and tasks.

\end{itemize}

}

\section{Background}
\subsection{LoRA Parameterization and Features}

Following \citet{hayou2024lora+}, we consider a general $L$-layer neural network of the following form:
\begin{equation}
\label{eq:lora_formulation}
    \begin{cases}
        Y_{0} = W_{\text{in}} X, \\
        Y_\ell (X) = \Fcal_\ell(W_\ell, Y_{\ell - 1}(X)), \ell \in [L], \\
        Y_{\text{out}}(X) = W_{\text{out}} Y_L (X),
    \end{cases}
\end{equation}
where $X \in \Rbb^{d_{\text{in}}}$ denotes the input vector, $n$ denotes the network width, and $\Fcal$ denotes a nonlinear mapping function. $W_\ell \in \Rbb^{n \times n}$ is the weight matrix of the $\ell$-th hidden layer. $W_{\text{in}} \in \Rbb^{n \times d_{\text{in}}}$ and $W_{\text{out}} \in \Rbb^{d_{\text{out}} \times n}$ are the input and output layers, respectively. To fine-tune this model, LoRA is applied to a subset of $\{W_{\text{out}}, W_1, \dots, W_L, W_{\text{in}}\}$. For clarity, the analysis below treats square width-$n$ layers; the rectangular implementation is specified in \cref{app:init_implementation}. LoRA is defined as:

\begin{definition}[LoRA]
\label{def:lora}
For a width-$n$ LoRA layer with frozen pretrained weight $W \in \Rbb^{n \times n}$, we seek an update $W \leftarrow W + \Delta W$ for a specific downstream task. LoRA parameterizes the update as $\Delta W = \frac{\alpha}{r} B A$. Here, only $B \in \Rbb^{n \times r}$ and $A \in \Rbb^{r \times n}$ are trainable. The rank satisfies $r \ll n$, and $\alpha/r$ is the LoRA update multiplier.
\end{definition}

For a given LoRA layer, let $\underline{Z}$ be its input and let $\bar{Z}$ be its output. By \cref{def:lora}, $\bar{Z} = W \underline{Z} + \frac{\alpha}{r} B A \underline{Z}$. Since $W$ is frozen during fine-tuning, we focus on the scaled LoRA branch update, $\frac{\alpha}{r} B A \underline{Z}$. Following \citet{hayou2024lora+}, we define the LoRA features:

\begin{definition}[LoRA Features]
\label{def:lora_feature}
Given a LoRA layer with input $\underline{Z}$ and matrices $A, B$, we define LoRA features as $Z_A = A \underline{Z}$ and $Z_B = \frac{\alpha}{r} B Z_A = \frac{\alpha}{r} B A \underline{Z}$. The superscript $t$ (e.g., $Z_A^t, Z_B^t$) denotes the feature value at the $t$-th step of fine-tuning.
\end{definition}

\subsection{Prior Work on LoRA with Scaling Theory}

The theoretical analysis of deep neural networks is often impeded by nonlinearities and high-dimensional interactions. To make the behavior tractable, prior work often studies these models in infinite-width or infinite-depth limits, leveraging asymptotic properties governed by the Central Limit Theorem (CLT) and the Law of Large Numbers (LLN), reviewed in \cref{app:two_scaling_laws}. A critical objective within this framework is to identify scaling laws that avoid the ``lazy'' kernel regime, thereby preserving the model's ability to perform nontrivial feature learning \citep{yang2021tensor} as dimensions grow. This scaling-limit perspective has facilitated the derivation of feature-preserving HP transfer rules for Transformers \citep{yang2022tensor, dey2025don} and Residual Networks \citep{bordelon2023depthwise, yang2023feature}, as well as the analysis of low-precision training stability \citep{blakeu}. More recently, these techniques have been adapted to the analysis of LoRA, offering insights into initialization schemes and LR scalings that support feature adaptation \citep{hayou2024impact, hayou2024lora+, libeyond}. We discuss more related work in \cref{app:related_work}.

\paragraph{LoRA Feature Learning.} To derive optimal scaling rules, \citet{hayou2024lora+} analyze the fine-tuning process through a \emph{coordinate-wise} feature learning perspective. When $\alpha/r=1$, the update $\Delta Z_B^t:=Z_B^{t+1}-Z_B^t$ decomposes into three interaction terms, where $\Delta Z_A^t:=Z_A^{t+1}-Z_A^t$:
\begin{equation}
\Delta Z_B^t = \underbrace{B_t \Delta Z_A ^t}_{\delta_1^t} + \underbrace{\Delta B_t Z_A ^t}_{\delta_2^t} + \underbrace{\Delta B_t \Delta Z_A^t}_{\delta_3^t}.
\end{equation}
\begin{condition}
\label{cond:hayou_criteria}
\citet{hayou2024lora+} propose two conditions on this update:

{\setlength{\leftmargini}{2em}
\begin{enumerate}
    \item \textbf{Stability:} for all $t > 1$, $Z_B^t$ has $\Theta(1)$ coordinate size as the width $n \rightarrow \infty$.
    \item \textbf{Efficiency:} for all $t > 1$, both $\delta_1^t$ and $\delta_2^t$ have $\Theta(1)$ coordinate size as the width $n \rightarrow \infty$.
\end{enumerate}
}
\end{condition}
To systematically analyze these constraints, \citet{hayou2024lora+} employ the $\gamma$-calculus, which quantifies the scaling of a variable $v$: if $v = \Theta(n^\lambda)$, then $\gamma[v] = \lambda$. For a matrix or vector $X$, $\gamma[X] = \lambda$ means that each coordinate scales as $\Theta(n^\lambda)$ (see \cref{app:gamma} for details). By applying this framework to \cref{cond:hayou_criteria}, \citet{hayou2024lora+} demonstrate that standard symmetric LRs cannot satisfy both stability and efficiency simultaneously. Consequently, they advocate for asymmetric configurations, specifically \initA ($B_0 = 0, \sigma_A^2 = \Theta(1/n)$) paired with $\eta_A = \Theta(1/n)$ and $\eta_B = \Theta(1)$. Using a similar $\gamma$-calculus analysis under the equal-LR setting (i.e., $\eta_A = \eta_B = \eta^*$), \citet{hayou2024impact} show that \initA is better than \initB because it can tolerate a larger LR. Based on related analysis, \citet{libeyond} challenge the necessity of zero initialization. Introducing the concept of \textit{robustness}, they investigate whether initialization can preserve stability even when HPs deviate from optimal $\gamma$-scales. Their findings suggest that fine-tuning is more stable under $\eta_A = \eta_B$ when both matrices are initialized nonzero (i.e., \initAB).

\section{Analysis}
Previous work analyzes LoRA through a feature learning perspective, but achieving feature learning can be challenging in practice. Although \citet{hayou2024lora+} propose the scaling $\eta_B \gg \eta_A$, the best ratio for a finite model still requires costly tuning. Furthermore, practical deployments may use a large LoRA update multiplier $\alpha/r$ to encourage rapid adaptation, even when LRs are small. In this section, we show that standard \initA then has initialization-induced noise that can accumulate over training, whereas our proposed \initAA suppresses this source of instability.

\subsection{Update Decomposition}

We consider the LoRA setup defined in \cref{eq:lora_formulation}. We explicitly parameterize the initialization magnitude using a scalar $\beta$. Let $\tilde A_0$ be a base initialization matrix with entries i.i.d. $\Ncal(0, \sigma_A^2)$. The trainable matrices are initialized as $B_0 = 0$ and $A_0 = \beta \tilde A_0$.

\paragraph{Normalized optimizer model.}
We separate the scalar LR scale from the matrix-valued optimizer direction. Specifically, let $U_A^t \in \Rbb^{r \times n}$ and $U_B^t \in \Rbb^{n \times r}$ denote the directions produced by the optimizer at step $t$, after any coordinate-wise normalization or adaptive preconditioning but before multiplication by the LoRA-specific LRs:
\[
A_{t+1} = A_t - \eta_A U_A^t, \qquad
B_{t+1} = B_t - \eta_B U_B^t,
\]
where the entries of $U_A^t$ and $U_B^t$ have coordinate size $O(1)$ uniformly in the width $n$. This formulation abstracts away the optimizer-specific normalization while keeping the LoRA-specific LR scale explicit. The algebraic identities derived below (\cref{lemma:feature_decomposition}) hold for any realized sequence of such update directions; optimizer-specific assumptions are needed only when estimating their scaling with width. SignSGD is the special case $U=\operatorname{sign}(\nabla \Lcal)$, while Adam and AdamW also fit this normalized-direction model under the non-degeneracy conditions given in \cref{app:adam_extension}.
\paragraph{Update-Input alignment.} 
We make an assumption on the alignment between block $A$ update and input $\underline{Z}$ following the LoRA+ analysis \citep{hayou2024lora+}. To see this, 
for a single token trained with loss $\Lcal$: 
\begin{equation*}
\nabla_A \Lcal_t=\frac{\alpha}{r}B_t^\top d\bar Z_t\, \underline{Z}^\top,
\end{equation*}
so each row of the $A$-gradient is a scalar multiplication of the input. We therefore assume optimizer normalization preserves this input coupling:
\[
    |(U_A^t)_{i,:}\underline Z|=\Theta(n), \qquad i\in[r].
\]
Sufficient conditions for SignSGD, Adam, and AdamW are given in \cref{app:adam_extension}. However, this condition is not generally held in the minibatch setting,  the measured minibatch alignment in \cref{app:empirical_width_alignment,fig:width_alignment} varies by module and training stage, so this condition is not asserted generically for minibatch fine-tuning.

Using these update equations, we can unroll $A_t$ and $B_t$ and separate the contribution of the random initialization $A_0$ (noise) from that of the accumulated optimizer updates in both LoRA factors (signal). The following lemma gives exact decompositions of both the LoRA feature and its one-step update. The alignment condition above is not needed for the decomposition itself and enters only in the subsequent scaling analysis. 

\begin{lemma}[Signal--noise decompositions of LoRA features]
\label{lemma:feature_decomposition}
For any step $t \ge 1$, both the one-step LoRA feature update and the feature itself decompose into initialization-noise and optimizer-driven signal terms:
\begin{align*}
    \Delta Z_B^t
    &= \underbrace{- \frac{\alpha\beta}{r} \eta_B U_B^t \tilde A_0 \underline{Z}}_{\Delta T_{\text{noise}}^t}
    + \underbrace{\frac{\alpha}{r} \eta_A \eta_B S_t \underline{Z}}_{\Delta T_{\text{signal}}^t},\\
    Z_B^t
    &= \underbrace{- \frac{\alpha \beta}{r} \eta_B \Big(\sum_{j=0}^{t-1} U_B^j\Big) \tilde A_0 \underline{Z}}_{T_{\text{noise}}^t}
    + \underbrace{\frac{\alpha}{r} \eta_A \eta_B \Big(\sum_{j=0}^{t-1} U_B^j\Big)\Big(\sum_{i=0}^{t-1} U_A^i\Big) \underline{Z}}_{T_{\text{signal}}^t},
\end{align*}
where
$S_t := (\sum_{j=0}^{t-1} U_B^j) U_A^t + U_B^t (\sum_{i=0}^{t-1} U_A^i) + U_B^t U_A^t$.
\end{lemma}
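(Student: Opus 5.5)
The plan is to unroll the two update recursions in closed form, substitute them into the definition of $Z_B^t$ from \cref{def:lora_feature}, and expand. The input $\underline{Z}$ is held fixed and the realized directions $U_A^j, U_B^j$ are treated as arbitrary matrices. The identity is purely algebraic, so no scaling or alignment assumptions are needed.

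\textbf{Step 1 (unrolling).} Set $P_t := \sum_{j=0}^{t-1} U_B^j$ and $Q_t := \sum_{i=0}^{t-1} U_A^i$, with $P_0 = Q_0 = 0$. Induction on $t$ using $A_{t+1} = A_t - \eta_A U_A^t$, $B_{t+1} = B_t - \eta_B U_B^t$, $B_0 = 0$ and $A_0 = \beta \tilde A_0$ gives
\[
B_t = -\eta_B P_t, \qquad A_t = \beta \tilde A_0 - \eta_A Q_t .
\]

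\textbf{Step 2 (feature decomposition).} Substituting into $Z_B^t = \frac{\alpha}{r} B_t A_t \underline{Z}$ and distributing the product gives
\[
Z_B^t = \frac{\alpha}{r}(-\eta_B P_t)(\beta \tilde A_0 - \eta_A Q_t)\underline{Z} = -\frac{\alpha\beta}{r}\eta_B P_t \tilde A_0 \underline{Z} + \frac{\alpha}{r}\eta_A\eta_B P_t Q_t \underline{Z}.
\]
The two minus signs cancel in the cross term, which yields exactly $T^t_{\text{noise}} + T^t_{\text{signal}}$.

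\textbf{Step 3 (one-step update).} Write $\Delta Z_B^t = Z_B^{t+1} - Z_B^t$ and use the Step 2 formula at $t+1$ and at $t$, together with $P_{t+1} = P_t + U_B^t$ and $Q_{t+1} = Q_t + U_A^t$.
\begin{itemize}
\item The noise parts differ by $-\frac{\alpha\beta}{r}\eta_B (P_{t+1} - P_t)\tilde A_0 \underline{Z} = -\frac{\alpha\beta}{r}\eta_B U_B^t \tilde A_0 \underline{Z}$.
\item For the signal parts, expand $(P_t + U_B^t)(Q_t + U_A^t) - P_t Q_t = P_t U_A^t + U_B^t Q_t + U_B^t U_A^t = S_t$. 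This gives $\frac{\alpha}{r}\eta_A\eta_B S_t \underline{Z}$.
\end{itemize}

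The only place requiring care is bookkeeping. First, the sign convention of the updates must produce the stated signs; the two negatives cancel in the signal term but not in the noise term. Second, matrix products must be kept in order, because $P_t$ and $Q_t$ are non-commuting rectangular matrices of sizes $n \times r$ and $r \times n$. Third, $S_t$ must be matched to the $\delta_1^t, \delta_2^t, \delta_3^t$ structure. As a consistency check, $\delta_1^t = B_t \Delta Z_A^t$ should reproduce $\frac{\alpha}{r}\eta_A\eta_B P_t U_A^t \underline{Z}$ (with the multiplier included), since $\Delta Z_A^t = -\eta_A U_A^t \underline{Z}$. Likewise, $\delta_2^t + \delta_3^t = \Delta B_t A_{t+1}\underline{Z}$ should split into the noise term plus the remaining two signal terms. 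The statement is for $t \ge 1$, but the identities also hold trivially at $t = 0$ with empty sums.
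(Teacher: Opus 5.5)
Your proof is correct and takes essentially the paper's approach: unroll $A_t$ and $B_t$ in closed form using $B_0=0$ and $A_0=\beta\tilde A_0$, then expand the product. The only difference is ordering. You obtain $\Delta Z_B^t$ by differencing the closed-form feature at $t+1$ and $t$, whereas the paper expands $B_{t+1}A_{t+1}$ into the $\delta_1^t,\delta_2^t,\delta_3^t$ terms and substitutes into each, which is the same algebra.
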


Lemma~\ref{lemma:feature_decomposition} separates the LoRA feature $Z_B^t$ into an initialization-dependent component $T_{\text{noise}}^t$ and an optimizer-driven component $T_{\text{signal}}^t$. To compare their magnitudes across model widths and training time, we consider the width-normalized squared Euclidean norm $E_{\text{total}}^t := \frac{1}{n} \| Z_B^t \|_2^2$, which is the mean squared magnitude per output coordinate. We refer to this quantity as feature energy. It can be decomposed into three terms:
\begin{equation*}
    E_{\text{total}}^t=E_N^t+E_S^t+E_X^t.
\end{equation*}
$E_N^t := \frac{1}{n} \|T_{\text{noise}}^t\|_2^2$ is the noise energy, quantifying the contribution carried by the initial random projection $\tilde{A}_0 \underline{Z}$. It isolates the component directly targeted by reducing the initialization variance. The signal energy, $E_S^t:=\frac{1}{n}\|T_{\text{signal}}^t\|_2^2$, measures the contribution from the interaction of accumulated $A$- and $B$-updates. Comparing it with $E_N^t$ reveals whether the optimizer-driven signal dominates the initialization noise. The cross term, $E_X^t:=\frac{2}{n}\langle T_{\text{noise}}^t,T_{\text{signal}}^t\rangle$, accounts for reinforcement or cancellation between the two components. Unlike the component energies, it is signed and cannot generally be discarded, since the optimizer directions depend on the initialization.

We can give an upper and a lower bound for the feature energy: by Cauchy--Schwarz, $|E_X^t|\le 2\sqrt{E_N^tE_S^t}$, giving that
\[
(\sqrt{E_N^t}-\sqrt{E_S^t})^2\le E_{\text{total}}^t\le(\sqrt{E_N^t}+\sqrt{E_S^t})^2.
\]
These bounds show that when $E_N^t/E_S^t\to0$, both the noise and cross term become negligible relative to the signal, so $E_{\text{total}}^t/E_S^t\to1$. \Cref{fig:llama_feature_decomposition} illustrates this signal dominance at finite width: under \initAA, the noise term is dominated by the signal in both Llama models, and the total feature norm closely tracks the signal norm. Measurements of those quantities are reported in \cref{app:empirical_energy_decomposition}.

\begin{figure}[t]
    \centering
    \includegraphics[width=\linewidth]{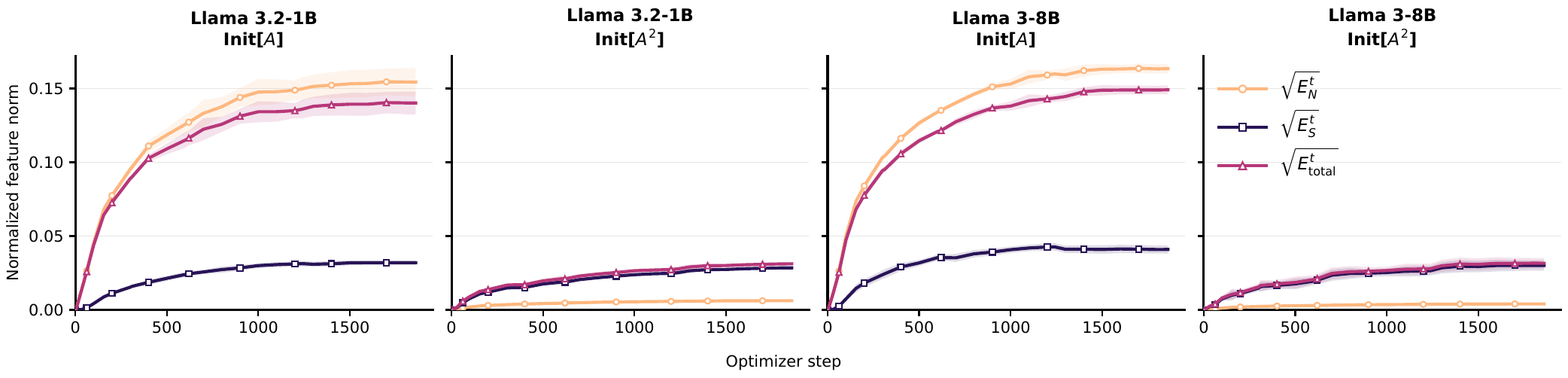}
    \caption{LoRA feature decomposition during fine-tuning of Llama 3.2-1B and Llama 3-8B. Curves show the RMS magnitudes $\sqrt{E_N^t}$ (noise), $\sqrt{E_S^t}$ (signal), and $\sqrt{E_{\text{total}}^t}$ (total). Standard \initA is noise-dominated, whereas under \initAA the noise is subdominant and the total norm closely follows the signal norm.}
    \vspace{-0.5cm}
    \label{fig:llama_feature_decomposition}
\end{figure}

\subsection{Robustness Analysis}

We aim to keep the LoRA feature $Z_B$ bounded as training progresses. To define this form of robustness, we first choose a scaling for the training horizon $T(n)$ as a function of model width $n$.

According to the neural scaling laws derived by \citet{kaplan2020scaling}, optimal training steps $S$ scale very weakly with model size. Specifically, they find $S \propto C_{\min}^{0.03}$ and $N_{\text{param}} \propto C_{\min}^{0.73}$, for some fixed efficient computation budget $C_{\min}$, which implies $S \propto N_{\text{param}}^{0.04}$. Since the parameter count $N_{\text{param}}$ scales quadratically with width ($N_{\text{param}} \propto n^2$), the training steps scale as $S \propto n^{0.08}$. Even under less optimal, fixed-compute regimes, training horizons grow sub-linearly. This motivates studying sub-$\sqrt{n}$ horizons, but we do not claim that the pretraining scaling will directly determine the fine-tuning schedules. Thus, we assume the training horizon satisfies $T(n) = o(\sqrt{n})$.

\begin{definition}[Robustness of LoRA Dynamics] Let $T(n)$ be a time horizon scaling as $T(n) = o(\sqrt{n})$. The LoRA dynamics are robust up to horizon $T(n)$ if:
    \begin{equation*}
         \sup_{0 \le t \le T(n)} \frac{1}{n}\| Z_B^t \|_2^2 = O(1), \quad \text{as } n \to \infty.
    \end{equation*}
\end{definition}
This condition is similar to the stability condition in \cref{cond:hayou_criteria}.
At a given step, a large $\beta$ can make the initialization-dependent component dominate under matched contraction scales.

\begin{theorem}[Component Scaling (Informal)]
\label{theorem:lora_feature_beta}
Under the initialization and normalized updates above, fix $\underline Z$ with $\|\underline Z\|_2^2=\Theta(n)$ and write $C_t:=\sum_{s<t}U_B^s$, $u:=\tilde A_0\underline Z$, $v_t:=(\sum_{s<t}U_A^s)\underline Z$, and $\rho_d(x):=\|x\|_2/\sqrt d$. For the analyzed steps $t\ge2$, assume $\rho_r(u)=\Theta(\sigma_A\sqrt n)$, $\rho_r(v_t)=\Theta(nt)$, $\rho_n(C_tu)=\Theta(tr^{\gamma_N}\rho_r(u))$, and $\rho_n(C_tv_t)=\Theta(tr^{\gamma_S}\rho_r(v_t))$, with fixed exponents $\gamma_N,\gamma_S$ and constants uniform in $n,r,t$. Set $m_N:=(\alpha/r)r^{\gamma_N}$ and $m_S:=(\alpha/r)r^{\gamma_S}$. Then
\[
E_N^t=\Theta(\eta_B^2\beta^2m_N^2\sigma_A^2nt^2),\qquad
E_S^t=\Theta((\eta_A\eta_Bm_Snt^2)^2).
\]
The full statement and proof are in \cref{app:derivation_of_theorem}.
\end{theorem}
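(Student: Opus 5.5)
The plan is to start from the exact feature decomposition of \cref{lemma:feature_decomposition}. Write it in the notation of the theorem as
\[
T_{\text{noise}}^t=-\tfrac{\alpha\beta}{r}\eta_B\,C_t u,\qquad T_{\text{signal}}^t=\tfrac{\alpha}{r}\eta_A\eta_B\,C_t v_t .
\]
The sum $\sum_{j=0}^{t-1}U_B^j$ is exactly $C_t$. Likewise, $(\sum_{i<t}U_A^i)\underline Z=v_t$ and $\tilde A_0\underline Z=u$. Since $E_N^t=\rho_n(T_{\text{noise}}^t)^2$ and $E_S^t=\rho_n(T_{\text{signal}}^t)^2$, both energies reduce to the width-normalized RMS of $C_tu$ and $C_tv_t$, multiplied by deterministic scalar prefactors. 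Scalars pull out of $\rho_n$ exactly, so no probabilistic argument is needed at this stage. Every stochastic ingredient is already contained in the stated $\Theta$-assumptions.

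\textbf{Noise term.} I would chain the assumptions:
\[
\rho_n(C_tu)=\Theta(t r^{\gamma_N}\rho_r(u))=\Theta(t r^{\gamma_N}\sigma_A\sqrt n).
\]
Multiplying by $\frac{\alpha\beta}{r}\eta_B$ and absorbing $(\alpha/r)r^{\gamma_N}=m_N$ gives
\[
\rho_n(T_{\text{noise}}^t)=\Theta(\eta_B\beta m_N\sigma_A\sqrt n\,t).
\]
Squaring yields $E_N^t=\Theta(\eta_B^2\beta^2m_N^2\sigma_A^2nt^2)$.

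\textbf{Signal term.} Similarly,
\[
\rho_n(C_tv_t)=\Theta(t r^{\gamma_S}\cdot nt)=\Theta(r^{\gamma_S}nt^2).
\]
After multiplying by $\frac{\alpha}{r}\eta_A\eta_B$ and squaring, this gives $E_S^t=\Theta((\eta_A\eta_B m_S n t^2)^2)$. I would also note that $\Theta$ is closed under products of positive quantities and under squaring. Because the constants are assumed uniform in $n,r,t$, the resulting constants are uniform as well.

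\textbf{Main obstacle.} The algebra above is not the hard part. The hard part is justifying that the assumed scalings are the natural ones, which the full version in \cref{app:derivation_of_theorem} presumably addresses.

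For $\rho_r(u)=\Theta(\sigma_A\sqrt n)$, each coordinate of $u$ is $\mathcal N(0,\sigma_A^2\|\underline Z\|^2)$ with $\|\underline Z\|^2=\Theta(n)$. A $\chi^2_r$ concentration bound, or simply computing the expectation for fixed $r$, gives the claim with high probability.

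For $\rho_r(v_t)=\Theta(nt)$, I would invoke the update--input alignment $|(U_A^s)_{i,:}\underline Z|=\Theta(n)$. The per-step contributions must also not cancel across $s$, for example by assuming sign-consistency over the horizon. This non-cancellation is the delicate point, and I would state it explicitly as part of the hypothesis rather than derive it.

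The $C_t$ exponents $\gamma_N,\gamma_S$ encode how a sum of $t$ $O(1)$-entry $n\times r$ matrices acts on an $r$-vector. Coherent accumulation gives the factor $t$. Summing $r$ terms gives $r^{1}$ if they are aligned and $r^{1/2}$ if they are incoherent. I would leave these exponents as parameters, exactly as the statement does.

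Finally, I would remark that because the optimizer directions depend on $A_0$, $C_t$ and $u$ are not independent. This dependence is precisely why the theorem takes the norms of $C_tu$ as an assumption instead of deriving them from independence.
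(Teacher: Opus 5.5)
Your proof is correct and follows the paper's own argument. Both unroll the updates to get $T_{\text{noise}}^t=-(\alpha/r)\beta\eta_B C_tu$ and $T_{\text{signal}}^t=(\alpha/r)\eta_A\eta_B C_tv_t$, pull the positive scalars out of $\rho_n$, chain the assumed projection and contraction scales, and square, leaving temporal non-cancellation and the $A_0$-dependence of the optimizer directions inside the hypotheses as the paper does. One small correction to your side remark: at fixed $r$, $\rho_r(u)^2/(\sigma_A^2\|\underline Z\|_2^2)\sim\chi^2_r/r$ does not concentrate, so the Gaussian computation gives the lower bound only with probability $1-\delta$ for a $\delta$-dependent constant, not with high probability, which is why the paper treats it as motivation rather than a realized bound.
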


\paragraph{Interpretation.}
The exponents $\gamma_N, \gamma_S$ describe realized rank contraction for $C_t u$ and $C_t v_t$, respectively: $\gamma=1$ is coherent, LLN-like scaling and $\gamma=1/2$ is CLT-like scaling, without asserting either probabilistic limit law. The noise and signal need not share an exponent. The explicit factors of $t$ are separate temporal-accumulation assumptions, not consequences of rank coherence or per-step alignment. The probe measurements in \cref{fig:rank_scaling_coherence} motivate model- and stage-dependent exponents but do not verify these accumulated contractions. At fixed rank and $\alpha/r=\Theta(1)$, both $m_N,m_S=\Theta(1)$, preserving the width comparison. When rank varies, the energy ratio also contains $r^{2(\gamma_N-\gamma_S)}$. These interactions motivate joint tuning of rank, multiplier, and LRs \citep{chen2026learningrate} and are consistent with \cref{fig:loss_plot}.

For fixed rank, $\alpha/r=\Theta(1)$, and finite time, practitioners often take $\eta_A\eta_B\ll n^{-1}$ to keep the signal small under the stated assumptions. Standard \initA nevertheless leaves a persistent initialization-dependent component because $\sigma_A^2=\Theta(n^{-1})$:
$$E_{N,\initA}^t = \Theta(\eta_B^2 \beta^2 m_N^2 t^2).$$
For fixed rank and fixed $\eta_B,\beta,t$, this noise does not vanish with width. At fixed $\alpha/r$, its rank factor is $r^{2\gamma_N}$: $r^2$ for LLN-like contraction and $r$ for CLT-like contraction. Because $\eta_B$ multiplies both components, the LR for block $B$ controls their common scale. We examine this by freezing block $A$ and sweeping only $\eta_B$ in \cref{app:block_b_important,fig:freeze_A}.

Our proposed \initAA strategy sets $\sigma_A^2 = \Theta(n^{-2})$, giving
$$E_{N,\initAA}^t = \Theta\left(\frac{\eta_B^2 \beta^2 m_N^2 t^2}{n}\right).$$
Under matched projection, contraction, and temporal-accumulation scales, this gives an additional factor $1/n$ in noise energy while preserving the signal's scaling order; it does not assert identical optimizer trajectories. Nor does it imply a $1/n$ reduction in total energy: when $E_N^t/E_S^t\to0$, the total becomes signal-dominated. For $\eta_B,\beta=\Theta(1)$, $m_NT(n)=o(\sqrt n)$ suffices for noise to vanish uniformly on the analyzed window, reducing to $T(n)=o(\sqrt n)$ at fixed rank. Robustness additionally requires $\eta_A\eta_Bm_SnT(n)^2=O(1)$ to control the signal. Our finite-width experiments test dominance and stability, not the asymptotic $1/n$ rate itself.
Under the finite-time assumptions below, the extra factor $1/n$ makes the initialization noise subdominant to the signal. The same conditional scaling calculation permits stronger suppression with \texttt{Init[$\text{A}^3$]} and \texttt{Init[$\text{A}^4$]} (\ie, $\sigma_A^2=1/n^3$ or $1/n^4$), evaluated in panel~(a) of \cref{fig:llama_additional_subfigures}.

\subsection{\initAA Still Enables Feature Learning}

Having established conditions for suppressing initialization noise and controlling feature energy, we next verify that \initAA still supports nontrivial feature learning, which underlies HP transfer \citep{yang2022tensor} and LoRA rank transfer to full fine-tuning \citep{chen2026learningrate}. A potential concern is that reducing $\sigma_A^2$ might suppress the signal required for adaptation. In the fixed-rank setting below, the accumulated optimizer updates maintain the signal while \initAA suppresses initialization noise.

\begin{theorem}[Finite-Time Stability and \initAA Efficiency (Informal)]
\label{theorem:initAA_feature_learning}
In the fixed-rank setting of \cref{theorem:lora_feature_beta}, let $t\ge2$ be finite,
$\beta,\alpha/r=\Theta(1)$, $\eta_A=\Theta(n^{-1})$, and $\eta_B=\Theta(1)$.
Assume $C_tv_t$, $C_tU_A^t\underline Z$, and $U_B^tv_t$ have
$\Theta(n)$ coordinate magnitudes. Then \initAA makes the
initialization-dependent terms $o(1)$ while preserving $\Theta(1)$
feature updates, and therefore satisfies the stability and efficiency
conditions in \cref{cond:hayou_criteria}. Standard \initA guarantees
bounded feature updates, but a non-degenerate lower bound additionally
requires a non-cancellation condition.
\end{theorem}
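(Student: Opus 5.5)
The plan is to apply Lemma~\ref{lemma:feature_decomposition} directly and check each piece with the $\gamma$-calculus. I would reorganize the one-step update into the three LoRA+ interaction terms $\delta_1^t,\delta_2^t,\delta_3^t$ and split each into noise and signal parts. Since $B_t=-\eta_B C_t$ and $A_t=\beta\tilde A_0-\eta_A\sum_{s<t}U_A^s$, the $\alpha/r=\Theta(1)$ factor is harmless at fixed rank. The terms are then as follows:
\[
\delta_1^t\propto \eta_A\eta_B\, C_tU_A^t\underline Z,\qquad
\delta_2^t\propto -\eta_B U_B^t\bigl(\beta\tilde A_0\underline Z-\eta_A v_t\bigr),\qquad
\delta_3^t\propto \eta_A\eta_B\,U_B^tU_A^t\underline Z .
\]
The noise part lives entirely in $\delta_2^t$, namely $\beta\eta_B U_B^t u$. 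Similarly, in $Z_B^t$ the noise is $T_{\text{noise}}^t=-\beta\eta_B(\alpha/r)C_tu$.

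\textbf{Noise terms.} With $\sigma_A^2=\Theta(n^{-2})$, the assumption $\rho_r(u)=\Theta(\sigma_A\sqrt n)$ gives $u$ coordinates of order $n^{-1/2}$. Because $U_B^t$ and $C_t$ have $O(1)$ entries (the latter $O(t)$ with $t$ finite) and the rank $r$ is fixed, $U_B^tu$ and $C_tu$ both have coordinates $O(n^{-1/2})=o(1)$. This already follows from Theorem~\ref{theorem:lora_feature_beta}, since $E_N^t=\Theta(\eta_B^2\beta^2t^2/n)\to0$. For a coordinate-wise statement rather than an RMS one, I would instead bound directly by $r\max_{ij}|(U_B)_{ij}|\,\max_j|u_j|$.

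\textbf{Signal terms.} Using $\eta_A\eta_B=\Theta(n^{-1})$ and the assumed $\Theta(n)$ coordinate magnitudes of $C_tv_t$, $C_tU_A^t\underline Z$, and $U_B^tv_t$, each of the following is $\Theta(1)$: $\delta_1^t$, the signal part $\eta_A\eta_B U_B^tv_t$ of $\delta_2^t$, and $T^t_{\text{signal}}$. The term $\delta_3^t$ is $O(1)$ because $U_A^t\underline Z$ is $O(n)$ by the alignment assumption. Combining these, $\delta_1^t=\Theta(1)$ and $\delta_2^t=\Theta(1)+o(1)=\Theta(1)$, which is efficiency. Likewise $Z_B^t=\Theta(1)+o(1)$, which is stability. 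In this step the $o(1)$ noise cannot cancel the $\Theta(1)$ signal, which is exactly why \initAA needs no extra condition.

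\textbf{Contrast with \initA.} Under $\sigma_A^2=\Theta(n^{-1})$ the vector $u$ has $\Theta(1)$ coordinates, so the noise parts are $\Theta(1)$. The triangle inequality still gives an $O(1)$ upper bound, so updates remain bounded. A $\Theta(1)$ lower bound, however, needs the sum $\beta U_B^tu-\eta_AU_B^tv_t$ (and $C_t(\beta u-\eta_A v_t)$) to avoid cancellation; I would state this as the explicit non-cancellation hypothesis.

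The main obstacle is making the coordinate-wise $\Theta(\cdot)$ claims rigorous. The assumptions are given as RMS or "coordinate magnitude" statements, and passing from these to the per-coordinate $\gamma$-calculus conditions in \cref{cond:hayou_criteria} requires interpreting "$\Theta(n)$ coordinate magnitudes" as holding uniformly. I would adopt that reading explicitly and note that the RMS version follows from the energy bounds $(\sqrt{E_N}-\sqrt{E_S})^2\le E_{\text{total}}\le(\sqrt{E_N}+\sqrt{E_S})^2$.
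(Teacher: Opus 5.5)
Your proposal is correct and follows essentially the same route as the paper's proof: you express $Z_B^t$, $\delta_1^t$, $\delta_2^t$ through the unrolled decomposition, bound the initialization terms coordinatewise by $O(\|u\|_2)=O(\sigma_A\sqrt n)$ using fixed rank, finite $t$, and coordinate-bounded directions ($o(1)$ under \initAA, $O(1)$ under \initA), and obtain the $\Theta(1)$ signal terms from $\eta_A\eta_B=\Theta(n^{-1})$ and the uniform $\Theta(n)$ coordinate assumptions. One small slip is your claim that the \initA noise parts are $\Theta(1)$: only the $O(1)$ upper bound follows, and that is all you need; the $\delta_3^t$ estimate and the RMS aside via \cref{theorem:lora_feature_beta} are superfluous, since \cref{cond:hayou_criteria} concerns only $Z_B^t,\delta_1^t,\delta_2^t$ and your direct coordinatewise bound already covers $U_B^t u$.
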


The proof is given in \cref{app:derivation_of_theorem}. Under \initAA,
the initialization noise satisfies $E_N^t=\Theta(n^{-1})$, whereas the
learned signal remains $E_S^t=\Theta(1)$. Hence the signal dominates and
$E_{\mathrm{total}}^t/E_S^t\to1$. Under standard \initA, both components
can remain $\Theta(1)$, so the total feature norm is bounded but may become
small through cancellation. Thus \initAA guarantees non-degenerate feature
learning under the stated assumptions, whereas \initA requires an additional
non-cancellation assumption.

\subsection{Internal Stability and Bottleneck Dynamics}

A recently proposed alternative, \initAB \citep{libeyond}, seeks to improve robustness by initializing both matrices with $\Theta(n^{-1/2})$ scaling and employing symmetric LRs $\eta_A = \eta_B = \Theta(n^{-1/2})$. While this approach balances the global update norm, it inadvertently compromises the stability of the low-rank bottleneck: it fails to constrain the internal LoRA features $Z_A$ to $\Theta(1)$, leading to feature explosion in the large-width limit.

For stable feature learning, it is crucial that not only the final output $Z_B$ but also the intermediate representation $Z_A$ evolves at a rate independent of the model width $n$. Consider the update to the internal feature $\Delta Z_A^t$:
\begin{equation*}
    \Delta Z_A^t = (A_{t+1} - A_t) \underline{Z} = -\eta_A U_A^t \underline{Z}.
\end{equation*}
By the assumed alignment between the processed $A$-update and the input, $U_A^t\underline{Z}$ has $\Theta(n)$ coordinate size in the low-rank dimension. Consequently, the magnitude of the internal feature shift scales as:
\begin{equation*}
    \|\Delta Z_A^t\|_2 \propto \eta_A \cdot n.
\end{equation*}
Under the \initAB regime where $\eta_A = \Theta(n^{-1/2})$, the internal update scales as $\Theta(\sqrt{n})$, leading to feature updates that explode with model width. In contrast, both the standard \initA and our proposed \initAA adopt the asymmetric scaling $\eta_A = \Theta(n^{-1})$. This choice ensures $\|\Delta Z_A^t\|_2 = \Theta(1)$, maintaining stable feature evolution within the interaction bottleneck. We postulate that this key difference, preserving internal stability, is essential for consistent performance across varying model scales, a hypothesis supported by the empirical analysis in \cref{fig:norms} and subsequent experiments. In \cref{app:comparison_with_AB,app:math_equal_lr}, experimental results demonstrate that \initAA outperforms \initAB in most configurations, even under the condition $\eta_A = \eta_B$. Moreover, \initAB exhibits notable performance degradation for large values of $\beta$.

The analysis above is deliberately stated through the signal-noise decomposition of the LoRA feature dynamics, because this decomposition isolates the initialization-dependent term that causes high-ratio instability. A complementary view is given in \cref{app:spectral_derivation}, where the spectral condition for feature learning \citep{yang2023feature} leads to the same \initAA scaling by controlling the worst-case spectral norm. \Cref{app:spectral_conditions} explains why $\gamma$-calculus can miss aligned directions in which the random initialization is amplified, and why the spectral condition again points to the \initAA variance scaling $\sigma_A^2 = \Theta(n^{-2})$ as the choice that makes the initialization noise asymptotically subdominant.

\section{Experiments}
We evaluate whether \initAA improves HP transfer and robustness in two settings: a controlled toy MLP and LLM fine-tuning. The LLM experiments cover both encoder-only models (RoBERTa-base on GLUE) and decoder-only models (Llama arithmetic and commonsense benchmarks, with additional backbones in the appendix). Across experiments, we vary the initialization scale $\beta$, LR ratio $\eta_B/\eta_A$, adapter rank $r$, and LoRA multiplier $\alpha/r$; \initA uses base variance $\sigma_A^2=1/n$ and \initAA uses $\sigma_A^2=1/n^2$, both with $B_0=0$. Full training details are in \cref{app:experimental_setup}.

\subsection{Toy Example}

\begin{wrapfigure}{r}{0.50\textwidth}
    \vspace{-1.2em}
    \centering
    \includegraphics[width=1\linewidth]{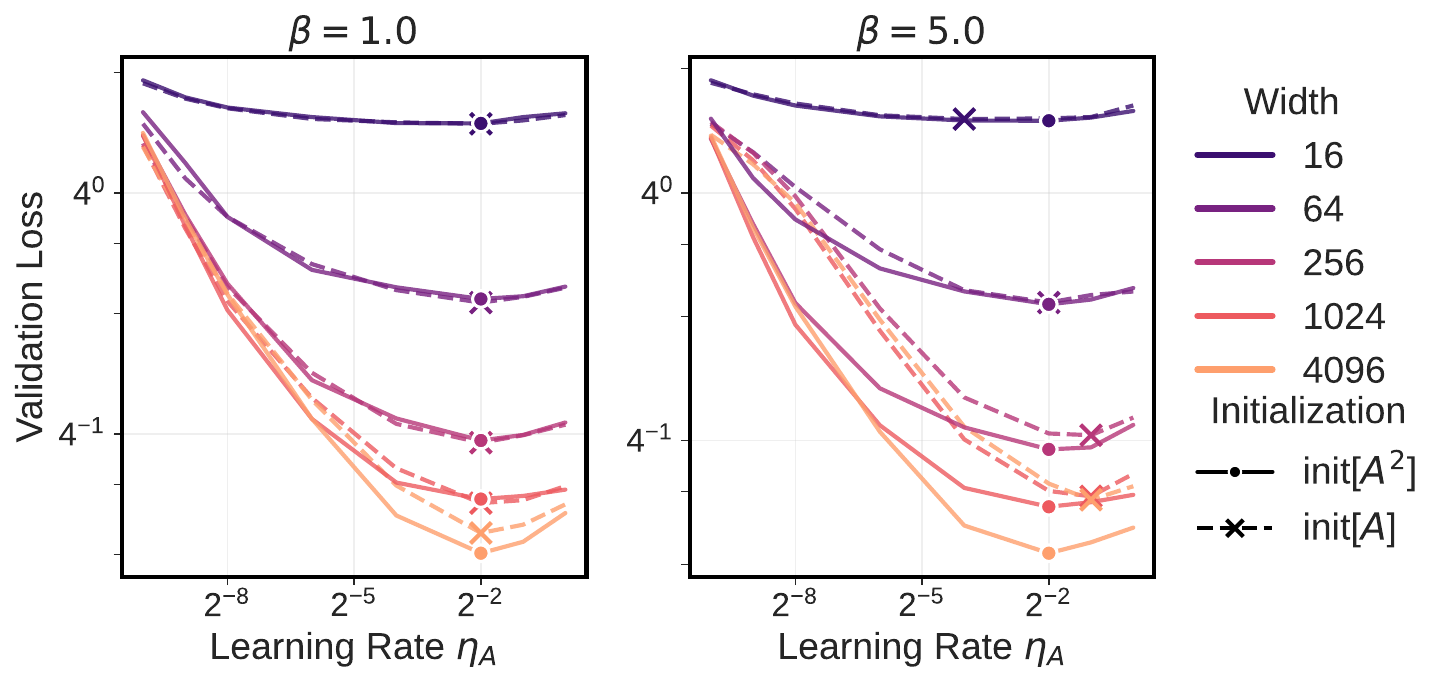}
    \caption{HP transfer on the MNIST toy example, measured by validation loss across widths and LRs.}
    \label{fig:lr_sweep}
\end{wrapfigure}
\paragraph{Setup.} To test HP transfer under \initAA, we consider a simplified MLP equipped with LoRA ($\alpha/r = 1$) on MNIST \citep{lecun1998mnist}:
\begin{equation*}
    Y = W_{\text{out}} \Fcal ((W_0 + B A), \Fcal(W_{\text{in}}, X)),
\end{equation*}
where $W_{\text{in}} \in \Rbb^{n \times d}$, $W_{\text{out}} \in \Rbb^{10 \times n}$, $W_0 \in \Rbb^{n \times n}$, $A \in \Rbb^{r \times n}$, and $B \in \Rbb^{n \times r}$. Here $\Fcal(W,z)=\operatorname{ReLU}(Wz)$. We fix $W_0=0$ and freeze $W_{\text{in}}$ and $W_{\text{out}}$ after the $\mu$P initializations $(W_{\text{out}})_{ij} \sim \Ncal(0,n^{-2})$ and $(W_{\text{in}})_{ij} \sim \Ncal(0,1)$ \citep{yang2022tensor}. To isolate the clean non-EMA normalized-update regime from our theory, we disable optimizer momentum and moving averages. We compare \initA and \initAA across $n \in \{16, 64, 256, 1024, 4096\}$ with a fixed LR ratio $\eta_B=n\eta_A$ and $\eta_A \in \{2^{-7}, \cdots, 2^0\}$. For each width, we report the best validation loss over the LR grid, averaged over three seeds, and track the squared norms of the internal feature $Z_A$ and LoRA feature $Z_B$ to assess stability.

\paragraph{Results.} \cref{fig:lr_sweep} illustrates the HP transfer capabilities. At a low initialization scale ($\beta = 1.0$), both methods exhibit consistent optimal LRs across widths. However, at a larger scale ($\beta = 5.0$), the optimal LR for \initA drifts significantly at the largest width ($n = 4096$), and convergence degrades. In contrast, \initAA maintains stable HP transfer and achieves consistently lower validation loss, with the performance gap widening as $\beta$ increases. This stability is further corroborated by the feature norm evolution (\cref{fig:norms}), where \initAA preserves the stability of $Z_A$ while the evolution of $Z_B$ is not affected by $\beta$.

\subsection{LLM Experiments}
\label{subsec:llama}
\begin{figure}[!t]
    \centering
    \includegraphics[width=1\linewidth]{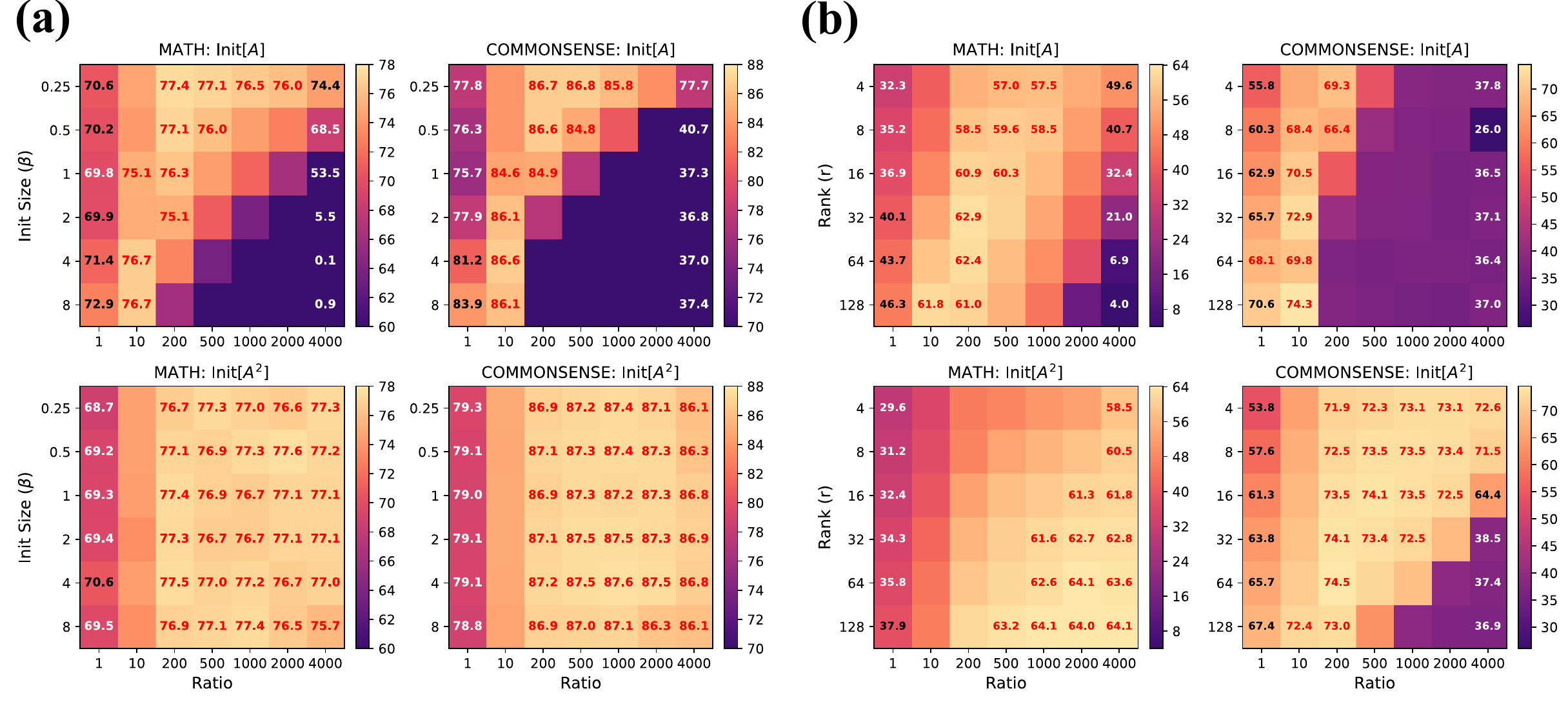}
    \vspace{-1.0em}
    \caption{Llama generation results. (a) Llama 3-8B with varying initialization scale $\beta$. (b) Llama 3.2-1B with varying rank $r$. Each panel reports arithmetic and commonsense accuracy under standard \initA and \initAA while sweeping the LR ratio $\rho=\eta_B/\eta_A$, implemented by fixing $\eta_A=10^{-5}$ and setting $\eta_B=\rho\eta_A$. Numerical labels mark endpoint ratios and near-optimal settings.}
    \label{fig:llama_2x2}
    \vspace{-1.0em}
\end{figure}

\begin{figure}[!t]
    \centering
    \includegraphics[width=1\linewidth]{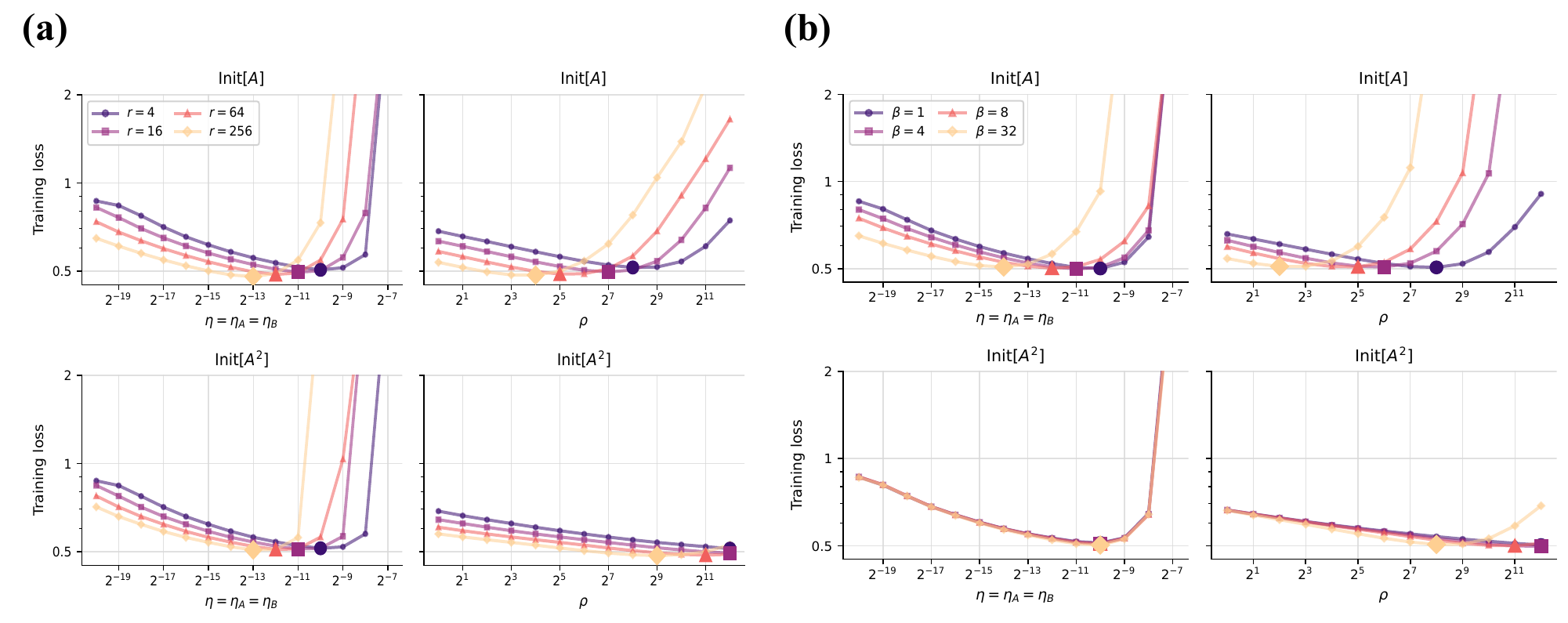}
    \vspace{-0.8em}
    \caption{Final training loss reached during one epoch on the math benchmark. The LoRA multiplier is fixed to $\alpha/r=1$. (a) varies rank $r$, and (b) varies initialization scale $\beta$. The figure includes equal-LR sweeps with $\eta=\eta_A=\eta_B$ and LR-ratio sweeps; in the ratio panels, $\eta_A=10^{-5}$ and $\eta_B=\rho\eta_A$ for $\rho=\eta_B/\eta_A$.}
    \label{fig:loss_plot}
    \vspace{-1.0em}
\end{figure}

\paragraph{Setup.} To validate the effectiveness of our theory for real LLMs, we evaluate both encoder-only and decoder-only LLMs. We use RoBERTa on GLUE for encoder-only results, deferred to \cref{app:lr_sweep_experiment}. For decoder-only generation, we fine-tune Llama 3-8B and Llama 3.2-1B \citep{dubey2024llama} on arithmetic and commonsense benchmarks \citep{hu2023llm}, following the evaluation by \citet{libeyond}: 10k arithmetic/math examples and 170k commonsense examples. For Llama 3-8B, we set $r=\alpha=8$ and sweep $\beta \in \{0.25,0.5,1,2,4,8\}$ and $\rho=\eta_B/\eta_A \in \{1,10,200,500,1000,2000,4000\}$, fixing $\eta_A=10^{-5}$ and $\eta_B=\rho\eta_A$. For Llama 3.2-1B, we fix $\beta=1$, sweep $r\in\{4,8,16,32,64,128\}$, and set $\alpha=r$. The loss study trains one epoch on math while varying $r$, $\beta$, equal LRs, and LR ratios.

\paragraph{Results.} \Cref{fig:llama_2x2} shows that \initAA is much less sensitive to $\beta$ and rank than standard \initA: it keeps broad high-accuracy regions across LR ratios, while \initA drifts toward smaller ratios and collapses at high ratios as $\beta$ or $r$ increases. The training-loss sweep in \cref{fig:loss_plot} is consistent with the component decomposition. Here, $\beta$ appears explicitly only in the initialization term; the signal can still depend on $\beta$ indirectly through the optimizer trajectory. Empirically, the equal-LR optimum and best loss remain nearly unchanged across $\beta$ under \initAA, consistent with suppression of initialization noise. However, because $r$ appears in both the signal and noise terms, the optimum LR also shifts for \initAA when sweeping $r$. Standard \initA does not suppress the noise, so its low-loss region narrows or shifts. In the ratio sweep, some drift can still appear because very large $\eta_B$ can introduce its own stability issue, but \initAA remains broader and more robust. The same qualitative pattern holds for encoder-only RoBERTa and for additional decoder-only Llama, Qwen, and Mistral backbones in \cref{app:lr_sweep_experiment,fig:more_models_lora}.


\subsection{\initAA Enables Multiplier Tuning}
\paragraph{Setup.}
In both the analysis and experiments, we set the multiplier $\alpha / r = \Theta(1)$. When this multiplier increases, it can also amplify initialization noise. This explains why many LoRA studies use relatively modest multipliers, often with $\alpha/r \le 2$ \citep{hulora,dettmers2023qlora,biderman2024lora}. However, a larger multiplier gives the learned low-rank update more influence on the model output and may therefore increase adaptability on hard decoder-only fine-tuning tasks. Our analysis suggests that the obstacle is not only stronger adaptation: $\alpha/r$ also amplifies the same initialization-dependent noise as $\beta$ (\cref{lemma:decomps_2}). If this random-initialization effect is suppressed, larger multipliers need not be intrinsically unstable and may further improve performance. We therefore fix $\beta=0.5$, $\eta_A=10^{-5}$, and $\eta_B=2\times10^{-3}$ on Llama~3-8B arithmetic fine-tuning, then vary $\alpha/r$.

\Needspace{16\baselineskip}
\paragraph{Results.}
\begin{wrapfigure}{r}{0.48\textwidth}
    \vspace{-1.1em}
    \centering
    \includegraphics[width=1\linewidth]{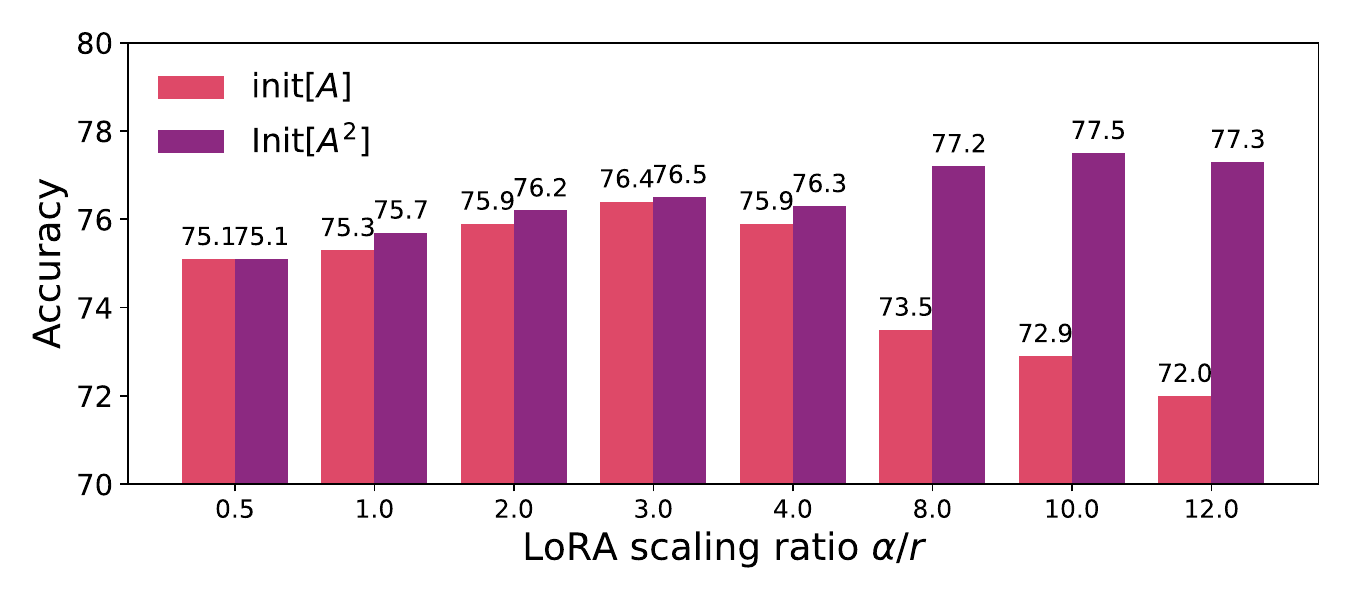}
    \vspace{-0.9em}
    \caption{Arithmetic accuracy as the LoRA multiplier $\alpha/r$ increases. \initAA remains stable at larger multipliers, while \initA degrades beyond $\alpha/r=3$.}
    \label{fig:alpha_plot}
    \vspace{-1.0em}
\end{wrapfigure}
\Cref{fig:alpha_plot} reveals a sharp stability gap. \initA peaks at $\alpha/r=3$ (76.4\%) and drops to 72.0\% at $\alpha/r=12$, whereas \initAA improves up to $\alpha/r=10$ (77.5\%) and remains stable at $\alpha/r=12$ (77.3\%). This matches our analysis: larger $\alpha/r$ amplifies both signal and noise, but \initAA suppresses the initialization-dependent noise enough to exploit stronger low-rank updates without destabilizing the update from the signal term. Thus, \initAA makes tuning the multiplier $\alpha/r$ a viable way to further improve arithmetic fine-tuning performance. Five-seed multiplier validation, matched LR retuning, and longer-horizon comparisons are in \cref{app:multiseed_validation,app:long_horizon}.

\section{Conclusion and Limitations}
We propose \initAA, a simple LoRA initialization that suppresses initialization-induced noise while preserving zero initial adapter output. The central mechanism is that standard \initA leaves a random initial projection in $A$ that is amplified as soon as $B$ moves; \initAA reduces this term by one additional factor of width, making high-ratio asymmetric LRs much less brittle. Practically, this reduces best-ratio drift across initialization scales, lowers dependence on exhaustive HP sweeps, and improves robustness to large update multipliers $(\alpha/r)$, enabling stronger updates and better performance on downstream tasks. These gains are consistent across the controlled HP-transfer experiment, encoder-only RoBERTa fine-tuning, and decoder-only LLM generation benchmarks.

More broadly, the design philosophy only targets the scale of $A_0$ rather than its distribution. It can therefore be applied to structured initializers: we compress the $A_0$ scale of Orthogonal and EVA while retaining their selected directions, and use a factorwise gauge transformation for PiSSA that preserves its initialized low-rank product. In selected Llama arithmetic sweeps, these three variance-compressed variants remain stable under aggressive asymmetric LRs for which their uncompressed counterparts degrade (\cref{app:variance_compressed_initializers}). This extension is empirical: the $B_0=0$ theorem does not directly cover PiSSA, and we do not claim uniform gains over carefully tuned base initializers.

Several limitations remain. Although some configurations are replicated across three to five seeds, most LR-ratio sweeps are single runs because of computational constraints, so small score differences in those sweeps should not be overinterpreted. Our empirical evaluation focuses on supervised fine-tuning of LLMs; extending the study to reinforcement-learning fine-tuning, broader generative-model training, and other adaptation settings is an important next step. Although \initAA makes asymmetric LRs more robust and our analysis identifies factors that can shift the preferred ratio, including $\beta$, $r$, and $\alpha/r$, we do not provide a direct rule for setting $\eta_B/\eta_A$ across arbitrary models and tasks. Developing such model-aware prescriptions, a fuller theory for rank $r$ and other LoRA HP interactions, and a better understanding of how far initialization scaling can be pushed beyond \initAA are useful future directions. The limits of initialization scaling may connect to recent observations on the role of initialization in emergent reasoning behavior \citep{zhang2024initialization}.

\begin{ack}
Disen Liao is supported by the Queen Elizabeth II Graduate Scholarship in Science and Technology (QEII-GSST). This work is supported in part by the Natural Sciences and Engineering Research Council of Canada (NSERC), the Canada CIFAR AI Chairs program and the Ontario Early Researcher program. Resources used in preparing this research were provided, in part, by the Province of Ontario, the Government of Canada through CIFAR, and companies
sponsoring the Vector Institute. We thank Mufan Li for very helpful discussions. The authors declare no competing interests. 
\end{ack}

\clearpage
\bibliography{ref-neurips}
\bibliographystyle{unsrtnat}

\appendix
\crefname{appendix}{Appendix}{Appendices}
\Crefname{appendix}{Appendix}{Appendices}
\crefalias{section}{appendix}
\crefalias{subsection}{appendix}
\crefalias{subsubsection}{appendix}
\setlength{\leftmargin}{2em}
\setlength{\leftmargini}{2em}

\clearpage
\section*{Appendix Table of Contents}
\begingroup
\hypersetup{linkcolor=blue}
\vspace{1.5em}
\noindent\hspace*{1em}\begin{minipage}{\dimexpr\linewidth-2em\relax}
\appendixtocsection{app:theory_proofs}{A. Theoretical Analysis and Proofs}
\appendixtocitem{app:two_scaling_laws}{A.1 Two Scaling Laws}
\appendixtocitem{app:gamma}{A.2 $\gamma$-Calculus}
\appendixtocitem{app:derivation_of_theorem}{A.3 Update Decomposition and Noise-Dominance Theorem}
\appendixtocitem{app:adam_extension}{A.4 Adam and AdamW as Normalized LoRA Updates}
\appendixtocitem{app:spectral_derivation}{A.5 Spectral Derivation of \initAA}
\appendixtocitem{app:spectral_conditions}{A.6 Spectral Conditions for Robust LoRA Feature Learning}
\appendixtocsection{app:related_work}{B. Related Literature Review}
\appendixtocitem{app:related_work_lora_init}{B.1 LoRA Initialization}
\appendixtocitem{app:related_work_lora_hps}{B.2 Other LoRA Hyperparameters}
\appendixtocitem{app:related_work_feature_learning}{B.3 Feature Learning and HP Transfer}
\appendixtocitem{app:related_work_stability}{B.4 Training Stability}
\appendixtocsection{app:experimental_setup}{C. Experimental Setup}
\appendixtocitem{app:init_implementation}{C.1 Initialization Implementation}
\appendixtocitem{app:toy_setup}{C.2 Toy Example}
\appendixtocitem{app:encoder_only_setup}{C.3 Encoder-Only Model Setup}
\appendixtocitem{app:decoder_only_setup}{C.4 Decoder-Only Model Setup}
\appendixtocsection{app:additional_results}{D. Additional Results and Ablations}
\appendixtocitem{app:lr_sweep_experiment}{D.1 LR Sweep Experiment}
\appendixtocitem{app:block_b_important}{D.2 Sensitivity to the $B$ LR}
\appendixtocitem{app:additional_llama}{D.3 Additional Llama Experiments}
\appendixtocitem{app:long_horizon}{D.4 Long-Horizon Arithmetic Fine-Tuning}
\appendixtocitem{app:multiseed_validation}{D.5 Multi-Seed Accuracy and Tuning Robustness}
\appendixtocitem{app:empirical_energy_decomposition}{D.6 Signed Signal--Noise Energy Measurements}
\appendixtocsection{app:comparison_with_others}{E. Comparisons with Other LoRA Initializations}
\appendixtocitem{app:comparison_with_AB}{E.1 Comparison with \initAB}
\appendixtocitem{app:math_equal_lr}{E.2 Equal-LR Sweep on Math Benchmark}
\appendixtocitem{app:variance_compressed_initializers}{E.3 Variance-Compressed Structured Initializers under Asymmetric LRs}
\appendixtocsection{app:responsible_research}{F. Responsible Research Statements}
\appendixtocitem{app:broader_impact}{F.1 Broader Impact}
\appendixtocitem{app:llm_usage}{F.2 LLM Usage}
\end{minipage}
\endgroup
\clearpage

\section{Theoretical Analysis and Proofs}
\label{app:theory_proofs}
\setcounter{theorem}{0}
\setcounter{lemma}{0}
\renewcommand{\theHtheorem}{appendix.theorem.\arabic{theorem}}
\renewcommand{\theHlemma}{appendix.lemma.\arabic{lemma}}
\renewcommand{\theHcondition}{appendix.condition.\arabic{condition}}

\subsection{Two Scaling Laws}
\label{app:two_scaling_laws}
When working with $\gamma$-calculus, we explicitly distinguish between Law of Large Numbers (LLN) and Central Limit Theorem (CLT) scaling. These two scaling regimes are fundamental in maximal-update-parameterization theory \citep{yang2021tensor}, which enables HP transfer \citep{yang2022tensor}. Here we give the two basic statistical rules and explain why they matter.

\paragraph{Law of Large Numbers (LLN).} If $X_1, X_2, \cdots, X_n$ are \iid random variables with finite expected value $\Ebb[X_i] = \mu$, then
\begin{equation*}
    \frac{1}{n} \sum_{i=1}^n X_i \xrightarrow{\text{a.s.}} \mu, \quad \text{as } n \rightarrow \infty.
\end{equation*}

\paragraph{Central Limit Theorem (CLT).} If $X_1, X_2, \cdots, X_n$ are \iid random variables with finite expected value $\Ebb[X_i] = \mu$ and finite variance $\operatorname{Var}(X_i) = \sigma^2$, then
\begin{equation*}
    \frac{1}{\sqrt{n}} \sum_{i=1}^n(X_i - \mu) \xrightarrow{d} \Ncal(0, \sigma^2), \quad \text{as } n \rightarrow \infty.
\end{equation*}
These two statistical laws reveal two scenarios regarding the size of a sum of $X_i$ when $n$ is large:
\begin{equation*}
    \sum_{i=1}^n X_i \approx \underbrace{n \mu}_{\text{LLN}} +  \underbrace{\sqrt{n} \sigma \Ncal(0,1)}_{\text{CLT}}.
\end{equation*}
For $\mu\ne0$, $|\sum_{i=1}^n X_i|/n\to|\mu|$ almost surely. For $\mu=0$ and $0<\sigma^2<\infty$, the sum has RMS magnitude $\sqrt{\Ebb[(\sum_{i=1}^n X_i)^2]}=\sigma\sqrt n$, but no deterministic lower bound follows.

\subsection{$\gamma$-Calculus}
\label{app:gamma}
In the scaling-limit theory of deep learning, we want to track the asymptotic behavior of key quantities such as LRs or initialization as we scale model ingredients such as width or depth. Prior work \citep{hayou2024impact,hayou2024lora+,libeyond} uses $\gamma$-calculus to derive LoRA scaling rules with respect to the model width $n$.

If a variable $v$ has asymptotically polynomial magnitude in $n$, \ie, $|v| = \Theta(n^\lambda)$, we write $\lambda = \gamma[v]$. Its exponent arithmetic follows these rules:

\begin{itemize}
    \item \textbf{Zero:} $\gamma[0] = - \infty$, corresponding to the limit $\gamma[n^\lambda]$ when $\lambda \rightarrow - \infty$.
    \item \textbf{Product:} For two real-valued variables $v, v' \in \Rbb$, we have $\gamma[vv'] = \gamma[v] + \gamma[v']$.
    \item \textbf{Addition:} Whenever the exponents exist, $\gamma[v+v']\le\max(\gamma[v],\gamma[v'])$. Equality holds if the exponents differ or $|v+v'|=\Theta(|v|+|v'|)$; excluding exact cancellation alone is insufficient.
\end{itemize}

\subsection{Update Decomposition and Noise-Dominance Theorem}
\label{app:derivation_of_theorem}

We first show that the key decomposition only requires width-normalized update directions, not the exact SignSGD form.

\begin{lemma}[Update and feature decompositions]
\label{lemma:decomps_2}
For any step $t \ge 1$, both the one-step LoRA feature update and the LoRA feature itself decompose into noise terms depending on the initialization scale $\beta$ and signal terms depending on the LRs:
\begin{align*}
    \Delta Z_B^t
    &= \underbrace{- \frac{\alpha\beta}{r}\,\eta_B\, U_B^t \tilde{A}_0 \underline{Z}}_{\Delta T_{\text{noise}}}
    + \underbrace{\frac{\alpha}{r}\,\eta_A \eta_B\, S_t\, \underline{Z}}_{\Delta T_{\text{signal}}^t},\\
    Z_B^t
    &= \underbrace{-\frac{\alpha \beta}{r} \eta_B \Big(\sum_{j=0}^{t-1} U_B^j \Big) \tilde{A}_0 \underline{Z}}_{T_{\text{noise}}^t}
    + \underbrace{\frac{\alpha}{r} \eta_A \eta_B \Big(\sum_{j=0}^{t-1} U_B^j \Big) \Big(\sum_{i=0}^{t-1} U_A^i \Big) \underline{Z}}_{T_{\text{signal}}^t},
\end{align*}
where
\[
S_t := \Big(\sum_{j=0}^{t-1} U_B^j\Big) U_A^t \;+\; U_B^t \Big(\sum_{i=0}^{t-1} U_A^i\Big) \;+\; U_B^t U_A^t.
\]
\end{lemma}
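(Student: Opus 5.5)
The argument is purely algebraic: we unroll the two update recursions and expand the bilinear product $B_tA_t$. No scaling or alignment assumptions are needed; we use only $B_0=0$, $A_0=\beta\tilde A_0$, and $A_{t+1}=A_t-\eta_AU_A^t$, $B_{t+1}=B_t-\eta_BU_B^t$, which hold for any realized sequence of directions.

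\textbf{Step 1 (closed forms).} By induction on $t$,
\[
A_t=\beta\tilde A_0-\eta_A\sum_{i=0}^{t-1}U_A^i, \qquad B_t=-\eta_B\sum_{j=0}^{t-1}U_B^j .
\]
To simplify notation, write $C_t:=\sum_{j<t}U_B^j$ and $D_t:=\sum_{i<t}U_A^i$, so that $B_t=-\eta_B C_t$ and $A_t=\beta\tilde A_0-\eta_AD_t$.

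\textbf{Step 2 (feature decomposition).} Substituting into $Z_B^t=\frac{\alpha}{r}B_tA_t\underline Z$ and distributing gives
\[
Z_B^t=\frac{\alpha}{r}(-\eta_BC_t)(\beta\tilde A_0-\eta_AD_t)\underline Z
=-\frac{\alpha\beta}{r}\eta_BC_t\tilde A_0\underline Z+\frac{\alpha}{r}\eta_A\eta_BC_tD_t\underline Z .
\]
This is exactly $T_{\text{noise}}^t+T_{\text{signal}}^t$. The sign of the signal term is positive because $(-\eta_B)(-\eta_A)=\eta_A\eta_B$.

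\textbf{Step 3 (update decomposition).} We compute $\Delta Z_B^t=Z_B^{t+1}-Z_B^t$ term by term, using $C_{t+1}=C_t+U_B^t$ and $D_{t+1}=D_t+U_A^t$.

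The noise part changes by $-\frac{\alpha\beta}{r}\eta_B(C_{t+1}-C_t)\tilde A_0\underline Z=-\frac{\alpha\beta}{r}\eta_BU_B^t\tilde A_0\underline Z$.

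For the signal part, expand
\[
C_{t+1}D_{t+1}-C_tD_t=C_tU_A^t+U_B^tD_t+U_B^tU_A^t=S_t .
\]
This yields the stated $\Delta T_{\text{signal}}^t$.

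As a consistency check, the same result can be obtained through the LoRA+ three-term split $\delta_1^t+\delta_2^t+\delta_3^t$, scaled by $\alpha/r$. The term $\delta_1^t$ gives $\eta_A\eta_BC_tU_A^t$. The term $\delta_2^t$ gives $-\eta_BU_B^t(\beta\tilde A_0-\eta_AD_t)$, which is the source of the noise term. The term $\delta_3^t$ gives $\eta_A\eta_BU_B^tU_A^t$.

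There is no real obstacle here. The only care needed is tracking signs, and confirming that the $t=0$ base case is consistent with the empty-sum convention $C_0=D_0=0$. This makes $Z_B^0=0$, so the formulas hold for every $t\ge1$; indeed they also hold at $t=0$.
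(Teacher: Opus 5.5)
Your proof is correct and follows the paper's argument: both unroll the recursions to $A_t=\beta\tilde A_0-\eta_A\sum_{i<t}U_A^i$ and $B_t=-\eta_B\sum_{j<t}U_B^j$, then distribute the product to get $Z_B^t$. The only cosmetic difference is how $\Delta Z_B^t$ is obtained: the paper expands $B_{t+1}A_{t+1}$ into the three LoRA+ terms $\delta_1^t,\delta_2^t,\delta_3^t$ (your consistency check), whereas you difference the closed-form feature decomposition via $C_{t+1}D_{t+1}-C_tD_t=S_t$, which is the same algebra.
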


\begin{proof}
    Let
    \[
    \Delta A_t := A_{t+1} - A_t = -\eta_A U_A^t, \qquad
    \Delta B_t := B_{t+1} - B_t = -\eta_B U_B^t.
    \]
    Repeatedly applying the update recursion gives
    \[
    A_t = A_0 - \eta_A \sum_{i=0}^{t-1} U_A^i,
    \qquad
    B_t = B_0 - \eta_B \sum_{j=0}^{t-1} U_B^j
        = -\eta_B \sum_{j=0}^{t-1} U_B^j,
    \]
    where the last equality uses $B_0=0$.

    For the one-step update, expand
    \[
    B_{t+1} A_{t+1}
    = (B_t + \Delta B_t)(A_t + \Delta A_t)
    = B_t A_t + B_t \Delta A_t + \Delta B_t A_t + \Delta B_t \Delta A_t.
    \]
    Therefore,
    \begin{equation}
    \label{appeq:three_term_decomp}
    \frac{r}{\alpha}(Z_B^{t+1} - Z_B^t)
    = \underbrace{B_t \Delta Z_A^t}_{\delta_1^t}
    + \underbrace{\Delta B_t Z_A^t}_{\delta_2^t}
    + \underbrace{\Delta B_t \Delta Z_A^t}_{\delta_3^t},
    \end{equation}
    since $Z_A^t = A_t \underline{Z}$ and $\Delta Z_A^t = \Delta A_t \underline{Z}$. Substitution into the three terms gives
    \begin{align}
        B_t \Delta A_t \underline{Z}
        &= \Big( -\eta_B \sum_{j=0}^{t-1} U_B^j \Big) \Big( -\eta_A U_A^t \Big) \underline{Z}
        = \eta_A \eta_B \Big( \sum_{j=0}^{t-1} U_B^j \Big) U_A^t \underline{Z}, \label{eq:lemma3_proved}\\
        \Delta B_t A_t \underline{Z}
        &= \Big( -\eta_B U_B^t \Big) \Big( A_0 - \eta_A \sum_{i=0}^{t-1} U_A^i \Big) \underline{Z}
        = -\eta_B U_B^t A_0 \underline{Z} + \eta_A \eta_B\, U_B^t \Big( \sum_{i=0}^{t-1} U_A^i \Big) \underline{Z}, \label{eq:lemma3_proved2}\\
        \Delta B_t \Delta A_t \underline{Z}
        &= \Big( -\eta_B U_B^t \Big) \Big( -\eta_A U_A^t \Big) \underline{Z}
        = \eta_A \eta_B\, U_B^t U_A^t \underline{Z}.
    \end{align}
    Summing these identities in \cref{appeq:three_term_decomp}, defining $S_t$ as above, and substituting $A_0=\beta\tilde A_0$ yields the stated decomposition of $\Delta Z_B^t$.

    For the feature itself, use the same closed forms:
    \[
    Z_B^t
    = \frac{\alpha}{r} \left( -\eta_B \sum_{j=0}^{t-1} U_B^j \right)
      \left( \beta \tilde{A}_0 - \eta_A \sum_{i=0}^{t-1} U_A^i \right) \underline{Z}.
    \]
    Distributing the two terms and rearranging scalars gives the stated decomposition of $Z_B^t$.
\end{proof}

\begin{assumption}[Width-normalized update directions]
\label{ass:normalized_updates}
For all training steps $t \le T(n)$, the optimizer directions satisfy
\[
    \|U_A^t\|_{\max} = O(1), \qquad \|U_B^t\|_{\max} = O(1).
\]
On the analyzed steps after the initial $B$ update, the down-projection update is non-degenerate: for each LoRA row $i\in[r]$, $|(U_A^t)_{i,:}\underline Z|=\Theta(n)$, with constants independent of width. This magnitude condition alone does not rule out cancellation between steps.
\end{assumption}
Under \cref{ass:normalized_updates}, the component-scaling argument applies to any optimizer in this normalized class. For SignSGD, the coordinate bound is immediate and the fixed-input LoRA $A$-gradient has the sign-aligned outer-product structure used by \citet{hayou2024lora+}. \Cref{app:adam_extension} states the additional local conditions under which Adam and AdamW verify the same two optimizer-facing requirements.

\paragraph{Separate rank and temporal assumptions.}
Write $\lambda_r:=\alpha/r$ and $\rho_d(x):=\|x\|_2/\sqrt d$ for the RMS magnitude of a $d$-vector. For the accumulated directions, define
\[
C_t:=\sum_{s=0}^{t-1}U_B^s,\qquad
u:=\tilde A_0\underline Z,\qquad
v_t:=\Big(\sum_{s=0}^{t-1}U_A^s\Big)\underline Z.
\]
The two vectors entering $C_t$ depend differently on initialization and training. We therefore assign distinct rank exponents $\gamma_N,\gamma_S$ and effective multipliers $m_N:=\lambda_r r^{\gamma_N}$, $m_S:=\lambda_r r^{\gamma_S}$. The values $1$ and $1/2$ describe coherent, LLN-like and CLT-like rank scaling, respectively; these are scaling benchmarks, not assertions of a probabilistic limit law. In particular, mean-zero rank terms need not have CLT scaling when they are dependent.

The theorem below assumes linear temporal accumulation separately from the rank exponents. Neither coordinate boundedness nor per-step $A$--input alignment implies these accumulated lower bounds: signs may cancel across steps. The assumptions are imposed on the realized contractions of $u$ and $v_t$, allowing optimizer directions to depend on $A_0$.

\paragraph{Empirical rank coherence.}
The LLN choice is a conditional benchmark, rather than a universal property of LoRA training. To diagnose rank summation, let $u_t=A_t\underline Z\in\Rbb^r$ be the feature on a fixed probe token and $a_t=\|u_t\|_2/\sqrt r$ its RMS magnitude. For a sampled output row $k$, define the normalized contraction magnitude $S_{t,k}(r):=|(U_B^t u_t)_k|/a_t$, with $U_B^t$ taken before the LR and weight decay. After aggregating sampled contractions, fitting $S_t(r)\propto r^{\gamma_t}$ distinguishes CLT-like ($\gamma_t=1/2$) from coherent, LLN-like ($\gamma_t=1$) behavior. The sweep uses $r\in\{4,8,16,32,64,128\}$, $\eta_B/\eta_A\in\{200,1000,2000\}$, both initializations, and three seeds per configuration. We fit the exponent using both median and mean aggregation of sampled contractions.

\begin{figure}[t]
    \centering
    \includegraphics[width=\linewidth]{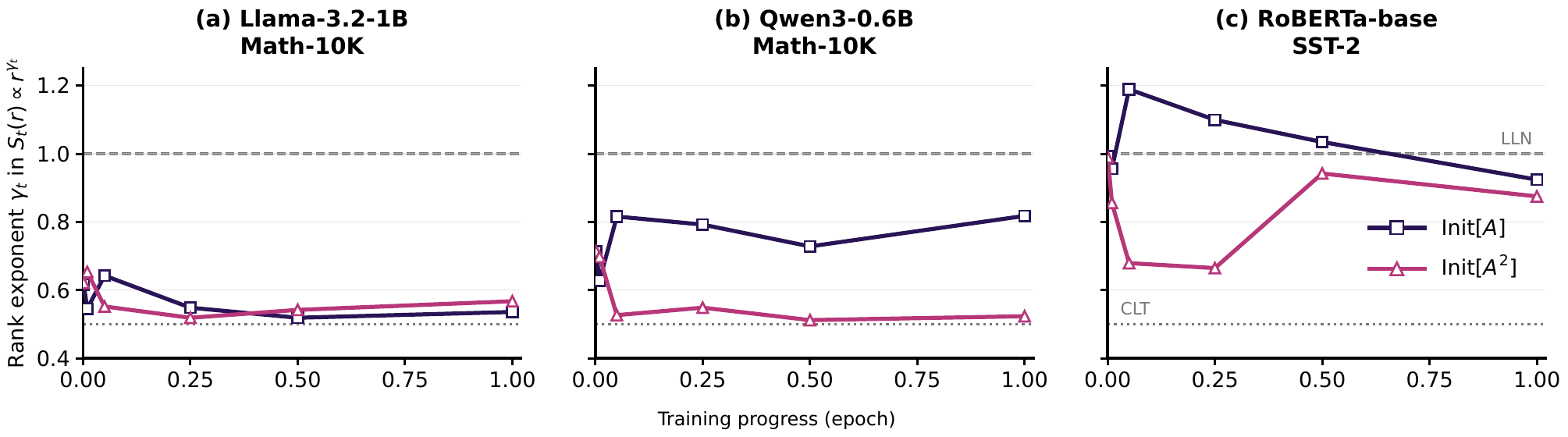}
    \caption{Empirical rank-coherence exponent during Llama-3.2-1B and Qwen3-0.6B fine-tuning on Math-10K, and RoBERTa-base fine-tuning on SST-2. The fitted exponent in $S_t(r)\propto r^{\gamma_t}$ varies with model, initialization, and training stage. Lines mark CLT-like ($1/2$) and LLN-like ($1$) rank scaling.}
    \label{fig:rank_scaling_coherence}
\end{figure}

\begin{table}[htbp]
\centering
\small
\setlength{\tabcolsep}{6pt}
\caption{Rank-coherence exponents under median / mean aggregation of the normalized contraction. Both aggregation choices show that the exponent depends on model and training stage.}
\label{tab:rank_coherence_exponents}
\begin{tabular}{llcc}
\toprule
Model & Checkpoint & \makecell{\initA\\median / mean $\gamma$} & \makecell{\initAA\\median / mean $\gamma$} \\
\midrule
Llama-3.2-1B & First update & 0.616 / 0.652 & 0.621 / 0.655 \\
Llama-3.2-1B & 5\% epoch    & 0.642 / 0.665 & 0.552 / 0.565 \\
Llama-3.2-1B & 50\% epoch   & 0.519 / 0.559 & 0.542 / 0.560 \\
Llama-3.2-1B & End epoch    & 0.536 / 0.572 & 0.567 / 0.581 \\
\midrule
Qwen3-0.6B   & First update & 0.713 / 0.778 & 0.714 / 0.779 \\
Qwen3-0.6B   & End epoch    & 0.818 / 0.819 & 0.523 / 0.533 \\
\midrule
RoBERTa/SST-2 & First update & 0.993 / 0.954 & 0.990 / 0.951 \\
RoBERTa/SST-2 & End epoch    & 0.924 / 0.958 & 0.875 / 0.877 \\
\bottomrule
\end{tabular}
\end{table}

\Cref{fig:rank_scaling_coherence} shows that Llama is largely CLT-like, Qwen depends on the initialization and training stage, and RoBERTa is closer to coherent summation. This motivates exponents appropriate to the model, module, and training window. The probe measures $U_B^tA_t\underline Z$, whereas the theorem concerns $C_tu$ and $C_tv_t$; it therefore does not establish either accumulated exponent or their equality. The theorem allows $\gamma_N\ne\gamma_S$. A shared multiplier $m_{r,\gamma}:=\lambda_r r^\gamma$ is recovered only if both accumulated contractions satisfy the same exponent. At fixed rank and $\lambda_r=\Theta(1)$, both multipliers are $\Theta(1)$, preserving the extra $1/n$ noise comparison under matched assumptions.

The optimizer directions are realized quantities and may depend on $A_0$ through the training trajectory. We therefore do not condition on these directions and then treat $\tilde A_0$ as independent. Instead, the following deterministic identity controls the cross term for every realized trajectory.

\begin{lemma}[Exact realized energy decomposition]
\label{lem:realized_energy_decomposition}
Define
\[
E_N^t:=\frac{1}{n}\|T_{\text{noise}}^t\|_2^2,\qquad
E_S^t:=\frac{1}{n}\|T_{\text{signal}}^t\|_2^2,\qquad
E_X^t:=\frac{2}{n}\langle T_{\text{noise}}^t,T_{\text{signal}}^t\rangle,
\]
and let $E_{\text{total}}^t:=n^{-1}\|Z_B^t\|_2^2$. Then
\[
E_{\text{total}}^t=E_N^t+E_S^t+E_X^t,
\qquad
|E_X^t|\le 2\sqrt{E_N^tE_S^t},
\]
and
\[
(\sqrt{E_N^t}-\sqrt{E_S^t})^2
\le E_{\text{total}}^t
\le(\sqrt{E_N^t}+\sqrt{E_S^t})^2.
\]
Consequently, $E_{\text{total}}^t/E_N^t\to1$ whenever $E_S^t/E_N^t\to0$, and $E_{\text{total}}^t/E_S^t\to1$ whenever $E_N^t/E_S^t\to0$.
\end{lemma}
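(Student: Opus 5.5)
The plan is to start from the exact decomposition of \cref{lemma:decomps_2}, which gives $Z_B^t=T_{\text{noise}}^t+T_{\text{signal}}^t$ as an algebraic identity holding for every realized trajectory. Expanding the squared Euclidean norm of the sum gives
\[
\|Z_B^t\|_2^2=\|T_{\text{noise}}^t\|_2^2+\|T_{\text{signal}}^t\|_2^2+2\langle T_{\text{noise}}^t,T_{\text{signal}}^t\rangle .
\]
Dividing by $n$ yields $E_{\text{total}}^t=E_N^t+E_S^t+E_X^t$ directly from the definitions. No probabilistic or independence assumption is used, so the optimizer directions are free to depend on $\tilde A_0$.

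For the cross term, I will apply Cauchy--Schwarz in $\Rbb^n$:
\[
|\langle T_{\text{noise}}^t,T_{\text{signal}}^t\rangle|\le\|T_{\text{noise}}^t\|_2\|T_{\text{signal}}^t\|_2=n\sqrt{E_N^t}\sqrt{E_S^t}.
\]
Multiplying by $2/n$ gives $|E_X^t|\le2\sqrt{E_N^tE_S^t}$. Substituting the two extreme values $E_X^t=\pm2\sqrt{E_N^tE_S^t}$ into the identity bounds $E_{\text{total}}^t$ between $E_N^t+E_S^t-2\sqrt{E_N^tE_S^t}$ and $E_N^t+E_S^t+2\sqrt{E_N^tE_S^t}$. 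These are exactly the two perfect squares in the statement. Equivalently, the sandwich is the reverse and forward triangle inequalities for $\|T_{\text{noise}}^t+T_{\text{signal}}^t\|_2$, squared and scaled by $1/n$.

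For the ratio limits, suppose $E_S^t/E_N^t\to0$, which implicitly requires $E_N^t>0$ along the sequence. Divide the sandwich by $E_N^t$ and set $x:=\sqrt{E_S^t/E_N^t}\to0$. This gives $(1-x)^2\le E_{\text{total}}^t/E_N^t\le(1+x)^2$, and both bounds tend to $1$, so the squeeze gives the first claim. The second claim follows symmetrically by dividing by $E_S^t$ and using $\sqrt{E_N^t/E_S^t}\to0$.

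No step here is substantive. The only point requiring care is making explicit that the denominators are nonzero whenever the ratio hypotheses are meaningful, and that the limits are taken along whatever index ($n\to\infty$ or otherwise) the ratio hypothesis refers to, with $t$ possibly varying with $n$. Since the bounds are pointwise in that index, this causes no difficulty.
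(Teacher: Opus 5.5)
Your proposal is correct and follows essentially the same route as the paper. Both expand the squared norm of $T_{\text{noise}}^t+T_{\text{signal}}^t$, bound the cross term by Cauchy--Schwarz, substitute its extremal signs, and divide by the dominant energy to squeeze the ratio between $(1-q_t)^2$ and $(1+q_t)^2$; your remarks on nonzero denominators and on the index along which the limits are taken are a harmless, slightly more careful addition.
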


\begin{proof}
Expanding $n^{-1}\|T_{\text{noise}}^t+T_{\text{signal}}^t\|_2^2$ gives the identity. Cauchy--Schwarz gives $|E_X^t|\le2\sqrt{E_N^tE_S^t}$, and substituting its two extremal signs gives the displayed bounds. For example, if $q_t:=\sqrt{E_S^t/E_N^t}\to0$, division by $E_N^t$ yields $(1-q_t)^2\le E_{\text{total}}^t/E_N^t\le(1+q_t)^2$; the other case is symmetric.
\end{proof}

We now give the full conditional statement and proof of \cref{theorem:lora_feature_beta}.
\begin{theorem}[Component scaling with general rank contractions]
\label{thm:general_rank_components}
    Fix $\underline Z\in\Rbb^n$ with $\|\underline Z\|_2^2=\Theta(n)$ and positive $\alpha,\beta,\eta_A,\eta_B$. Let $B_0=0$, $A_0=\beta\tilde A_0$, with independent $(\tilde A_0)_{ij}\sim\Ncal(0,\sigma_A^2)$, and updates $A_{s+1}=A_s-\eta_AU_A^s$, $B_{s+1}=B_s-\eta_BU_B^s$. Define $C_t,u,v_t,\rho_d$ as above. On a set of analyzed steps $t\ge2$, suppose the realized trajectories satisfy
    \begin{align}
    \rho_r(u)&=\Theta(\sigma_A\sqrt n), &
    \rho_r(v_t)&=\Theta(nt), \label{eq:input_accumulation_scales}\\
    \rho_n(C_tu)&=\Theta(t r^{\gamma_N}\rho_r(u)), &
    \rho_n(C_tv_t)&=\Theta(t r^{\gamma_S}\rho_r(v_t)), \label{eq:rank_contraction_scales}
    \end{align}
    for fixed exponents $\gamma_N,\gamma_S$, with constants uniform in $n,r$ and the analyzed steps. Set $m_N=(\alpha/r)r^{\gamma_N}$ and $m_S=(\alpha/r)r^{\gamma_S}$. Then
    \[
    E_N^t=\Theta(\eta_B^2\beta^2m_N^2\sigma_A^2nt^2),\qquad
    E_S^t=\Theta((\eta_A\eta_Bm_Snt^2)^2).
    \]
    The total energy satisfies
    \[
    (\sqrt{E_N^t}-\sqrt{E_S^t})^2\le E_{\mathrm{total}}^t\le(\sqrt{E_N^t}+\sqrt{E_S^t})^2
    \]
    for every realized trajectory, so its ratio to the larger component tends to one whenever the smaller-to-larger energy ratio tends to zero. No independence between initialization and optimizer directions is required.
\end{theorem}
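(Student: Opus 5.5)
The proof is a direct computation on top of the exact decomposition in \cref{lemma:decomps_2} and the deterministic energy identity in \cref{lem:realized_energy_decomposition}. Write $\lambda_r=\alpha/r$, $C_t=\sum_{s<t}U_B^s$, $u=\tilde A_0\underline Z$, and $v_t=(\sum_{s<t}U_A^s)\underline Z$. With this notation, \cref{lemma:decomps_2} reads
\[
T_{\text{noise}}^t=-\lambda_r\beta\eta_B\,C_tu,
\qquad
T_{\text{signal}}^t=\lambda_r\eta_A\eta_B\,C_tv_t .
\]
Since $\frac1n\|x\|_2^2=\rho_n(x)^2$ for $x\in\Rbb^n$, this gives
\[
E_N^t=\lambda_r^2\beta^2\eta_B^2\,\rho_n(C_tu)^2,
\qquad
E_S^t=\lambda_r^2\eta_A^2\eta_B^2\,\rho_n(C_tv_t)^2 .
\]
Both identities hold exactly for every realized trajectory. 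No probabilistic statement about the optimizer directions enters at this stage.

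Next I substitute the assumed scales. For the noise term, the contraction assumption \cref{eq:rank_contraction_scales} gives $\rho_n(C_tu)=\Theta(tr^{\gamma_N}\rho_r(u))$. The projection assumption \cref{eq:input_accumulation_scales} gives $\rho_r(u)=\Theta(\sigma_A\sqrt n)$. Multiplying these yields $\rho_n(C_tu)^2=\Theta(t^2r^{2\gamma_N}\sigma_A^2n)$, and hence
\[
E_N^t=\Theta(\eta_B^2\beta^2(\lambda_rr^{\gamma_N})^2\sigma_A^2nt^2)=\Theta(\eta_B^2\beta^2m_N^2\sigma_A^2nt^2).
\]
The signal term is handled the same way. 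Combining $\rho_n(C_tv_t)=\Theta(tr^{\gamma_S}\rho_r(v_t))$ with $\rho_r(v_t)=\Theta(nt)$ gives $\rho_n(C_tv_t)=\Theta(r^{\gamma_S}nt^2)$. Therefore $E_S^t=\Theta((\eta_A\eta_Bm_Snt^2)^2)$.

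Two bookkeeping points need care. First, the $\Theta$-constants must stay uniform in $n$, $r$, and $t$. This holds because products and squares of two-sided $\Theta$ bounds with uniform constants again have uniform constants, and the scalars $\alpha,\beta,\eta_A,\eta_B$ enter as exact multiplicative factors. Second, the hypothesis $\|\underline Z\|_2^2=\Theta(n)$ is already absorbed into the assumed scales of $u$ and $v_t$, so it needs no separate use.

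The total-energy sandwich and the ratio-to-one claim follow directly from \cref{lem:realized_energy_decomposition}. That lemma is proved by Cauchy--Schwarz applied to the realized vectors, so it needs no independence between $\tilde A_0$ and the directions $U_A^s,U_B^s$.

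I expect no real analytic obstacle. The conceptual subtlety is the one flagged in the paper: the directions may depend on $A_0$, so we cannot condition on them and average over $\tilde A_0$. I would therefore stress that every step is either an algebraic identity or a substitution of the assumed realized scales. That reliance on realized scales is exactly why the cross term is controlled only through the deterministic bound $|E_X^t|\le2\sqrt{E_N^tE_S^t}$ rather than shown to vanish in expectation.
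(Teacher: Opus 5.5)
Your proposal is correct and follows the paper's proof essentially step for step. It reads off $T_{\text{noise}}^t=-\lambda_r\beta\eta_B C_tu$ and $T_{\text{signal}}^t=\lambda_r\eta_A\eta_B C_tv_t$ from the unrolled updates, substitutes the assumed projection and contraction scales into $\rho_n(\cdot)$ and squares, and then invokes the realized energy-decomposition lemma for the deterministic sandwich bounds and the dominance conclusion.
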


\begin{proof}
    The unrolled updates give $B_t=-\eta_BC_t$ and $A_t\underline Z=\beta u-\eta_Av_t$. Thus the feature $Z_B^t=(\alpha/r)B_tA_t\underline Z$ has the exact components
    \[
    T_{\mathrm{noise}}^t=-\lambda_r\beta\eta_BC_tu,\qquad
    T_{\mathrm{signal}}^t=\lambda_r\eta_A\eta_BC_tv_t.
    \]
    Using \cref{eq:input_accumulation_scales,eq:rank_contraction_scales} yields
    \begin{align*}
    \sqrt{E_N^t}
    &=\lambda_r\beta\eta_B\rho_n(C_tu)
      =\Theta(\eta_B\beta m_N\sigma_A\sqrt n\,t),\\
    \sqrt{E_S^t}
    &=\lambda_r\eta_A\eta_B\rho_n(C_tv_t)
      =\Theta(\eta_A\eta_Bm_Snt^2).
    \end{align*}
    Squaring gives the RMS component estimates without requiring equal output-coordinate magnitudes. The deterministic bounds and dominance conclusions follow from \cref{lem:realized_energy_decomposition}.
\end{proof}

\paragraph{Scope of the assumptions.}
For fixed $\underline Z$, Gaussian initialization gives independent $u_i\sim\Ncal(0,\sigma_A^2\|\underline Z\|_2^2)$ and $\Ebb[\rho_r(u)^2]=\sigma_A^2\|\underline Z\|_2^2$. This motivates \cref{eq:input_accumulation_scales} but supplies no realized lower bound. The assumption $\rho_r(v_t)=\Theta(nt)$ requires temporal non-cancellation beyond per-step alignment, while the factor $t$ in \cref{eq:rank_contraction_scales} separately requires $B$-accumulation. We analyze only steps satisfying these bounds. With $B_0=0$ and zero optimizer state, $U_A^0=0$ and $v_1=0$, so the signal vanishes at the first step; the exact decomposition and deterministic bounds still hold.

\paragraph{Rank and width consequences.}
The conditional energy ratio is
\[
\frac{E_N^t}{E_S^t}
=\Theta\!\left(\frac{\beta^2\sigma_A^2}{\eta_A^2nt^2}\,r^{2(\gamma_N-\gamma_S)}\right).
\]
The rank factors cancel only if $\gamma_N=\gamma_S$. Taking both exponents equal to $1$ recovers the coherent benchmark; taking both equal to $1/2$ gives CLT-like rank scaling under the same temporal assumptions. For fixed $r$ and $\lambda_r=\Theta(1)$, both $m_N,m_S=\Theta(1)$, giving
\[
E_N^t=\Theta(\eta_B^2\beta^2\sigma_A^2nt^2),\qquad
E_S^t=\Theta((\eta_A\eta_Bnt^2)^2).
\]
For $\sigma_A^2=\Theta(n^{-p})$, the noise estimate becomes $\Theta(\eta_B^2\beta^2m_N^2n^{1-p}t^2)$. Thus changing $p=1$ to $p=2$ supplies the extra $1/n$ factor under matched contraction scales, without requiring LLN-like rank summation or equality of the noise and signal exponents. This comparison preserves the signal's scaling order under matched assumptions; it does not equate the optimizer trajectories across initializations.

We next record the two terms needed for the finite-time efficiency proof. Recall the decomposition:
\begin{equation*}
    \frac{r}{\alpha}\Delta Z_B^t = \underbrace{B_t\Delta Z_A^t}_{\delta_1^t} + \underbrace{\Delta B_t Z_A^t}_{\delta_2^t} + \Delta B_t\Delta Z_A^t.
\end{equation*}
Since $B_0=0$, the first-step contribution $B_0\Delta Z_A^0$ vanishes. For standard gradient-based updates with zero optimizer state, $U_A^0=0$ as well. For later steps $t\ge1$, the two terms have the following forms:
\begin{lemma}
\label{lemma:delta_expression}
    For $t \ge 1$, if the LoRA weights are updated by
    \[
        A_{t+1} = A_t - \eta_A U_A^t, \qquad B_{t+1} = B_t - \eta_B U_B^t,
    \]
    then
    \begin{equation*}
    \begin{cases}
        \delta_1^t = B_{t}\Delta Z_A^t = \eta_A \eta_B \left( \sum_{j=0}^{t-1} U_B^j \right) U_A^{t} \underline{Z}, \\
        \delta_2^t = \Delta B_t Z_A^t
        = -\eta_B U_B^{t} A_0 \underline{Z} + \eta_A \eta_B\, U_B^{t}\left(\sum_{i=0}^{t-1} U_A^i \right) \underline{Z}.
    \end{cases}
    \end{equation*}
\end{lemma}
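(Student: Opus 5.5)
The plan is to prove \cref{lemma:delta_expression} by unrolling the two update recursions into closed forms for $A_t$ and $B_t$. Substituting these into $\delta_1^t$ and $\delta_2^t$ then gives the formulas directly. This is the same computation as in the proof of \cref{lemma:decomps_2}; the only difference is that here we keep the two interaction terms separate rather than summing them into $S_t$.

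First, induct on $t$ using $A_{s+1}=A_s-\eta_AU_A^s$ and $B_{s+1}=B_s-\eta_BU_B^s$. This gives
\[
A_t=A_0-\eta_A\sum_{i=0}^{t-1}U_A^i,\qquad
B_t=B_0-\eta_B\sum_{j=0}^{t-1}U_B^j=-\eta_B\sum_{j=0}^{t-1}U_B^j .
\]
The second identity uses the standing convention $B_0=0$. The base case $t=0$ is trivial, since the sums are empty. The inductive step just adds one more increment.

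Next, record the definitions $\Delta Z_A^t=(A_{t+1}-A_t)\underline Z=-\eta_AU_A^t\underline Z$, $\Delta B_t=-\eta_BU_B^t$, and $Z_A^t=A_t\underline Z$.

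For $\delta_1^t=B_t\Delta Z_A^t$, multiply the closed form of $B_t$ by $-\eta_AU_A^t\underline Z$. The two minus signs cancel, leaving $\eta_A\eta_B(\sum_{j<t}U_B^j)U_A^t\underline Z$.

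For $\delta_2^t=\Delta B_tZ_A^t$, multiply $-\eta_BU_B^t$ by $(A_0-\eta_A\sum_{i<t}U_A^i)\underline Z$ and distribute. This produces the initialization term $-\eta_BU_B^tA_0\underline Z$ plus the signal term $\eta_A\eta_BU_B^t(\sum_{i<t}U_A^i)\underline Z$.

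No step here is a real obstacle; the argument is purely algebraic and holds for any realized update directions. The main care point is sign bookkeeping. A second is to state the $B_0=0$ assumption explicitly, because the lemma's hypotheses list only the update rule and $B_0=0$ is inherited from the standing setup. Third, the restriction $t\ge1$ matters only for interpretation: at $t=0$ the formulas still hold with empty sums, giving $\delta_1^0=0$.
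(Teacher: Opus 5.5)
Your proposal is correct and follows the paper's proof: the paper cites the identities in \cref{eq:lemma3_proved,eq:lemma3_proved2} from the proof of \cref{lemma:decomps_2}, which come from unrolling the recursions with $B_0=0$ and substituting, exactly as you do. Stating the inherited assumption $B_0=0$ explicitly is a worthwhile clarification, since the lemma statement does not include it.
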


Only $\delta_2^t$ explicitly contains $A_0$; both can depend on it through optimizer directions.

\begin{proof}
This follows directly from the identities established in \cref{eq:lemma3_proved,eq:lemma3_proved2}.
\end{proof}

Now, we prove \cref{theorem:initAA_feature_learning}. RMS bounds alone do not give coordinatewise lower bounds or control a one-step contraction, so these are stated explicitly.
\begin{theorem}[Finite-time stability and efficiency of \initAA]
    Fix $r=\Theta(1)$ and a finite step $t\ge2$. Let
    \[
    \lambda=\alpha/r=\Theta(1),\qquad
    \beta=\Theta(1),\qquad
    \eta_A=\Theta(n^{-1}),\qquad
    \eta_B=\Theta(1),
    \]
    and define
    \[
    C_t=\sum_{s<t}U_B^s,\qquad
    u=\tilde A_0\underline Z,\qquad
    v_t=\Big(\sum_{s<t}U_A^s\Big)\underline Z.
    \]
    Assume coordinate-bounded optimizer directions and, uniformly in output coordinate $k$,
    \[
    |(C_tv_t)_k|=\Theta(n),\qquad
    |(C_tU_A^t\underline Z)_k|=\Theta(n),\qquad
    |(U_B^tv_t)_k|=\Theta(n).
    \]
    Assume also the initialization projection upper bound from \cref{thm:general_rank_components},
    \[
    \|u\|_2=O(\sigma_A\sqrt n),
    \]
    at fixed rank. Then under \initAA, for which $\sigma_A^2=\Theta(n^{-2})$, every coordinate of $Z_B^t$, $\delta_1^t$, and $\delta_2^t$ has magnitude $\Theta(1)$, and $E_{\mathrm{total}}^t=\Theta(1)$. Hence \initAA satisfies both stability and efficiency in \cref{cond:hayou_criteria} at the analyzed step.

    Under standard \initA, for which $\sigma_A^2=\Theta(n^{-1})$, the corresponding quantities satisfy
    \[
    |(Z_B^t)_k|=O(1),\qquad
    |(\delta_1^t)_k|=\Theta(1),\qquad
    |(\delta_2^t)_k|=O(1),
    \]
    and $E_{\mathrm{total}}^t=O(1)$. Lower bounds of $\Theta(1)$ for $Z_B^t$ and $\delta_2^t$ require non-cancellation.
\end{theorem}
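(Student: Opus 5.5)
We start from the exact identities already available: \cref{lemma:decomps_2} for $Z_B^t$ and \cref{lemma:delta_expression} for $\delta_1^t,\delta_2^t$. Writing $\lambda=\alpha/r$, these give
\begin{align*}
Z_B^t &= -\lambda\beta\eta_B\, C_t u + \lambda\eta_A\eta_B\, C_t v_t,\\
\delta_1^t &= \eta_A\eta_B\, C_t U_A^t\underline Z,\\
\delta_2^t &= -\eta_B\beta\, U_B^t u + \eta_A\eta_B\, U_B^t v_t .
\end{align*}
In each quantity there is a \emph{signal} part that is free of $u$ and an \emph{initialization} part that is linear in $u$. The proof bounds the two parts coordinatewise and then combines them with the $\gamma$-calculus addition rule.

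\textbf{Signal parts.} With $\lambda,\beta,\eta_B=\Theta(1)$ and $\eta_A=\Theta(n^{-1})$, the assumed coordinate magnitudes $|(C_tv_t)_k|=\Theta(n)$, $|(C_tU_A^t\underline Z)_k|=\Theta(n)$ and $|(U_B^tv_t)_k|=\Theta(n)$ give signal coordinates of size $\Theta(n^{-1}\cdot n)=\Theta(1)$. This holds uniformly in $k$ for $Z_B^t$, $\delta_1^t$ and $\delta_2^t$. Since $\delta_1^t$ has no $u$ term, $|(\delta_1^t)_k|=\Theta(1)$ follows immediately, under either initialization.

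\textbf{Initialization parts.} We need coordinatewise bounds on $C_tu$ and $U_B^tu$, not just RMS bounds. Because $r=\Theta(1)$ and $u\in\Rbb^r$, we use Cauchy--Schwarz row by row: $|(C_tu)_k|\le \|(C_t)_{k,:}\|_2\|u\|_2$. Since $t$ is finite and the directions are coordinate-bounded, $\|(C_t)_{k,:}\|_2\le t\sqrt r\max_s\|U_B^s\|_{\max}=O(1)$, and the same argument applies to $U_B^t$. Hence every initialization coordinate is $O(\|u\|_2)=O(\sigma_A\sqrt n)$, which is $O(n^{-1/2})=o(1)$ under \initAA and $O(1)$ under \initA.

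\textbf{Combining.} Under \initAA each coordinate is a $\Theta(1)$ signal plus an $o(1)$ perturbation. By the addition rule with differing exponents, every coordinate of $Z_B^t$, $\delta_1^t$ and $\delta_2^t$ is therefore $\Theta(1)$. Averaging squares of $\Theta(1)$ coordinates gives $E_{\mathrm{total}}^t=\Theta(1)$, which is stability and efficiency in \cref{cond:hayou_criteria}. Under \initA both parts are only $O(1)$, so the triangle inequality yields $|(Z_B^t)_k|=O(1)$, $|(\delta_2^t)_k|=O(1)$ and $E_{\mathrm{total}}^t=O(1)$; alternatively \cref{lem:realized_energy_decomposition} gives the energy bound directly. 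No lower bound follows here, because a $\Theta(1)$ noise coordinate can cancel the $\Theta(1)$ signal coordinate. That is exactly why the non-cancellation caveat is needed.

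\textbf{Main obstacle.} The hardest step is passing from the RMS-type bound $\|u\|_2=O(\sigma_A\sqrt n)$ to \emph{uniform coordinatewise} control of $C_tu$ and $U_B^tu$. This control must hold even though $U_B^s$ may depend on $A_0$, so no independence argument is available. The fixed-rank, finite-$t$ Cauchy--Schwarz bound above avoids any independence assumption. It is also why the theorem is restricted to $r=\Theta(1)$ and finite $t$: without those restrictions the row norms of $C_t$ would grow.
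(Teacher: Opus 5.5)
Your proposal is correct and follows essentially the same route as the paper's proof. It uses the same signal/initialization split from \cref{lemma:decomps_2,lemma:delta_expression}, the same fixed-$r$, finite-$t$ row-norm bound giving $|(C_tu)_k|,|(U_B^tu)_k|=O(\|u\|_2)=O(\sigma_A\sqrt n)$ without any independence assumption, and the same ``$\Theta(1)$ signal plus $o(1)$ (resp.\ $O(1)$) perturbation'' combination, with the non-cancellation caveat under \initA. One small precision point: the $\Theta(1)$ lower bound under \initAA comes from the reverse triangle inequality $|a+b|\ge|a|-|b|$, not from the $\gamma$-calculus addition rule, because the $o(1)$ initialization term need not have a well-defined exponent.
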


\begin{proof}
From \cref{lemma:decomps_2,lemma:delta_expression},
\[
Z_B^t=-\lambda\beta\eta_B C_tu+\lambda\eta_A\eta_B C_tv_t,
\]
and
\[
\delta_1^t=\eta_A\eta_B C_tU_A^t\underline Z,\qquad
\delta_2^t=-\beta\eta_B U_B^tu+\eta_A\eta_B U_B^tv_t.
\]
Thus $Z_B^t$ and $\delta_2^t$ each consist of an initialization-dependent term plus a learned signal term, whereas $\delta_1^t$ contains only the learned signal.

\paragraph{Initialization-dependent terms.}
Because $r$ and $t$ are fixed and the optimizer directions are coordinate-bounded, every row of $C_t$ and $U_B^t$ has $O(1)$ Euclidean norm. Therefore
\[
|(C_tu)_k|,\ |(U_B^tu)_k|=O(\|u\|_2)=O(\sigma_A\sqrt n).
\]
For standard \initA, $\sigma_A=\Theta(n^{-1/2})$, so these terms are $O(1)$. For \initAA, $\sigma_A=\Theta(n^{-1})$, so they are $O(n^{-1/2})=o(1)$. Multiplication by the $\Theta(1)$ factors $\lambda$, $\beta$, and $\eta_B$ preserves these bounds.

\paragraph{Learned signal terms.}
By assumption,
\[
|(C_tv_t)_k|,\ |(C_tU_A^t\underline Z)_k|,\ |(U_B^tv_t)_k|=\Theta(n).
\]
Since $\eta_A\eta_B=\Theta(n^{-1})$ and $\lambda=\Theta(1)$, each corresponding signal contribution has magnitude $\Theta(1)$.

\paragraph{Combining the two components.}
Under \initAA, each coordinate of $Z_B^t$ and $\delta_2^t$ is therefore a signal of magnitude $\Theta(1)$ plus an $o(1)$ initialization term, and hence retains magnitude $\Theta(1)$. The same is already true for $\delta_1^t$. Thus the stability and efficiency conditions hold coordinatewise at the analyzed step. Averaging the squared coordinates gives
\[
E_{\mathrm{total}}^t=\Theta(1).
\]
The component bounds also give $E_N^t=O(n^{-1})$ and $E_S^t=\Theta(1)$, so \cref{lem:realized_energy_decomposition} yields $E_{\mathrm{total}}^t/E_S^t\to1$. Under the additional projection and noise-contraction lower bounds of \cref{thm:general_rank_components}, the noise estimate sharpens to $E_N^t=\Theta(n^{-1})$.

Under standard \initA, both the signal and initialization-dependent contributions can be $O(1)$. Their sum is therefore $O(1)$, which proves non-explosion, but the two terms may cancel. Consequently, a $\Theta(1)$ lower bound for $Z_B^t$ or $\delta_2^t$ requires an additional non-cancellation condition. The term $\delta_1^t$ contains no explicit initialization component and is already $\Theta(1)$ by assumption.
\end{proof}

\subsection{Adam and AdamW as Normalized LoRA Updates}
\label{app:adam_extension}

The signal--noise decomposition in \cref{lemma:decomps_2} is algebraic once the optimizer supplies update directions $U_A^t$ and $U_B^t$. The role of this section is narrower: we explain when common normalized optimizers produce the two optimizer-facing properties used in \cref{ass:normalized_updates},
\[
\|U_A^t\|_{\max},\|U_B^t\|_{\max}=O(1),
\qquad
|(U_A^t)_{i,:}\underline Z|=\Theta(n)
\quad \text{for each } i\in[r].
\]
The first property is coordinate boundedness. The second is the gradient--input alignment condition for the $A$ branch after its initial zero update. This section does not prove the temporal non-cancellation or rank-contraction assumptions in \cref{eq:input_accumulation_scales,eq:rank_contraction_scales}; these remain separate conditions in \cref{theorem:lora_feature_beta}.

Throughout this section, write
\[
\lambda:=\frac{\alpha}{r}
\]
for the LoRA update multiplier. The LoRA layer output is
\[
\bar Z=W\underline Z+\lambda BA\underline Z.
\]
We consider a local analysis of a single example or token, in which the input $\underline Z\in\Rbb^n$ to the LoRA layer is fixed over the optimizer window. This is the same local viewpoint used in prior LoRA feature-learning analyses \citep{hayou2024lora+}. The full-network setting is more complicated because $\underline Z$ evolves with depth and training time; our alignment claim is conditional on this fixed single-example input. Minibatch training is assessed empirically in \cref{app:empirical_width_alignment}.

\paragraph{Fixed-input chain-rule structure.}
Let $d\bar Z_t:=\partial \Lcal_t/\partial \bar Z_t$. The gradients of the LoRA matrices are
\[
\nabla_B\Lcal_t=\lambda\, d\bar Z_t (Z_A^t)^\top,
\qquad
\nabla_A\Lcal_t=\lambda\, B_t^\top d\bar Z_t\,\underline Z^\top.
\]
Equivalently, for each row $i\in[r]$,
\[
(\nabla_A\Lcal_t)_{ij}=s_i^t\underline Z_j,
\qquad
s_i^t:=\lambda(B_t^\top d\bar Z_t)_i .
\]
Thus each row of the $A$-gradient is a scalar backpropagated signal times the same input vector $\underline Z$. This outer-product structure is the source of the $\Theta(n)$ alignment. It is not a generic property of arbitrary update directions.

Since standard LoRA initializes $B_0=0$, we have $s_i^0=0$, so the first $A$-update is zero. The non-degeneracy statements below apply after $B$ has received a nonzero update, or more generally at any step for which the scalar signal history is nonzero.

\paragraph{Input small-ball condition.}
We assume the fixed input has coordinate size $O(1)$ and non-vanishing mass on order-one coordinates. Concretely, there exist constants $p,z_-,z_+>0$, independent of width $n$, such that
\[
|\{j:z_-\le |\underline Z_j|\le z_+\}|\ge pn,
\qquad
\max_j|\underline Z_j|\le z_+ .
\]
This implies
\[
\|\underline Z\|_1=\Theta(n),
\qquad
\|\underline Z\|_2^2=\Theta(n).
\]
This allows some coordinates to vanish, provided a fixed fraction remain order one.

\subsubsection{SignSGD Alignment}

We first record the simplest case, since it explains why there is no coordinate-wise cancellation in the fixed-input SignSGD model.

\begin{lemma}[SignSGD gives exact input alignment]
\label{lem:signsgd_alignment}
Suppose the row gradient has the fixed-input form
\[
(\nabla_A\Lcal_t)_{ij}=s_i^t\underline Z_j.
\]
Let the optimizer direction be
\[
(U_A^t)_{ij}=\operatorname{sign}((\nabla_A\Lcal_t)_{ij}).
\]
If $s_i^t\neq0$ and $\underline Z$ satisfies the input small-ball condition, then
\[
(U_A^t)_{i,:}\underline Z
=\operatorname{sign}(s_i^t)\sum_{j=1}^n|\underline Z_j|
=\Theta(n).
\]
\end{lemma}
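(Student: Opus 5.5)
The argument is a direct coordinatewise computation; nothing from the scaling theorems is needed. Fix a row $i$ with $s_i^t\neq0$. The $(i,j)$ entry of the direction is $(U_A^t)_{ij}=\operatorname{sign}(s_i^t\underline Z_j)$. Since the sign is multiplicative on nonzero reals and $\operatorname{sign}(0)=0$, we can write $\operatorname{sign}(s_i^t\underline Z_j)=\operatorname{sign}(s_i^t)\operatorname{sign}(\underline Z_j)$ for every $j$; the coordinates with $\underline Z_j=0$ contribute zero on both sides. Then
\[
(U_A^t)_{i,:}\underline Z=\sum_{j=1}^n \operatorname{sign}(s_i^t)\operatorname{sign}(\underline Z_j)\underline Z_j=\operatorname{sign}(s_i^t)\sum_{j=1}^n|\underline Z_j|=\operatorname{sign}(s_i^t)\|\underline Z\|_1,
\]
which uses the identity $\operatorname{sign}(x)x=|x|$. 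This exact identity is the key point: every term in the sum has the same sign, so no cancellation can occur, in contrast with the generic CLT-like behaviour described in \cref{app:two_scaling_laws}.

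It remains to show that $\|\underline Z\|_1=\Theta(n)$ follows from the input small-ball condition.

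For the upper bound, $\max_j|\underline Z_j|\le z_+$ gives $\|\underline Z\|_1\le z_+ n$.

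For the lower bound, restrict the sum to the set $J=\{j:z_-\le|\underline Z_j|\le z_+\}$. Since $|J|\ge pn$, this gives $\|\underline Z\|_1\ge z_- p n$.

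Both constants are independent of $n$, so $|(U_A^t)_{i,:}\underline Z|=\|\underline Z\|_1=\Theta(n)$, as claimed. The sign factor $\operatorname{sign}(s_i^t)\in\{\pm1\}$ does not affect the magnitude.

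No step is a real obstacle. The only point that needs care is to avoid needing any condition on $s_i^t$ beyond $s_i^t\ne0$; the hypothesis guarantees exactly this, and it is why the lemma applies only after $B$ has moved away from $B_0=0$. If the statement were required to hold uniformly over $i\in[r]$, it would suffice to note that the bounds $z_-pn\le\cdot\le z_+n$ do not depend on $i$ or $t$.
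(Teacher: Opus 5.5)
Your proof is correct and follows the paper's own argument essentially step for step. Like the paper, you factor $\operatorname{sign}(s_i^t\underline Z_j)=\operatorname{sign}(s_i^t)\operatorname{sign}(\underline Z_j)$, use $\operatorname{sign}(x)x=|x|$ to reduce the product to $\operatorname{sign}(s_i^t)\|\underline Z\|_1$, and bound $\|\underline Z\|_1$ between $pnz_-$ and $nz_+$ using the small-ball condition; you additionally make explicit the harmless case $\underline Z_j=0$, which the paper leaves implicit.
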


\begin{proof}
By the row-wise outer-product structure,
\[
(U_A^t)_{ij}
=\operatorname{sign}(s_i^t\underline Z_j)
=\operatorname{sign}(s_i^t)\operatorname{sign}(\underline Z_j).
\]
Therefore
\[
(U_A^t)_{i,:}\underline Z
=\sum_{j=1}^n\operatorname{sign}(s_i^t)\operatorname{sign}(\underline Z_j)\underline Z_j
=\operatorname{sign}(s_i^t)\sum_{j=1}^n|\underline Z_j|.
\]
The small-ball condition gives $\sum_{j=1}^n|\underline Z_j|\ge pnz_-$, and the coordinate upper bound gives $\sum_{j=1}^n|\underline Z_j|\le nz_+$. Hence the sum is $\Theta(n)$.
\end{proof}

\subsubsection{Adam Coordinate Boundedness}

At the zero-based LoRA step $t\ge0$, Adam forms moments from $g_0,\ldots,g_t$, with $m_{-1}=v_{-1}=0$:
\[
m_t=(1-\beta_1)\sum_{k=0}^{t}\beta_1^k g_{t-k},\qquad
v_t=(1-\beta_2)\sum_{k=0}^{t}\beta_2^k g_{t-k}^2,
\]
with bias-corrected moments
\[
\widehat m_t=\frac{m_t}{1-\beta_1^{t+1}},\qquad
\widehat v_t=\frac{v_t}{1-\beta_2^{t+1}},
\]
and coordinate update
\[
U_t=\frac{\widehat m_t}{\sqrt{\widehat v_t}+\epsilon}.
\]
Adam hyperparameters $(\beta_1,\beta_2,\epsilon)$ are treated as width-independent constants, with $\epsilon>0$. In the zero-EMA case $\beta_1=\beta_2=0$, the update is $g_t/(|g_t|+\epsilon)$, which approaches $\operatorname{sign}(g_t)$ as $\epsilon\to0$ for $g_t\ne0$ \citep{balles2018dissecting,bernstein2024old}. For the following bound, assume $0\le\beta_1<1$ and $\beta_1^2<\beta_2<1$.

\begin{proposition}[Width-uniform Adam coordinate bound]
\label{prop:adam_coordinate_bound}
\label{prop:adam_coord_bound}
For any scalar gradient history $\{g_0,\ldots,g_t\}$, if $0\le\beta_1<1$ and $\beta_1^2<\beta_2<1$, then
\[
\frac{|\widehat m_t|}{\sqrt{\widehat v_t}+\epsilon}
\le C_{\beta_1,\beta_2}
:=\frac{1}{\sqrt{(1-\beta_2)(1-\beta_1^2/\beta_2)}}
\]
for every $t\ge0$ and $\epsilon>0$. Thus Adam coordinate updates are $O(1)$ uniformly in width.
\end{proposition}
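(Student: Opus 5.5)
The plan is to bound the numerator $|\widehat m_t|$ by the weighted $\ell_2$ mass that $\widehat v_t$ already controls. Cauchy--Schwarz, with weights matched to the second-moment EMA, does this. The $\epsilon$ and the bias corrections are then handled by elementary monotonicity.

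First I drop $\epsilon$. Since $\epsilon>0$, we have $|\widehat m_t|/(\sqrt{\widehat v_t}+\epsilon)\le|\widehat m_t|/\sqrt{\widehat v_t}$ whenever $\widehat v_t>0$. If $\widehat v_t=0$, then all $g_{t-k}=0$, so $\widehat m_t=0$ and the bound is trivial. Next I rewrite each term of $m_t$ as
\[
\beta_1^k g_{t-k}=\bigl(\beta_1^k\beta_2^{-k/2}\bigr)\bigl(\beta_2^{k/2}g_{t-k}\bigr)
\]
and apply Cauchy--Schwarz:
\[
|m_t|\le(1-\beta_1)\Big(\sum_{k=0}^t(\beta_1^2/\beta_2)^k\Big)^{1/2}\Big(\sum_{k=0}^t\beta_2^kg_{t-k}^2\Big)^{1/2}.
\]
The second factor equals $\sqrt{v_t/(1-\beta_2)}$. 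Because $\beta_1^2<\beta_2$, the geometric sum is at most $1/(1-\beta_1^2/\beta_2)$, uniformly in $t$. Hence
\[
\frac{|m_t|}{\sqrt{v_t}}\le\frac{1-\beta_1}{\sqrt{(1-\beta_2)(1-\beta_1^2/\beta_2)}}.
\]

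Then I restore the bias corrections:
\[
\frac{|\widehat m_t|}{\sqrt{\widehat v_t}}=\frac{|m_t|}{\sqrt{v_t}}\cdot\frac{\sqrt{1-\beta_2^{t+1}}}{1-\beta_1^{t+1}}.
\]
I need $(1-\beta_1)\sqrt{1-\beta_2^{t+1}}/(1-\beta_1^{t+1})\le1$. The bound $\sqrt{1-\beta_2^{t+1}}\le1$ is immediate, but $(1-\beta_1)/(1-\beta_1^{t+1})\le1$ also holds since $\beta_1^{t+1}\le\beta_1$. Together they give exactly the constant $C_{\beta_1,\beta_2}$. The constant depends only on $(\beta_1,\beta_2)$ and never on the gradient magnitudes or on $n$, so it applies coordinatewise to every entry of $U_A^t$ and $U_B^t$ and gives width-uniform $O(1)$ coordinates.

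The main obstacle is choosing the Cauchy--Schwarz split so that the weights on $g^2$ match the $v_t$ EMA exactly. This choice is where the condition $\beta_1^2<\beta_2$ enters, since it makes the companion geometric series summable independently of $t$. The rest, namely the $\epsilon$ and bias-correction bookkeeping, is monotonicity.
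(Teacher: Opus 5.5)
Your proof is correct and follows the paper's argument. It uses the same Cauchy--Schwarz split, pairing the weights $(\beta_1^2/\beta_2)^k$ against the $v_t$ weights $\beta_2^k g_{t-k}^2$; the same $t$-uniform geometric-series bound from $\beta_1^2<\beta_2$; and the same bias-correction estimate $\sqrt{1-\beta_2^{t+1}}/(1-\beta_1^{t+1})\le 1/(1-\beta_1)$, which cancels the $(1-\beta_1)$ prefactor to give exactly $C_{\beta_1,\beta_2}$, with the degenerate case $\widehat v_t=0$ and the $\epsilon$ term handled as in the paper.
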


\begin{proof}
If $v_t=0$, then all gradients in the EMA window are zero, $m_t=0$, and the update is zero for $\epsilon>0$. Otherwise, Cauchy--Schwarz gives
\begin{align*}
m_t^2
&=(1-\beta_1)^2
\left(\sum_{k=0}^{t}\beta_1^k g_{t-k}\right)^2 \\
&\le
(1-\beta_1)^2
\left(\sum_{k=0}^{t}\left(\frac{\beta_1^2}{\beta_2}\right)^k\right)
\left(\sum_{k=0}^{t}\beta_2^k g_{t-k}^2\right)\\
&=
\frac{(1-\beta_1)^2}{1-\beta_2}
\left(\sum_{k=0}^{t}\left(\frac{\beta_1^2}{\beta_2}\right)^k\right)v_t.
\end{align*}
Hence
\[
\frac{|m_t|}{\sqrt{v_t}}
\le
\frac{1-\beta_1}{\sqrt{1-\beta_2}}
\left(\sum_{k=0}^{t}\left(\frac{\beta_1^2}{\beta_2}\right)^k\right)^{1/2}
\le
\frac{1-\beta_1}{\sqrt{1-\beta_2}}\frac{1}{\sqrt{1-\beta_1^2/\beta_2}},
\]
where the last inequality uses $\beta_1^2/\beta_2<1$. The bias-correction multiplier satisfies
\[
\frac{\sqrt{1-\beta_2^{t+1}}}{1-\beta_1^{t+1}}\le\frac{1}{1-\beta_1}
\qquad(t\ge0).
\]
Combining these bounds with $\sqrt{\widehat v_t}+\epsilon\ge\sqrt{\widehat v_t}$ gives a constant independent of width and time.
\end{proof}

\subsubsection{Adam as Soft-Sign Gradient--Input Alignment}

The coordinate bound above shows that Adam directions are $O(1)$, but \cref{ass:normalized_updates} also requires $|(U_A^t)_{i,:}\underline Z|=\Theta(n)$. We establish this under scalar EMA non-cancellation and a bound on the relative size of $\epsilon$.

For a fixed row $i$, suppose the $A$-gradient history has the form
\[
(\nabla_A\Lcal_{t-k})_{ij}=s_i^{t-k}\underline Z_j,
\qquad k=0,\ldots,t.
\]
Define the bias-corrected scalar EMA moments
\[
\bar s_i^t:=
\frac{\sum_{k=0}^{t}\beta_1^k s_i^{t-k}}
{\sum_{k=0}^{t}\beta_1^k},
\qquad
\bar q_i^t:=
\frac{\sum_{k=0}^{t}\beta_2^k (s_i^{t-k})^2}
{\sum_{k=0}^{t}\beta_2^k}.
\]
For $\bar q_i^t>0$, define the Adam signal-to-RMS ratio and epsilon ratio
\[
\kappa_i^t:=
\frac{\bar s_i^t}{\sqrt{\bar q_i^t}},
\qquad
\delta_i^t:=
\frac{\epsilon}{\sqrt{\bar q_i^t}}.
\]
The condition $|\kappa_i^t|=\Theta(1)$ means that the first-moment EMA does not cancel relative to the second-moment RMS. The condition $\delta_i^t=O(1)$ prevents $\epsilon$ from introducing an additional width-dependent suppression on order-one input coordinates. If $\bar q_i^t=0$, the row update is zero and no alignment lower bound holds.

\begin{lemma}[Adam soft-sign alignment]
\label{lem:adam_softsign_alignment}
\label{lem:ema_noncancellation}
Suppose the fixed-input gradient history satisfies
\[
(\nabla_A\Lcal_{t-k})_{ij}=s_i^{t-k}\underline Z_j.
\]
Assume that for row $i$,
\[
0<c_\kappa
\le |\kappa_i^t|
\le C_\kappa<\infty,
\qquad
\delta_i^t\le C_\delta<\infty,
\]
with constants independent of width $n$. If $\underline Z$ satisfies the input small-ball condition, then the Adam update satisfies
\[
\left|(U_A^t)_{i,:}\underline Z\right|=\Theta(n).
\]
\end{lemma}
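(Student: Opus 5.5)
The plan is to make the row-$i$ Adam direction explicit as a soft-sign of the input coordinates, and then repeat the summation argument of \cref{lem:signsgd_alignment} with a width-independent weight in place of the exact sign. Under the fixed-input outer-product structure, each coordinate of the moments factorizes. The first moment is $m_{t,ij}=(1-\beta_1)\big(\sum_k\beta_1^k s_i^{t-k}\big)\underline Z_j$ and the second is $v_{t,ij}=(1-\beta_2)\big(\sum_k\beta_2^k (s_i^{t-k})^2\big)\underline Z_j^2$. The bias-correction denominators are exactly $(1-\beta_1)\sum_k\beta_1^k=1-\beta_1^{t+1}$ and the analogous quantity for $\beta_2$. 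Hence $\widehat m_{t,ij}=\bar s_i^t\underline Z_j$ and $\widehat v_{t,ij}=\bar q_i^t\underline Z_j^2$, so
\[
(U_A^t)_{ij}=\frac{\bar s_i^t\underline Z_j}{\sqrt{\bar q_i^t}\,|\underline Z_j|+\epsilon}
=\frac{\kappa_i^t\,\underline Z_j}{|\underline Z_j|+\delta_i^t}.
\]
The second form follows by dividing numerator and denominator by $\sqrt{\bar q_i^t}>0$. This is the soft-sign form.

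Next I contract with $\underline Z$:
\[
(U_A^t)_{i,:}\underline Z=\kappa_i^t\sum_{j=1}^n\frac{\underline Z_j^2}{|\underline Z_j|+\delta_i^t}.
\]
Every summand is nonnegative, so there is no cancellation across coordinates, and the sign is fixed by $\kappa_i^t$. For the upper bound, each summand is at most $|\underline Z_j|\le z_+$, giving $|(U_A^t)_{i,:}\underline Z|\le C_\kappa n z_+$. For the lower bound, I restrict to the at least $pn$ coordinates with $z_-\le|\underline Z_j|\le z_+$. On these, each summand is at least $z_-^2/(z_++C_\delta)$, since the map $x\mapsto x^2/(x+\delta)$ is increasing and the denominator is bounded using $\delta_i^t\le C_\delta$. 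This gives $|(U_A^t)_{i,:}\underline Z|\ge c_\kappa\, p n\, z_-^2/(z_++C_\delta)$. The two bounds together yield $\Theta(n)$, with constants depending only on $p,z_\pm,c_\kappa,C_\kappa,C_\delta$.

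The only delicate step is the factorization of the bias-corrected moments, in particular checking that the $\sqrt{\cdot}$ of the second moment produces $|\underline Z_j|$ rather than $\underline Z_j$, and that coordinates with $\underline Z_j=0$ are harmless. Those coordinates contribute zero update and zero summand, so they need no separate treatment. The hypotheses $|\kappa_i^t|\ge c_\kappa$ and $\delta_i^t\le C_\delta$ were designed exactly to supply the two width-independent constants in the lower bound. The upper bound could alternatively be read off from \cref{prop:adam_coordinate_bound}, but the direct bound via $C_\kappa$ is cleaner.
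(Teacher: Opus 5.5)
Your proposal is correct and follows essentially the same route as the paper. Like the paper, you factor the bias-corrected moments as $\bar s_i^t\underline Z_j$ and $\bar q_i^t\underline Z_j^2$ to get the soft-sign form $\kappa_i^t\underline Z_j/(|\underline Z_j|+\delta_i^t)$, then bound the nonnegative sum $\sum_j \underline Z_j^2/(|\underline Z_j|+\delta_i^t)$ above by $nz_+$ and below by $pn\,z_-^2/(z_++C_\delta)$ on the small-ball coordinates; your identity $(1-\beta_1)\sum_{k=0}^{t}\beta_1^k=1-\beta_1^{t+1}$ just makes explicit the factorization the paper states without computation.
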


\begin{proof}
Because the gradient factors as $s_i^{t-k}\underline Z_j$, the Adam moments also factor coordinate-wise:
\[
\widehat m_{ij}^t=\bar s_i^t\underline Z_j,
\qquad
\widehat v_{ij}^t=\bar q_i^t\underline Z_j^2.
\]
Therefore
\[
(U_A^t)_{ij}
=
\frac{\bar s_i^t\underline Z_j}
{\sqrt{\bar q_i^t\underline Z_j^2}+\epsilon}
=
\frac{\bar s_i^t\underline Z_j}
{\sqrt{\bar q_i^t}|\underline Z_j|+\epsilon}
=
\kappa_i^t\frac{\underline Z_j}{|\underline Z_j|+\delta_i^t}.
\]
Taking the inner product with $\underline Z$,
\[
(U_A^t)_{i,:}\underline Z
=
\kappa_i^t
\sum_{j=1}^n
\frac{\underline Z_j^2}{|\underline Z_j|+\delta_i^t}.
\]
Every term in the sum is nonnegative. Hence there is no coordinate-wise sign cancellation. For the at least $pn$ coordinates satisfying $z_-\le|\underline Z_j|\le z_+$,
\[
\frac{\underline Z_j^2}{|\underline Z_j|+\delta_i^t}
\ge
\frac{z_-^2}{z_+ + C_\delta}.
\]
Thus
\[
\sum_{j=1}^n
\frac{\underline Z_j^2}{|\underline Z_j|+\delta_i^t}
\ge
pn\frac{z_-^2}{z_+ + C_\delta}
=\Omega(n).
\]
The coordinate upper bound gives the matching upper bound
\[
\sum_{j=1}^n
\frac{\underline Z_j^2}{|\underline Z_j|+\delta_i^t}
\le
\sum_{j=1}^n|\underline Z_j|
\le nz_+
=O(n).
\]
Multiplying by $|\kappa_i^t|=\Theta(1)$ proves the claim.
\end{proof}

\begin{corollary}[Adam verifies the LoRA $A$-branch non-degeneracy]
\label{cor:adam_a_branch_nondegeneracy}
Under the conditions of \cref{lem:adam_softsign_alignment}, Adam satisfies the row-wise $A$-branch requirement in \cref{ass:normalized_updates}:
\[
|(U_A^t)_{i,:}\underline Z|=\Theta(n)
\]
Together with \cref{prop:adam_coordinate_bound}, this verifies the Adam coordinate bound and $A$-branch input product. Temporal non-cancellation and rank contraction remain assumptions in \cref{theorem:lora_feature_beta}.
\end{corollary}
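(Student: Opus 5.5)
The corollary follows by assembling the two Adam results just proved. The row-wise alignment statement $|(U_A^t)_{i,:}\underline Z|=\Theta(n)$ is exactly the conclusion of \cref{lem:adam_softsign_alignment}. The coordinate bound required by \cref{ass:normalized_updates} is supplied by \cref{prop:adam_coordinate_bound}. No new estimate is needed; the work is to check that the hypotheses of the two results match the setting of \cref{ass:normalized_updates}, and to say which quantifiers are covered.

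\textbf{Step 1: the $A$-branch input product.}
\begin{itemize}
\item I will restrict to the analyzed steps after the initial $B$ update. Since $B_0=0$, the scalar history $s_i^0=0$. At later steps the relevant $s_i^{t-k}$ are generally nonzero, so $\bar q_i^t>0$ and the ratios $\kappa_i^t,\delta_i^t$ are well defined.
\item Under the fixed-input chain-rule structure, $(\nabla_A\Lcal_{t-k})_{ij}=s_i^{t-k}\underline Z_j$ for every $k\le t$. This is because $\underline Z$ is frozen over the optimizer window.
\item Together with the input small-ball condition and the assumed bounds $c_\kappa\le|\kappa_i^t|\le C_\kappa$ and $\delta_i^t\le C_\delta$, every hypothesis of \cref{lem:adam_softsign_alignment} holds for each row $i\in[r]$.
\item Since $r$ is finite and the constants $c_\kappa,C_\kappa,C_\delta,p,z_\pm$ are width-independent, taking the minimum and maximum over rows gives the $\Theta(n)$ bound uniformly in $i$. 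This is the second clause of \cref{ass:normalized_updates}.
\end{itemize}

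\textbf{Step 2: the coordinate bound.}
\begin{itemize}
\item I apply \cref{prop:adam_coordinate_bound} coordinatewise to the scalar gradient histories of every entry of $A$ and of $B$. This requires $0\le\beta_1<1$ and $\beta_1^2<\beta_2<1$, which holds for the standard choices.
\item The proposition holds for an arbitrary scalar history, so no structural assumption on the $B$-gradient is needed.
\item It gives $\|U_A^t\|_{\max},\|U_B^t\|_{\max}\le C_{\beta_1,\beta_2}$, a bound independent of $n$ and $t$.
\item AdamW has the same direction $U$, since decoupled weight decay is applied separately from the normalized direction. So the same bound holds for AdamW, provided the decay term is kept outside $U$ as in the normalized optimizer model.
\end{itemize}

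\textbf{Main obstacle: scope rather than computation.} The corollary must not be read as proving more than it does.
\begin{itemize}
\item The alignment is conditional on the EMA non-cancellation hypothesis $|\kappa_i^t|=\Theta(1)$ and on the single fixed input. Both are assumptions, not consequences.
\item The temporal accumulation bounds \cref{eq:input_accumulation_scales} and the rank-contraction bounds \cref{eq:rank_contraction_scales} are not implied. Per-step alignment does not prevent $v_t=\sum_{s<t}U_A^s\underline Z$ from cancelling across steps.
\end{itemize}
I will close by stating explicitly that those accumulated conditions remain separate hypotheses of \cref{theorem:lora_feature_beta}.
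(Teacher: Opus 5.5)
Your proposal is correct and matches the paper's proof, which simply invokes \cref{lem:adam_softsign_alignment} for the row-wise $\Theta(n)$ input product and \cref{prop:adam_coordinate_bound} for the width-uniform $O(1)$ coordinate bound. Your side remarks are consistent with the paper's scoping: the proposition needs $\beta_1^2<\beta_2$, positivity of $\bar q_i^t$ is already implied by the lemma's hypothesis $|\kappa_i^t|\ge c_\kappa$, and temporal non-cancellation and rank contraction are not implied.
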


\begin{proof}
Apply \cref{lem:adam_softsign_alignment,prop:adam_coordinate_bound}.
\end{proof}

The epsilon condition matters even for a sign-consistent history. Let $\underline Z_j=1$, $s_i^0=0$, and $s_i^k=1/n$ for $k\ge1$. At fixed $t\ge1$, $|\kappa_i^t|=\Theta(1)$ but $\delta_i^t=\Theta(\epsilon n)$, giving $|(U_A^t)_{i,:}\underline Z|=\Theta(n/(1+\epsilon n))=\Theta(1)$ for fixed $\epsilon>0$. Thus EMA non-cancellation does not ensure $\Theta(n)$ alignment.

\subsubsection{A Weighted-Angle View of Adam Non-Cancellation}

The Adam non-cancellation condition has a simple geometric interpretation. Let
\[
a_k=\beta_1^k,\qquad
b_k=\beta_2^k,\qquad
c_{1,t}=\sum_{k=0}^{t}a_k,\qquad
c_{2,t}=\sum_{k=0}^{t}b_k.
\]
For a fixed row $i$, write the scalar signal history as
\[
s_i=(s_i^{t-k})_{k=0}^{t}\in\Rbb^{t+1}.
\]
Then
\[
\kappa_i^t
=
\frac{\bar s_i^t}{\sqrt{\bar q_i^t}}
=
\frac{\sqrt{c_{2,t}}}{c_{1,t}}
\frac{\sum_{k=0}^{t}a_k s_i^{t-k}}
{\sqrt{\sum_{k=0}^{t}b_k(s_i^{t-k})^2}}.
\]
Let $D_b=\operatorname{diag}(\sqrt{b_0},\ldots,\sqrt{b_t})$. Then
\[
\sum_{k=0}^{t}a_k s_i^{t-k}
=
\langle D_b^{-1}a,D_b s_i\rangle,
\]
and
\[
\sqrt{\sum_{k=0}^{t}b_k(s_i^{t-k})^2}
=
\|D_b s_i\|_2.
\]
Thus
\[
\kappa_i^t
=
\frac{\sqrt{c_{2,t}}}{c_{1,t}}
\|D_b^{-1}a\|_2
\cos\theta_i^t,
\]
where $\theta_i^t$ is the angle between the Adam first-moment kernel $D_b^{-1}a$ and the RMS-weighted scalar signal history $D_b s_i$.

The condition $\beta_1^2<\beta_2$ guarantees
\[
\|D_b^{-1}a\|_2^2
=
\sum_{k=0}^{t}
\frac{\beta_1^{2k}}{\beta_2^k}
=
\sum_{k=0}^{t}
\left(\frac{\beta_1^2}{\beta_2}\right)^k
=\Theta(1).
\]
Since $1\le c_{1,t}\le(1-\beta_1)^{-1}$ and $1\le c_{2,t}\le(1-\beta_2)^{-1}$, the prefactor of $\cos\theta_i^t$ is bounded above and away from zero uniformly in $t$. For nonzero histories, $|\kappa_i^t|=\Theta(1)$ is therefore equivalent to $|\cos\theta_i^t|=\Theta(1)$. Under the fixed-input and small-ball assumptions, this and $\delta_i^t=O(1)$ ensure alignment. For $t\ge1$, it suffices that the entries after the initial zero gradient share a sign and have magnitudes bounded above and below by positive width-independent constants.

\subsubsection{AdamW and Effective Horizons}

AdamW combines Adam normalized directions with decoupled weight decay:
\[
A_{t+1}=(1-\eta_{A,t}\omega_A)A_t-\eta_{A,t}U_A^t,
\qquad
B_{t+1}=(1-\eta_{B,t}\omega_B)B_t-\eta_{B,t}U_B^t,
\]
where $\eta_{A,t},\eta_{B,t}\ge0$ and $\omega_A,\omega_B\ge0$. The directions $U_A^t,U_B^t$ are formed from loss gradients, excluding weight decay, along the AdamW trajectory. Define the decay products
\[
\Gamma_A(s,t):=
\prod_{\ell=s}^{t-1}(1-\eta_{A,\ell}\omega_A),
\qquad
\Gamma_B(s,t):=
\prod_{\ell=s}^{t-1}(1-\eta_{B,\ell}\omega_B).
\]
We use the convention $\Gamma_A(t,t)=\Gamma_B(t,t)=1$.

\begin{proposition}[Exact AdamW decomposition and noise bound]
\label{prop:adamw_effective_horizon}
Assume $B_0=0$ and $A_0=\beta\tilde A_0$. Define
\[
\mathcal A_t:=
\sum_{i=0}^{t-1}
\eta_{A,i}\Gamma_A(i+1,t)U_A^i,
\qquad
\mathcal B_t:=
\sum_{j=0}^{t-1}
\eta_{B,j}\Gamma_B(j+1,t)U_B^j.
\]
Then
\[
Z_B^t
=
-\lambda\beta\,\Gamma_A(0,t)\mathcal B_t\tilde A_0\underline Z
+\lambda \mathcal B_t\mathcal A_t\underline Z.
\]
Denote the first and second terms by $T_{\mathrm{noise}}^t$ and $T_{\mathrm{signal}}^t$, respectively.
If the decay factors are non-expansive,
\[
0\le1-\eta_{A,t}\omega_A\le1,
\qquad
0\le1-\eta_{B,t}\omega_B\le1,
\]
then the noise satisfies
\begin{align*}
\|T_{\mathrm{noise}}^t\|_2
&\le\lambda\beta\Gamma_A(0,t)
\sum_{j<t}\eta_{B,j}\Gamma_B(j+1,t)
\|U_B^j\tilde A_0\underline Z\|_2\\
&\le\lambda\beta\sum_{j<t}\eta_{B,j}
\|U_B^j\tilde A_0\underline Z\|_2.
\end{align*}
\end{proposition}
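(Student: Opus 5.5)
We unroll the two affine AdamW recursions separately, exactly as in the proof of \cref{lemma:decomps_2}, but now carrying the decay products. For $A$, induction on $t$ gives
\[
A_t=\Gamma_A(0,t)A_0-\sum_{i=0}^{t-1}\eta_{A,i}\Gamma_A(i+1,t)U_A^i=\Gamma_A(0,t)\beta\tilde A_0-\mathcal A_t .
\]
The base case $t=0$ is trivial, since $\Gamma_A(0,0)=1$ and $\mathcal A_0=0$. For the inductive step, multiply the closed form at step $t$ by $(1-\eta_{A,t}\omega_A)$ and subtract $\eta_{A,t}U_A^t$. This uses the telescoping identity $(1-\eta_{A,t}\omega_A)\Gamma_A(s,t)=\Gamma_A(s,t+1)$ together with the convention $\Gamma_A(t+1,t+1)=1$ for the newly added term.

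The identical argument for $B$ gives
\[
B_t=\Gamma_B(0,t)B_0-\mathcal B_t=-\mathcal B_t,
\]
where $B_0=0$ kills the initialization contribution.

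Substituting both closed forms into $Z_B^t=\lambda B_tA_t\underline Z$ gives
\[
Z_B^t=\lambda(-\mathcal B_t)\bigl(\beta\Gamma_A(0,t)\tilde A_0-\mathcal A_t\bigr)\underline Z=-\lambda\beta\Gamma_A(0,t)\mathcal B_t\tilde A_0\underline Z+\lambda\mathcal B_t\mathcal A_t\underline Z .
\]
This is the claimed exact decomposition. As in \cref{lemma:decomps_2}, it is purely algebraic and holds for every realized trajectory, with no independence assumption between $\tilde A_0$ and the directions.

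For the noise bound, expand $\mathcal B_t\tilde A_0\underline Z=\sum_{j<t}\eta_{B,j}\Gamma_B(j+1,t)U_B^j\tilde A_0\underline Z$ and apply the triangle inequality. Since $\lambda,\beta>0$, and under non-expansiveness $\Gamma_A(0,t)\ge0$ and $\Gamma_B(j+1,t)\ge0$, the scalar factors can be pulled out with their absolute values equal to themselves. This gives the first inequality. The second follows because each product of factors in $[0,1]$ lies in $[0,1]$, so $\Gamma_A(0,t)\le1$ and $\Gamma_B(j+1,t)\le1$; we also use $\eta_{B,j}\ge0$.

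The only step needing care is the index bookkeeping in the induction, namely checking that the decay applied at step $\ell$ multiplies exactly the updates with index $i<\ell$, giving $\Gamma(i+1,t)$ rather than $\Gamma(i,t)$. The rest is routine.
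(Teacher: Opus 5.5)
Your proof is correct and follows essentially the same route as the paper. You unroll both decayed recursions, spelling out the induction that the paper compresses into ``unrolling,'' substitute into $Z_B^t=\lambda B_tA_t\underline Z$, and apply the triangle inequality using that the decay products lie in $[0,1]$; you also state explicitly that nonnegativity of $\lambda$, $\beta$, $\eta_{B,j}$, and the $\Gamma$ factors is what lets the scalars come out of the norm without absolute values.
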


\begin{proof}
Unrolling the recursions gives
\[
A_t
=
\Gamma_A(0,t)A_0
-
\sum_{i=0}^{t-1}
\eta_{A,i}\Gamma_A(i+1,t)U_A^i,
\]
and, since $B_0=0$,
\[
B_t
=
-
\sum_{j=0}^{t-1}
\eta_{B,j}\Gamma_B(j+1,t)U_B^j.
\]
Using $A_0=\beta\tilde A_0$, this is
\[
A_t=\beta\Gamma_A(0,t)\tilde A_0-\mathcal A_t,
\qquad
B_t=-\mathcal B_t.
\]
Substituting into $Z_B^t=\lambda B_tA_t\underline Z$ yields
\[
Z_B^t
=
-\lambda\beta\,\Gamma_A(0,t)\mathcal B_t\tilde A_0\underline Z
+\lambda\mathcal B_t\mathcal A_t\underline Z.
\]
The first noise bound follows by the triangle inequality. Under non-expansive decay, all decay products lie in $[0,1]$, giving the second bound.
\end{proof}

These upper bounds do not imply that the realized noise decreases with weight decay. For fixed projected contributions $v\ne0$ and $-v$, the undamped sum is zero, whereas damping the earlier term gives $qv-v=-(1-q)v\ne0$ for $0<q<1$. Decay can therefore reduce cancellation; monotonicity of either component does not follow from bounds on the decay factors.

Under non-expansive decay, define the effective horizons
\[
H_A(t):=
\sum_{i=0}^{t-1}
\eta_{A,i}\Gamma_A(i+1,t),
\qquad
H_B(t):=
\sum_{j=0}^{t-1}
\eta_{B,j}\Gamma_B(j+1,t).
\]
If the coordinate bounds hold uniformly, then $\|\mathcal A_t\|_{\max}=O(H_A(t))$ and $\|\mathcal B_t\|_{\max}=O(H_B(t))$. Matching lower bounds require assumptions on the weighted sums themselves.

\paragraph{Conditional component scaling.}
Write $u:=\tilde A_0\underline Z$, $w_t:=\mathcal A_t\underline Z$, and $\rho_d(x):=\|x\|_2/\sqrt d$. On analyzed steps with positive $\Gamma_A(0,t),H_A(t),H_B(t)$, assume the weighted analogues of \cref{eq:input_accumulation_scales,eq:rank_contraction_scales}:
\begin{align*}
\rho_r(u)&=\Theta(\sigma_A\sqrt n), &
\rho_r(w_t)&=\Theta(nH_A(t)),\\
\rho_n(\mathcal B_tu)&=\Theta(H_B(t)r^{\gamma_N}\rho_r(u)), &
\rho_n(\mathcal B_tw_t)&=\Theta(H_B(t)r^{\gamma_S}\rho_r(w_t)),
\end{align*}
for fixed exponents $\gamma_N,\gamma_S$ and constants uniform in width, rank, and the analyzed steps. Setting $m_N:=\lambda r^{\gamma_N}$ and $m_S:=\lambda r^{\gamma_S}$, the exact decomposition gives
\begin{align*}
E_N^t&=\frac{1}{n}\|T_{\mathrm{noise}}^t\|_2^2
=\Theta\!\left(\beta^2m_N^2\Gamma_A(0,t)^2H_B(t)^2\sigma_A^2n\right),\\
E_S^t&=\frac{1}{n}\|T_{\mathrm{signal}}^t\|_2^2
=\Theta\!\left((m_SH_A(t)H_B(t)n)^2\right).
\end{align*}
The total energy still satisfies \cref{lem:realized_energy_decomposition}. The assumptions above impose weighted temporal non-cancellation and rank contraction; the scalar horizons alone do not establish them.

For constant LRs, if $\Gamma_A(0,t),\Gamma_B(0,t)=\Theta(1)$ uniformly on the analyzed window, then $H_A(t)=\Theta(\eta_A t)$ and $H_B(t)=\Theta(\eta_B t)$. Under the weighted contraction assumptions, this recovers the component scales in \cref{theorem:lora_feature_beta}. More generally, changing $\sigma_A^2=\Theta(n^{-1})$ to $\Theta(n^{-2})$ supplies the extra $1/n$ noise-energy factor when the input, multiplier, $\beta$, decay, horizon, and rank-contraction scales are matched across initializations. The optimizer trajectories may differ, and the exact decomposition alone does not establish these matched scales.

\subsubsection{Empirical Width Alignment and Single-Example Scope}
\label{app:empirical_width_alignment}

The alignment derivation above concerns a single example or token with a locally fixed input. With minibatches, each gradient combines multiple inputs and backpropagated signals, so its normalized direction need not retain this outer-product structure. We therefore measure realized alignment rather than assert a general batch extension.

For row $i$ of module $\ell$, let $a_{t,\ell,i}(n):=|(U_{A,\ell}^t)_{i,:}\underline Z_\ell|$. Across Qwen2.5-\{0.5B, 1.5B, 3B, 7B\}, we fit the module-level alignment magnitude as $a_{t,\ell}(n)\propto n^{\kappa_{t,\ell}}$ at sampled optimizer steps. Both initializations use three seeds, giving 24 runs, with Math-10K, $r=\alpha=8$, $\eta_A=10^{-5}$, and $\eta_B/\eta_A=200$.

We sample four \texttt{q\_proj} modules and the final-block \texttt{down\_proj}. The fan-in $n$ is the input width of the measured projection: $n\in\{896,1536,2048,3584\}$ for \texttt{q\_proj}, and $n\in\{4864,8960,11008,18944\}$ for \texttt{down\_proj}. We also fit the width exponent of the normalized alignment $a_{t,\ell,i}(n)/(\|(U_{A,\ell}^t)_{i,:}\|_2\|\underline Z_\ell\|_2)$. The single-example theory predicts a dot-product exponent of $1$ and a cosine exponent of $0$ under its stated non-degeneracy conditions.

\begin{figure}[t]
    \centering
    \includegraphics[width=\linewidth]{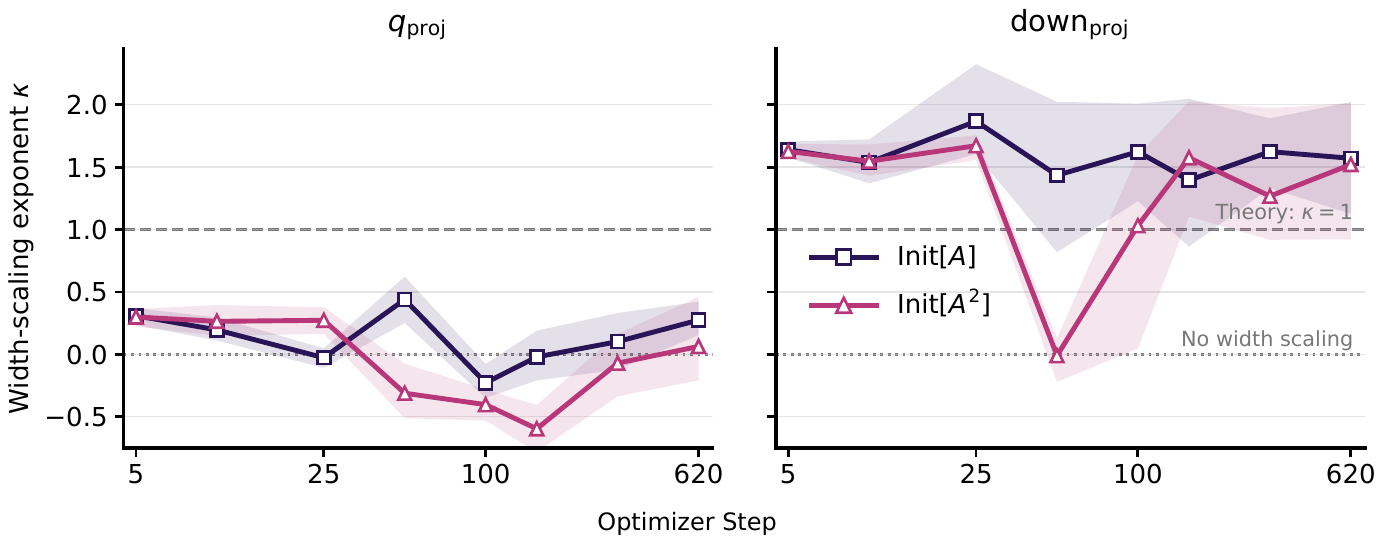}
    \caption{Empirical width-scaling exponent of the processed $A$-update/input alignment during Qwen2.5 minibatch fine-tuning on Math-10K. Panels compare \texttt{q\_proj} and \texttt{down\_proj} under \initA and \initAA. Reference lines mark the single-example prediction $\kappa=1$ and width-independent alignment $\kappa=0$. The exponent varies across modules and optimizer steps.}
    \label{fig:width_alignment}
\end{figure}

\begin{table}[htbp]
\centering
\small
\setlength{\tabcolsep}{3pt}
\caption{Width-alignment fits at steps 5 and 620. Dot-product exponents include 95\% CIs computed over three seeds; the last column gives cosine exponents at the two steps.}
\label{tab:width_alignment_exponents}
\begin{tabular}{llccc}
\toprule
Module & Initialization & \makecell{Step-5 dot $\kappa$\\95\% CI} & \makecell{Step-620 dot $\kappa$\\95\% CI} & \makecell{Cosine exponent\\step 5 / 620} \\
\midrule
\texttt{q\_proj} & \initA  & \makecell{0.309\\$[0.235,\,0.371]$} & \makecell{0.274\\$[0.155,\,0.421]$} & $-0.477 / -0.400$ \\
\texttt{q\_proj} & \initAA & \makecell{0.299\\$[0.223,\,0.362]$} & \makecell{0.063\\$[-0.208,\,0.459]$} & $-0.478 / -0.353$ \\
\midrule
\texttt{down\_proj} & \initA  & \makecell{1.640\\$[1.578,\,1.706]$} & \makecell{1.571\\$[1.126,\,2.021]$} & $0.172 / -0.052$ \\
\texttt{down\_proj} & \initAA & \makecell{1.626\\$[1.573,\,1.686]$} & \makecell{1.521\\$[0.922,\,2.013]$} & $0.174 / -0.098$ \\
\bottomrule
\end{tabular}
\end{table}

\Cref{fig:width_alignment} shows predominantly sublinear alignment in \texttt{q\_proj} and generally superlinear alignment in \texttt{down\_proj}, with substantial changes during training. Thus $\Theta(n)$ alignment is not generic for minibatch fine-tuning. Our formal signal-scaling and efficiency results remain conditional on the fixed single-example analysis; extending them to minibatches requires the realized alignment scale and its temporal accumulation. This restriction affects the signal estimate, while the separate initialization-noise comparison retains the extra $1/n$ factor under matched input and contraction assumptions.

\subsubsection{Summary for the Main Theorem}

\begin{corollary}[Adam and AdamW instantiate the normalized-update analysis]
\label{cor:adam_adamw_normalized_analysis}
\label{cor:adam_instantiates_assumption}
Consider a fixed LoRA-layer input $\underline Z$ satisfying the small-ball condition and zero initial optimizer moments. Assume Adam hyperparameters $(\beta_1,\beta_2,\epsilon)$ are width-independent, with $\epsilon>0$, $0\le\beta_1<1$, and $\beta_1^2<\beta_2<1$. Suppose the $A$-gradient has the local row-wise form
\[
(\nabla_A\Lcal_{t-k})_{ij}=s_i^{t-k}\underline Z_j
\]
over the complete Adam gradient history. If the following bounds hold uniformly over the analyzed rows and steps,
\[
\left|
\frac{\bar s_i^t}{\sqrt{\bar q_i^t}}
\right|
=\Theta(1),
\qquad
\frac{\epsilon}{\sqrt{\bar q_i^t}}=O(1),
\]
then Adam satisfies
\[
\|U_A^t\|_{\max}=O(1),
\qquad
|(U_A^t)_{i,:}\underline Z|=\Theta(n).
\]
The same coordinate bound applies to $U_B^t$. Therefore Adam satisfies the optimizer-facing requirements of \cref{ass:normalized_updates} under these conditions. Temporal non-cancellation in both branches and the rank contractions remain separate assumptions in \cref{theorem:lora_feature_beta}.

For AdamW, the same optimizer-facing bounds hold for its loss-gradient directions under the same conditions. Feature scaling uses the exact decay-weighted decomposition in \cref{prop:adamw_effective_horizon} and additionally requires the weighted temporal and rank-contraction assumptions stated above. The extra $1/n$ noise-energy comparison holds under matched projection, decay, horizon, and contraction scales across initializations; non-expansive decay alone does not guarantee it.
\end{corollary}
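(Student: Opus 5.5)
This corollary is an assembly result, so the plan is to chain the lemmas already proved in this section; no new estimate is needed. I would treat the two optimizer-facing requirements of \cref{ass:normalized_updates} separately for Adam, then handle AdamW by reduction.

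\textbf{Coordinate boundedness.} Every coordinate of $U_A^t$ and $U_B^t$ is an Adam ratio $\widehat m_t/(\sqrt{\widehat v_t}+\epsilon)$ built from a scalar gradient history. Under $0\le\beta_1<1$, $\beta_1^2<\beta_2<1$, \cref{prop:adam_coordinate_bound} bounds each such ratio by $C_{\beta_1,\beta_2}$, which depends on neither width, time, nor $\epsilon$. Taking the maximum over coordinates gives $\|U_A^t\|_{\max},\|U_B^t\|_{\max}=O(1)$. This needs no structure on the gradients, so it covers the $B$ branch as well.

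\textbf{Alignment.} I would invoke the row-wise factorization $(\nabla_A\Lcal_{t-k})_{ij}=s_i^{t-k}\underline Z_j$ over the full history; zero initial moments ensure no prior state contaminates it. With this, \cref{lem:adam_softsign_alignment} applies verbatim. The moments factor as $\bar s_i^t\underline Z_j$ and $\bar q_i^t\underline Z_j^2$, so $(U_A^t)_{i,:}\underline Z=\kappa_i^t\sum_j \underline Z_j^2/(|\underline Z_j|+\delta_i^t)$. This is a nonnegative sum, bounded between $pn z_-^2/(z_++C_\delta)$ and $nz_+$ by the small-ball condition. The hypotheses $|\kappa_i^t|=\Theta(1)$ and $\delta_i^t=O(1)$ are exactly the two displayed uniform bounds in the corollary, so $|(U_A^t)_{i,:}\underline Z|=\Theta(n)$ uniformly over the analyzed rows and steps. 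This is \cref{cor:adam_a_branch_nondegeneracy}.

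\textbf{Main obstacle: uniformity.} The step I expect to require care is making sure the constants are uniform. The small-ball constants $p,z_\pm$ and the bounds $c_\kappa,C_\kappa,C_\delta$ must be independent of $n$ and of the analyzed row and step, as the lemma needs. I would state explicitly that the uniformity hypotheses in the corollary supply this. I would also note that rows with $\bar q_i^t=0$ are excluded by $|\kappa_i^t|=\Theta(1)$, which forces a nonzero history.

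\textbf{AdamW.} AdamW's directions $U_A^t,U_B^t$ are the Adam ratios of the loss gradients along the AdamW trajectory, so the two arguments above transfer unchanged. The feature-level statement then comes from the exact decay-weighted decomposition of \cref{prop:adamw_effective_horizon}. Under the stated weighted input and contraction scales, the conditional scaling computation gives $E_N^t\propto\Gamma_A(0,t)^2H_B(t)^2\sigma_A^2n$. Replacing $\sigma_A^2=\Theta(n^{-1})$ by $\Theta(n^{-2})$ with all other scales matched yields the extra $1/n$ factor.

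Finally, I would remark that temporal non-cancellation and rank contraction are not derived here. They remain hypotheses of \cref{theorem:lora_feature_beta}, as the counterexample after \cref{prop:adamw_effective_horizon} shows that decay bounds alone cannot supply them.
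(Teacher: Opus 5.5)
Your proposal is correct and follows the paper's own argument. The paper gives no separate proof; the corollary is assembled from \cref{prop:adam_coordinate_bound} (coordinate bound for both branches), \cref{lem:adam_softsign_alignment} via \cref{cor:adam_a_branch_nondegeneracy} (the $\Theta(n)$ alignment), and, for AdamW, the exact decay-weighted decomposition of \cref{prop:adamw_effective_horizon} together with the weighted scaling assumptions. One small nit: the $v,-v$ example after \cref{prop:adamw_effective_horizon} shows that decay can \emph{reduce} cancellation, so it fits the final clause (non-expansive decay alone does not guarantee the $1/n$ comparison) better than it fits as evidence about temporal non-cancellation per se.
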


\subsection{Spectral Derivation of \initAA}
\label{app:spectral_derivation}

We derive \initAA from spectral conditions for stable feature learning. Spectral and Frobenius norm scalings capture matrix behavior in the large-width limit beyond coordinate-wise estimates.

\subsubsection{Matrix Norm and Low-Rank Equivalence}

The spectral norm of a matrix $A$, denoted $\lVert A \rVert_{\mathrm{op}}$, is its largest singular value, defined as the maximum stretch applied to any nonzero vector:
\begin{equation}
\lVert A \rVert_{\mathrm{op}} = \max_{v \neq 0} \frac{\lVert A v \rVert_2}{\lVert v \rVert_2}.
\end{equation}
The spectral norm can be bounded by the Frobenius norm:
\begin{equation}
    1/\sqrt{r} \lVert A \rVert_F \le \lVert A \rVert_{\mathrm{op}} \le \lVert A \rVert_F.
\end{equation}
Here $r\ge1$ is an upper bound on $\operatorname{rank}(A)$. In the fixed-rank LoRA setting of this subsection, the spectral and Frobenius norms are asymptotically equivalent: $\lVert \cdot \rVert_F \asymp \lVert \cdot \rVert_{\mathrm{op}}$.

\subsubsection{General Spectral Condition}

A central requirement for feature learning in large-width neural networks is that the feature representations must evolve nontrivially without vanishing or exploding. Following the feature-learning desiderata proposed by \citet{yang2023spectral}, we start with the general condition for a weight matrix $W_\ell \in \mathbb{R}^{n_\ell \times n_{\ell-1}}$ at layer $\ell$.

\begin{condition}[Spectral Scaling Condition]
\label{cond:general_spectral}
Consider applying a gradient update $\Delta W_\ell \in \mathbb{R}^{n_\ell \times n_{\ell-1}}$ to the $\ell$-th weight matrix $W_\ell$. To target the feature learning desideratum $\|h_\ell\|_2 = \Theta(\sqrt{n_\ell})$ and $\|\Delta h_\ell\|_2 = \Theta(\sqrt{n_\ell})$, we use the spectral scaling prescription:
\begin{equation}
    \|W_\ell\|_{\mathrm{op}} = \Theta\left(\sqrt{\frac{n_\ell}{n_{\ell-1}}}\right)
    \quad \text{and} \quad
    \|\Delta W_\ell\|_{\mathrm{op}} = \Theta\left(\sqrt{\frac{n_\ell}{n_{\ell-1}}}\right),
    \quad \text{for all layers } \ell.
\end{equation}
\end{condition}

This condition controls the maximum amplification of an input signal. Matching lower bounds for realized features and their updates additionally require alignment and non-cancellation; spectral norms alone do not exclude inputs in a nullspace.

\paragraph{Application to LoRA.}
In the context of LoRA, we identify two distinct linear transformations: the down-projection $A \in \mathbb{R}^{r \times n}$ and the up-projection $B \in \mathbb{R}^{n \times r}$.
For the down-projection $A$, we have $n_{\ell-1} = n$ and $n_\ell = r$. Substituting these into \cref{cond:general_spectral} and noting that $r = \Theta(1)$, we require:
\[
    \|A\|_{\mathrm{op}} = \Theta\left(\sqrt{\frac{r}{n}}\right) = \Theta(n^{-1/2}).
\]
Conversely, for the up-projection $B$, we have $n_{\ell-1} = r$ and $n_\ell = n$. This yields the requirement:
\[
    \|B\|_{\mathrm{op}} = \Theta\left(\sqrt{\frac{n}{r}}\right) = \Theta(n^{1/2}).
\]
The updates $\Delta A$ and $\Delta B$ require the same spectral scalings for stable feature evolution.

\subsubsection{Implementing Feature Learning via \initAA}

At fixed rank $r=\Theta(1)$, spectral and Frobenius norms have the same asymptotic scaling: $\|A\|_{\mathrm{op}} \asymp \|A\|_F$. We therefore derive initialization variances and LRs from Frobenius norms. For an Adam-like optimizer, we model typical update entries as $(\Delta A)_{ij} \sim \eta_A$ and $(\Delta B)_{ij} \sim \eta_B$.

\paragraph{Derivation for block $A$ ($A \in \mathbb{R}^{r \times n}$).}
The spectral condition mandates that the Frobenius norm squared scales as $\|A\|_F^2 \asymp \|A\|_{\mathrm{op}}^2 = \Theta(n^{-1})$. With $A_0=\beta\tilde A_0$, $\beta=\Theta(1)$, and independent centered Gaussian entries of variance $\sigma_A^2$ in $\tilde A_0$, the expected squared Frobenius norm is $\mathbb{E}[\|A_0\|_F^2] = \beta^2 r n \sigma_A^2$. At fixed rank, $\|A_0\|_F^2=\Theta(n\sigma_A^2)$ with high probability. Matching this to $\Theta(n^{-1})$ gives $\sigma_A^2 = \Theta(n^{-2})$. Similarly, under the assumed update-entry scaling, $\|\Delta A\|_F^2 \approx r n \eta_A^2 = \Theta(n \eta_A^2)$. To satisfy $\|\Delta A\|_F^2 = \Theta(n^{-1})$, we set $\eta_A = \Theta(n^{-1})$.

\paragraph{Derivation for block $B$ ($B \in \mathbb{R}^{n \times r}$).}
For the up-projection matrix, the spectral condition requires $\|B\|_F^2 \asymp \|B\|_{\mathrm{op}}^2 = \Theta(n)$. Following the same logic, if $B_0$ were initialized with variance $\sigma_B^2$, the expected norm would be $\mathbb{E}[\|B_0\|_F^2] = n r \sigma_B^2 = \Theta(n \sigma_B^2)$. Matching this to $\Theta(n)$ implies $\sigma_B^2 = \Theta(1)$. While standard LoRA initializes $B_0 = 0$, this transient violation is corrected in the first step provided the updates are properly scaled. For the updates, $\|\Delta B\|_F^2 \approx n r \eta_B^2 = \Theta(n \eta_B^2)$. To achieve the target $\Theta(n)$ scaling, we require $n \eta_B^2 = \Theta(n)$, which simplifies to $\eta_B = \Theta(1)$.

The derivation above dictates the following HPs to satisfy the spectral conditions for stable feature learning: $\sigma_A^2 = \Theta(n^{-2})$ with $\eta_A = \Theta(n^{-1})$, and by extension $\eta_B = \Theta(1)$.
This configuration corresponds to our proposed \initAA strategy, using $\Theta(n^{-2})$ variance for the down-projection rather than the standard $\Theta(n^{-1})$ scale. It controls spectral magnitudes; non-degenerate feature learning additionally requires the alignment and non-cancellation assumptions stated above.

\subsection{Spectral Conditions for Robust LoRA Feature Learning}
\label{app:spectral_conditions}
Prior work \citep{hayou2024impact,hayou2024lora+,libeyond} provides useful intuition for LoRA scaling, but its coordinate-wise, average-case $\gamma$-calculus can obscure a failure mode in practical fine-tuning. We explain how a spectral, worst-case perspective \citep{yang2023spectral} leads to the \initAA scaling.

\subsubsection{Why $\gamma$-Calculus Is Not Sufficient}

The $\gamma$-calculus tracks the coordinate-wise magnitude of random variables. The framework is implicitly average-case, assuming that summations exhibit typical cancellations and that individual coordinates are representative of global behavior. This assumption leads to fundamental limitations.

For a matrix-vector product $Wx$, the scaling also depends on cancellation among the products $W_{ij}x_j$. Under suitable independence and moment assumptions, two benchmark rules are:
\begin{itemize}
    \vspace{-0.1cm}
    \item \textbf{CLT-like (centered summands):} $\gamma[Wx] = \gamma[W] + \gamma[x] + 1/2$
    \vspace{-0.15cm}
    \item \textbf{LLN-like (nonzero-mean summands):} $\gamma[Wx] = \gamma[W] + \gamma[x] + 1$
    \vspace{-0.1cm}
\end{itemize}
Neither rule follows from the marginal mean of $W$ alone when $W$ and $x$ are dependent. In LoRA, the initialization-dependent and optimizer-driven components can have different rank contractions, as captured by $\gamma_N$ and $\gamma_S$ in \cref{theorem:lora_feature_beta}. Tracking only the final feature's coordinate exponent does not distinguish these components or their signed cross term. The explicit decomposition is therefore needed to diagnose noise dominance even when the total feature satisfies a bounded scaling condition.

\subsubsection{Spectral Condition: Worst-Case Control}

Following \citet{yang2023spectral}, the worst-case spectral condition is:

\begin{condition}[Spectral Condition for LoRA Feature Learning]
For $n\to\infty$ and fixed rank $r=\Theta(1)$, the LoRA factors and their updates satisfy:
\begin{equation*}
    \begin{cases}
        \lVert A \rVert_{\mathrm{op}} = \Theta( n^{-1/2}) \\
        \lVert \Delta A \rVert_{\mathrm{op}} = \Theta( n^{-1/2})
    \end{cases}
    \quad \text{and} \quad
    \begin{cases}
    \lVert B \rVert_{\mathrm{op}} = \Theta( n^{1/2}) \\
    \lVert \Delta B \rVert_{\mathrm{op}} = \Theta( n^{1/2})
    \end{cases}
\end{equation*}
\end{condition}
For $\|\underline Z\|_2=\Theta(\sqrt n)$ aligned with a principal singular vector, $\|A\|_{\mathrm{op}}=\Theta(n^{-1/2})$ gives an order-one bottleneck feature $Z_A=A\underline Z$. Conversely, $\|B\|_{\mathrm{op}}=\Theta(n^{1/2})$ maps an aligned order-one bottleneck vector to output scale $\Theta(\sqrt n)$. These worst-case scales do not by themselves give lower bounds for arbitrary inputs.

By \cref{lemma:decomps_2}, $T_{\text{noise}}^t=-(\alpha\beta/r)\eta_B(\sum_{i<t}U_B^i)\tilde A_0\underline Z$. Applying a spectral bound gives
\begin{equation*}
    \| T_{\text{noise}}^t \|_2 \le \frac{\alpha \beta}{r} \eta_B \| \sum_{i=0}^{t-1} U_B^i\|_{\mathrm{op}} \|\tilde{A}_0\|_{\mathrm{op}} \|\underline{Z}\|_2.
\end{equation*}
At fixed rank, coordinate-bounded directions give $\|\sum_{i<t}U_B^i\|_{\mathrm{op}}=O(t\sqrt n)$, while Gaussian initialization gives $\|\tilde A_0\|_{\mathrm{op}}=\Theta(\sigma_A\sqrt n)$ with high probability. Therefore the worst-case noise RMS satisfies
\[
\frac{\|T_{\text{noise}}^t\|_2}{\sqrt n}
=O\!\left(\frac{\alpha\beta}{r}\eta_B t\sigma_A n\right).
\]
For fixed $t$ and $\alpha/r,\beta,\eta_B=\Theta(1)$, this bound is $O(\sqrt n)$ under \initA and $O(1)$ under \initAA. Thus the spectral argument establishes bounded worst-case noise RMS for \initAA, but does not establish vanishing noise or signal dominance for all aligned inputs.

Vanishing noise and signal dominance require the fixed-input projection and contraction assumptions in \cref{theorem:lora_feature_beta,theorem:initAA_feature_learning}; we derive the spectral initialization rule in \cref{app:spectral_derivation}.


\section{Related Literature Review}
\label{app:related_work}
\subsection{LoRA Initialization}
\label{app:related_work_lora_init}

Standard LoRA sets $B_0=0$ and initializes $A_0$ with mean-zero entries of variance $\Theta(n^{-1})$, denoted \initA; PEFT uses Kaiming-uniform initialization (\cref{app:init_implementation}). \citet{hayou2024impact} show that the symmetric configuration \initB, with $A_0=0$ and $B_0$ initialized at entrywise variance $\Theta(1)$, has inferior optimization dynamics and stability compared with \initA. PEFT \citep{peft} also offers initialization strategies to accelerate convergence or enhance stability. LoftQ \citep{li2023loftq} minimizes the quantization error between original and quantized weights. PiSSA \citep{meng2024pissa} and EVA \citep{paischer2024parameter} adopt spectral approaches: PiSSA uses principal singular components of the weight matrix to approximate full-rank updates, while EVA uses a singular value decomposition of layer input activations. CorDA \citep{yang2024corda} uses context-oriented decomposition to preserve world knowledge or enhance task-specific learning. OLoRA \citep{buyukakyuz2024olora} and orthogonal initialization use orthonormal decompositions for stable training.

\subsection{Other LoRA Hyperparameters}
\label{app:related_work_lora_hps}

Beyond initialization, LoRA introduces sensitivity to several coupled HPs, including the rank $r$, scaling factor $\alpha$, and LRs $\eta_A,\eta_B$. The rank controls the expressivity of the adapter: \citet{zengexpressive} characterize the expressive power of low-rank adaptation and clarify how the choice of $r$ constrains the class of attainable updates. However, increasing rank does not automatically improve vanilla LoRA. \citet{kalajdzievski2023rank} show that the conventional multiplier $\alpha/r$ can make the LoRA branch weaker as $r$ grows, so higher ranks may yield diminishing gains in practice. Their rank-stabilized alternative replaces this scaling with $\alpha/\sqrt{r}$, preserving a more consistent update scale across ranks.

LR selection introduces a separate source of brittleness. In the feature-learning view, useful transfer requires both the hidden states and their updates to remain $\Theta(1)$ as width increases. \citet{hayou2024lora+} analyze LoRA in the infinite-width limit and derive stability conditions under which non-degenerate feature updates require a large LR separation, with $\eta_B \gg \eta_A$ and an idealized ratio that scales with width. Recent work further studies the joint width-and-rank limit, showing that optimal LR-rank scaling depends on the initialization and LoRA update multiplier \citep{chen2026learningrate}. These theories provide guidance, but pretrained LLMs typically do not use the corresponding width-scaling parameterization. Empirical studies find that vanilla LoRA can be competitive after careful LR search \citep{lee2026learning}, while the best asymmetric LR ratios observed in practice are often far smaller than the model width \citep{hayou2024lora+}. We address this theory-to-practice gap through initialization: standard \initA injects persistent noise under aggressive asymmetric LRs, whereas \initAA suppresses it and improves robustness.

\subsection{Feature Learning and HP Transfer}
\label{app:related_work_feature_learning}

Feature learning \citep{yang2021tensor} is a central mechanism in modern deep learning, wherein neural networks are trained not merely to match target outputs, but to acquire internal representations that encode useful invariances and hierarchical abstractions. This stands in contrast to the Neural Tangent Kernel (NTK) or “lazy” training regime \citep{jacot2018neural,arora2019exact,liu2022loss}, in which an infinite-width network behaves as a kernel machine with features effectively frozen at their random initialization. In the feature-learning regime, by contrast, the network’s internal feature map evolves substantially over the course of training. To facilitate scalable feature learning, \citet{yang2021tensor} introduced the maximal update parametrization ($\mu$P), which enables nontrivial feature-learning dynamics even in the infinite-width limit. Unlike the NTK parametrization, $\mu$P ensures that pre-activations, gradients, and parameter updates remain $\Theta(1)$ as the network width tends to infinity. Closely related approaches based on mean-field theory \citep{chizat2018global,mei2018mean,sirignano2020mean} likewise analyze the evolution of the distribution of neurons and demonstrate that learned features can emerge outside the kernel regime.

While early scaling analyses focused on width, recent work has extended these ideas to the depth scaling limit, where the number of layers $L \to \infty$. Naively deep networks are prone to vanishing activations or exploding gradients, but these issues can be resolved by scaling residual blocks as $O(1/\sqrt{L})$ \citep{yang2023feature,bordelon2023depthwise}. 

A key use case is HP transfer across model scales \citep{yang2022tensor}. Because $\mu$P maintains consistent training dynamics across width and depth, practitioners can tune LRs and other HPs on small proxy models for use in larger networks. Empirical methods \citep{milsom2025function, large2024scalable, riabinin2025gluon} similarly normalize layer outputs and updates for width transfer. Related work studies transformer width-to-depth ratios \citep{dey2025don}, sharpness-aware optimization \citep{haas2024boldsymbol}, sparse training \citep{dey2024sparse}, and state space models \citep{vankadara2024feature}.

\subsection{Training Stability}
\label{app:related_work_stability}

While feature learning offers insight into representation adaptation, such analyses typically depend on a specific choice of parameterization to guarantee well-behaved limits as network width grows. However, not all models are parameterized in this manner, and identifying an appropriate parameterization can be challenging for complex architectures. This motivates a complementary line of research that elevates training stability to a primary objective, often treated as orthogonal to, or only weakly coupled with, feature-learning optimality.

Historically, stability concerns were most prominent in recurrent neural networks, where unconstrained temporal composition of states led to exploding or vanishing gradients. Architectures such as Long Short-Term Memory (LSTM) \citep{hochreiter1997long} and Gated Recurrent Units (GRUs) \citep{cho2014learning} mitigated these issues by introducing gated, additive state updates that constrain the hidden dynamics to a contractive, well-conditioned manifold, thereby enabling stable gradient propagation over long time horizons. From a geometric perspective, these mechanisms effectively bound the recurrent Jacobian and attenuate sensitivity to HPs.

Muon \citep{jordan2024muon} and the Spectral Sphere Optimizer \citep{xie2026controlled} constrain optimization trajectories to spectrally controlled manifolds, regulating operator norms for stability. Layer Normalization \citep{ba2016layer}, Post-LN Transformers \citep{wang2019learning}, and Pre-LN Transformers \citep{xiong2020layer} can also be interpreted as imposing implicit spectral control. DeepSeek’s manifold hyperconnections \citep{xie2025mhc} constrain learned residual mixing matrices to the doubly stochastic manifold, yielding convex, well-conditioned layer compositions.

\section{Experimental Setup}
\label{app:experimental_setup}

\subsection{Initialization Implementation}
\label{app:init_implementation}

For transformer fine-tuning (RoBERTa, Llama, Qwen, and Mistral), we follow the \texttt{PEFT} convention \citep{peft}: Kaiming-uniform $A$ \citep{he2015delving} and $B_0=0$. Under Kaiming-uniform,
\[
A_{ij} \sim \mathcal{U}(-b, b),
\qquad 
b = \sqrt{3\,\var(A_{ij})},
\]
so that the uniform bound $b$ and the variance are related by $\var(A_{ij}) = b^2/3$.

We parameterize the standard deviation of $A$ by $\beta$. For a LoRA layer with input dimension $\text{fan\_in}$, we set
\[
\var(A_{ij}) = \frac{\beta^2}{(\text{fan\_in})^{2s}},
\]
where \initA uses $s=\tfrac12$ and \initAA uses $s=1$. Thus \initA has variance $\beta^2/\text{fan\_in}$, while \initAA has variance $\beta^2/\text{fan\_in}^2$ and suppresses initialization noise by another factor of $1/\text{fan\_in}$. For rectangular layers, $n$ denotes the fan-in of $A$.

\subsection{Toy Example}
\label{app:toy_setup}

\paragraph{Dataset.} We use MNIST, flattening each image to $x \in \Rbb^{784}$ with one of 10 labels.

\paragraph{Model.} We use $Y=W_{\text{out}}\Fcal(W_0+BA,\Fcal(W_{\text{in}},X))$ with $\Fcal(W,z)=\operatorname{ReLU}(Wz)$, $W_{\text{in}}\in\Rbb^{n\times784}$, $W_{\text{out}}\in\Rbb^{10\times n}$, $A\in\Rbb^{r\times n}$, and $B\in\Rbb^{n\times r}$. We set $r=8$ and sweep $n\in\{16,64,256,1024,4096\}$.

\paragraph{Training.} We train only $A$ and $B$ for 20 epochs at batch size 256 using Adam \citep{kingma2014adam} with $\beta_1=\beta_2=0$ (no moving averages). We fix $W_0=0$, draw and freeze $W_{\text{in}} \sim \Ncal(0,1)$ and $W_{\text{out}} \sim \Ncal(0,1/n^2)$, compare \initA and \initAA with $\eta_B=n\eta_A$, and sweep $\eta_A \in \{2^{-7},2^{-6},\cdots,2^0\}$. The released reconstruction uses ReLU subgradient $1$ at zero (\texttt{clamp\_min(0)}); subgradient $0$ would prevent the first update when $W_0=B_0=0$. The original toy source is not archived, and exact numerical reproduction of the plots has not been verified.

\subsection{Encoder-Only Model Setup}
\label{app:encoder_only_setup}

\paragraph{Setup.} We fine-tune RoBERTa-base \citep{liu2019roberta} on GLUE MNLI, QQP, SST-2, and QNLI \citep{wang2018glue}, applying LoRA to the \texttt{query} and \texttt{value} projections. We use $r=\alpha=8$ and the Kaiming-uniform implementation from \cref{app:init_implementation}. Fixed-ratio sweeps set $\eta_A=10^{-5}$ and vary $\eta_B/\eta_A \in \{1,50,200,400,800\}$; 2D sweeps vary $\eta_A,\eta_B$ directly at $\beta \in \{0.25,8\}$.

\paragraph{Optimization.} We use AdamW \citep{loshchilov2019decoupled} with $(\beta_1,\beta_2,\epsilon)=(0.9,0.99,10^{-8})$, a linear schedule without warmup, sequence length $128$, batch size $32$, and 3 epochs. Runs use NVIDIA L40S GPUs and each individual run takes roughly one hour.

\subsection{Decoder-Only Model Setup}
\label{app:decoder_only_setup}

\paragraph{Datasets and models.} We use the arithmetic/math and commonsense benchmarks from \citet{hu2023llm}, following \citet{libeyond}: 10k arithmetic/math examples and 170k commonsense examples. The main sweeps use Llama 3-8B for initialization scale and Llama 3.2-1B for rank \citep{dubey2024llama}; appendix sweeps add QLoRA, stronger $A$-scalings, Llama 3.2-3B, Qwen2.5-3B \citep{qwen2024qwen25}, Qwen3-0.6B \citep{yang2025qwen3}, and Mistral-v0.3 \citep{jiang2023mistral}. Long-horizon, multi-seed multiplier, and energy-decomposition checks use Llama 3-8B (\cref{app:long_horizon,app:multiseed_validation,app:empirical_energy_decomposition}). By default, we apply LoRA to \texttt{q\_proj}, \texttt{k\_proj}, \texttt{v\_proj}, \texttt{up\_proj}, and \texttt{down\_proj}.
\paragraph{LoRA HPs.} For Llama 3-8B, we use $r=\alpha=8$ and sweep $\beta \in \{0.25,0.5,1,2,4,8\}$ and $\rho=\eta_B/\eta_A \in \{1,10,200,500,1000,2000,4000\}$. For decoder-only ratio sweeps, unless stated otherwise, we fix $\eta_A=10^{-5}$ and set $\eta_B=\rho\eta_A$. The Llama 3.2-1B rank sweep fixes $\beta=1$, varies $r\in\{4,8,16,32,64,128\}$, and sets $\alpha=r$. The QLoRA comparison uses the same Llama 3-8B grid; the model-family sweep uses $r=\alpha=8$, $\beta \in \{0.5,1,2,4,8\}$, and the ratio grid in \cref{fig:more_models_lora}; the initialization-power sweep varies $\beta \in \{4,8,12,16\}$ and $\rho \in \{1,10,50,200,500,1000,2000,4000,8000,10000,20000\}$.
\paragraph{Optimization.} We use AdamW \citep{loshchilov2019decoupled} with $(\beta_1,\beta_2,\epsilon)=(0.9,0.999,10^{-8})$, zero weight decay, global batch size 16, micro-batch size 8, and sequence length 256. We train 1 epoch on commonsense and 3 epochs on arithmetic using NVIDIA L40S GPUs; each run takes roughly three hours. The arithmetic diagnostics in \cref{app:long_horizon} compare separately trained horizons of up to 10 epochs.

\section{Additional Results and Ablations}
\label{app:additional_results}

\subsection{LR Sweep Experiment}
\label{app:lr_sweep_experiment}
\begin{figure}[!h]
    \centering
    \includegraphics[width=1\linewidth]{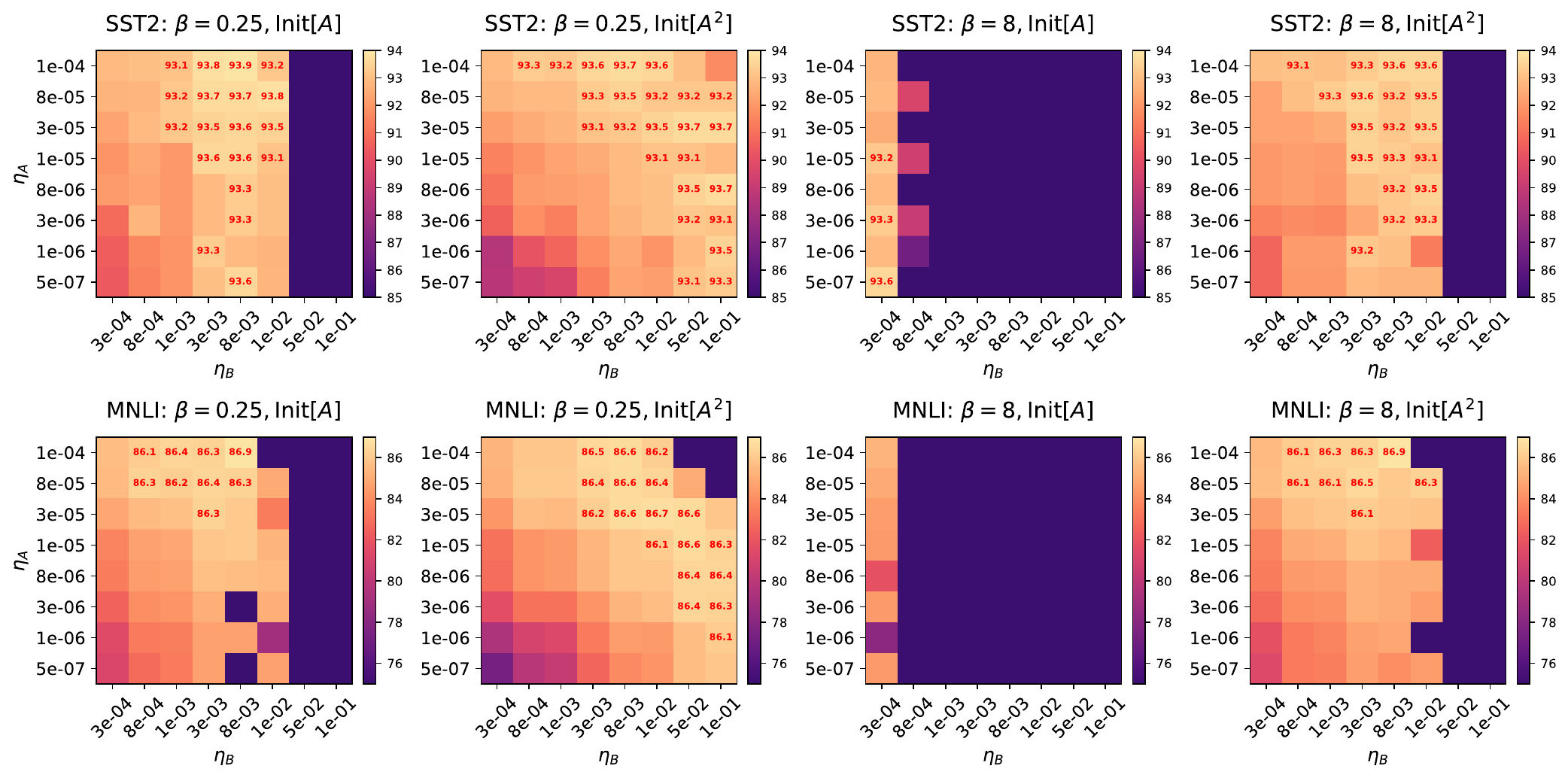}
    \caption{RoBERTa accuracy on SST-2 (top) and MNLI (bottom) under \initA and \initAA. The grids sweep LRs $\eta_A$ and $\eta_B$ at two initialization scales $\beta$.}
    \label{fig:abl_lr_sweep}
\end{figure}

We sweep the individual LRs $\eta_A$ and $\eta_B$ at initialization scales $\beta\in\{0.25,8\}$ to compare the usable regions of \initA and \initAA. In \cref{fig:abl_lr_sweep}, both methods have broad high-accuracy regions at $\beta=0.25$. At $\beta=8$, the high-accuracy region for \initA narrows to low $\eta_B$ values, whereas \initAA retains a broad region across the tested LRs. These results support reduced sensitivity to initialization scale over the tested LR grid.

\subsection{Sensitivity to the $B$ LR}
\label{app:block_b_important}

\begin{figure}[!h]
    \centering
    \includegraphics[width=1\linewidth]{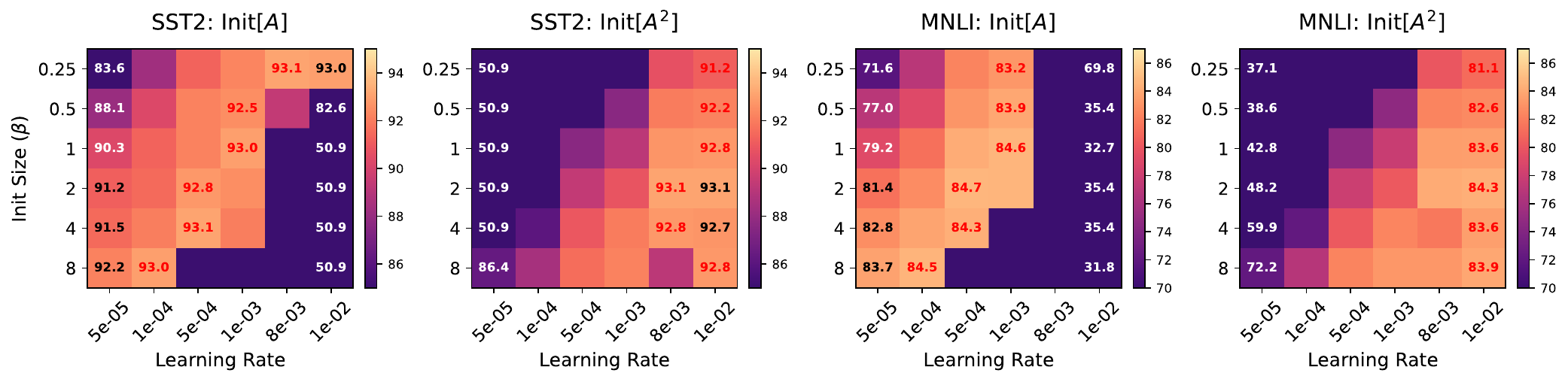}
    \caption{LoRA accuracy on MNLI and SST-2 with frozen $A$ and varying $B$ LR.}
    \label{fig:freeze_A}
\end{figure}

Under the assumptions of \cref{theorem:lora_feature_beta}, $\eta_B$ multiplies both the signal and noise components, while $\eta_A$ appears explicitly only in the signal. At fixed projection and contraction scales, increasing $\beta$ under \initA requires a smaller $\eta_B$ to keep the noise energy at the same level. To examine the role of $\eta_B$, we freeze block $A$ ($\eta_A=0$) in the RoBERTa MNLI and SST-2 setup and sweep $\eta_B\in\{5\times10^{-5},1\times10^{-4},5\times10^{-4},1\times10^{-3},8\times10^{-3},1\times10^{-2}\}$ for both initializations. \Cref{fig:freeze_A} shows that \initAA reaches high accuracy at the largest tested $\eta_B$ values across $\beta$, whereas the best $\eta_B$ for \initA shifts downward as $\beta$ increases.
\subsection{Additional Llama Experiments}
\label{app:additional_llama}

\begin{figure}[t]
    \centering
    \includegraphics[width=\linewidth]{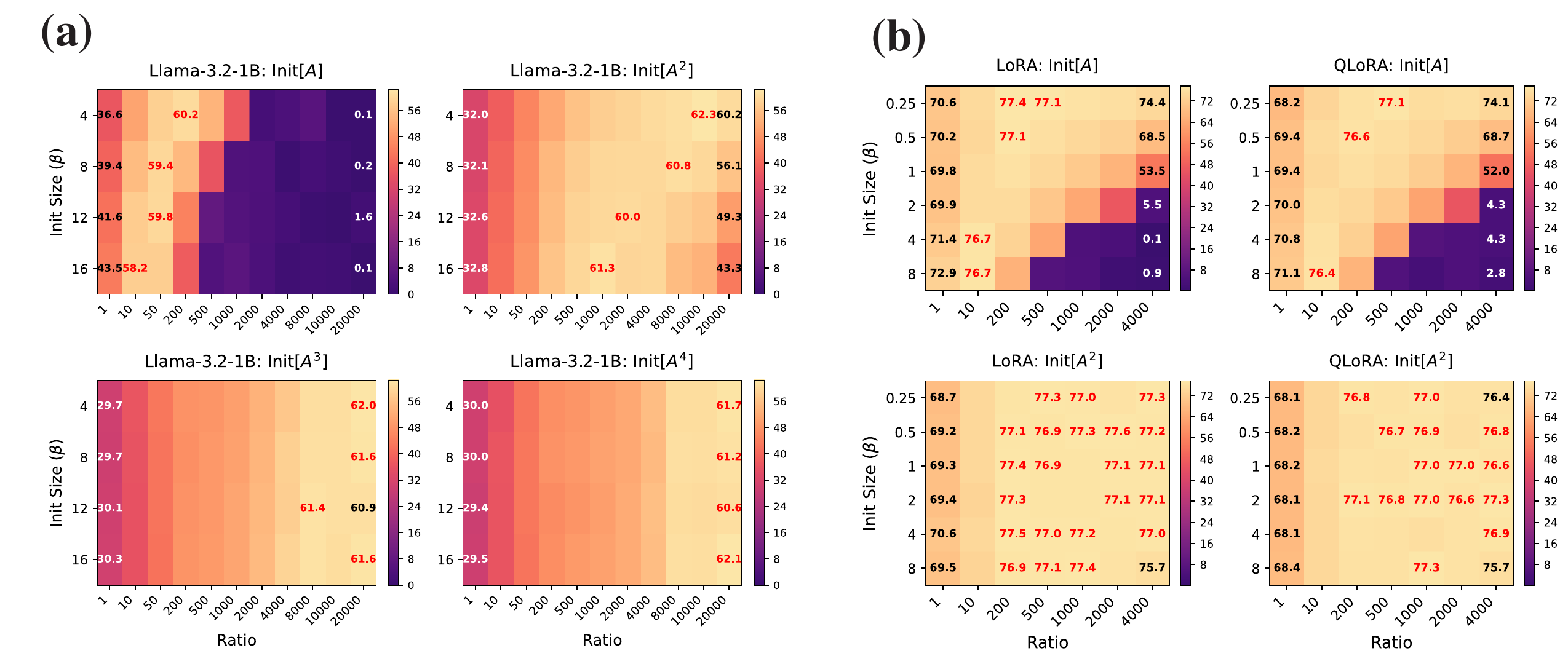}
    \caption{Additional Llama arithmetic experiments. (a) Llama 3.2-1B heatmaps compare \initA, \initAA, \texttt{Init[$\text{A}^3$]}, and \texttt{Init[$\text{A}^4$]} over initialization scale $\beta \in \{4,8,12,16\}$ and LR ratio $\rho=\eta_B/\eta_A \in \{1,10,50,200,500,1000,2000,4000,8000,10000,20000\}$. (b) Llama 3-8B heatmaps compare full-precision LoRA and QLoRA under \initA and \initAA over $\beta \in \{0.25,0.5,1,2,4,8\}$ and $\rho \in \{1,10,200,500,1000,2000,4000\}$. In both panels, each ratio uses $\eta_A=10^{-5}$ and $\eta_B=\rho\eta_A$. QLoRA exhibits the same high-ratio sensitivity under \initA as full-precision LoRA, while \initAA preserves a broad high-accuracy region in both settings.}
    \label{fig:llama_additional_subfigures}
    \label{fig:llama1b_initialization_powers}
    \label{fig:llama8b_lora_vs_qlora}
\end{figure}

\begin{figure}[t]
    \centering
    \includegraphics[width=0.98\linewidth]{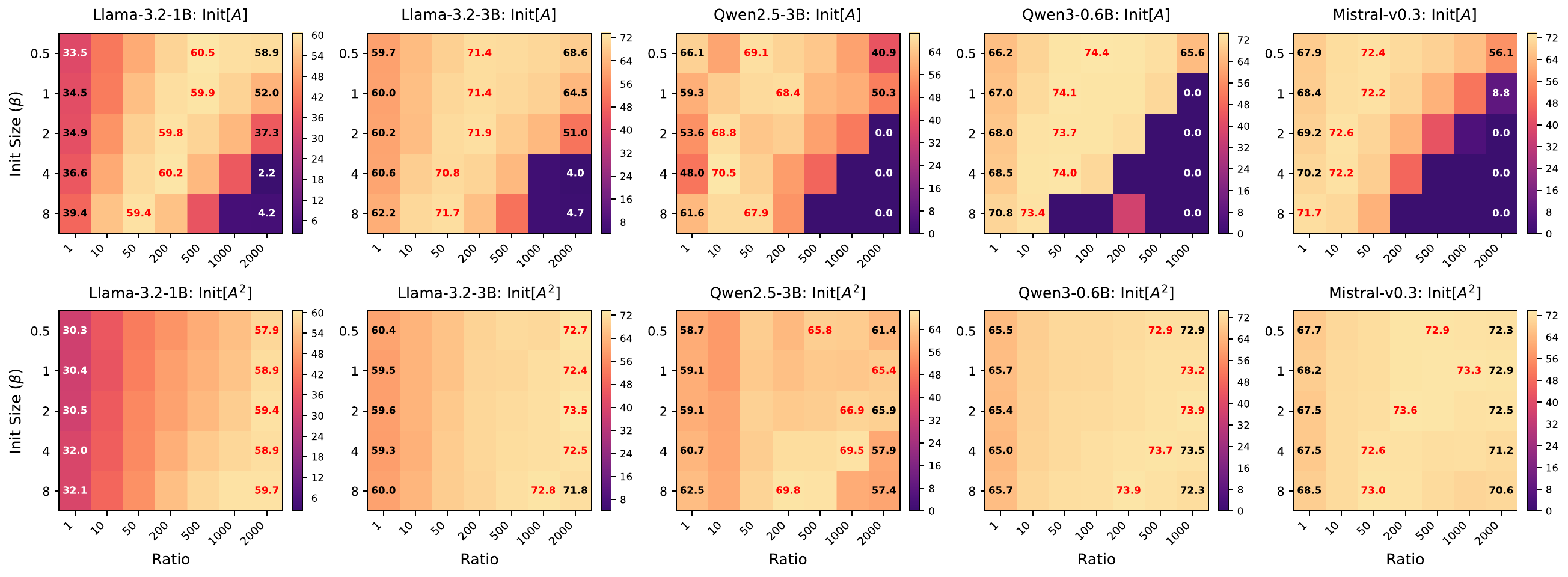}
    \caption{Arithmetic accuracy across model families under standard \initA (top) and \initAA (bottom). Columns correspond to Llama 3.2-1B, Llama 3.2-3B, Qwen2.5-3B, Qwen3-0.6B, and Mistral-v0.3. Each heatmap sweeps initialization scale $\beta \in \{0.5,1,2,4,8\}$ on the vertical axis and the LR ratio $\rho=\eta_B/\eta_A$ on the horizontal axis, with ratio values shown in each panel; each ratio uses $\eta_A=10^{-5}$ and $\eta_B=\rho\eta_A$.}
    \label{fig:more_models_lora}
\end{figure}

We further test whether the initialization behavior persists under quantized fine-tuning, additional model families, and smaller Llama backbones. Unless stated otherwise, these runs follow \cref{app:decoder_only_setup} with $r=\alpha=8$, AdamW, the same LoRA modules, and arithmetic evaluation. For QLoRA \citep{dettmers2023qlora}, the pretrained backbone is quantized while gradients update the LoRA adapters.

Panel~(b) of \cref{fig:llama_additional_subfigures} shows similar LR-ratio sensitivity for LoRA and QLoRA. Under \initA at $\eta_B/\eta_A=2000$, raising $\beta$ from $0.5$ to $8$ lowers accuracy from $71.9\%$ to $1.8\%$ for LoRA and from $70.7\%$ to $4.5\%$ for QLoRA. Under \initAA, LoRA reaches $80.2\%$ at $\beta=0.5$ and stays within $77.7$--$79.3\%$ at ratio $2000$ across tested $\beta$; QLoRA reaches $79.5\%$ and retains roughly $77$--$79\%$ at large ratios. Quantization thus preserves the high-ratio failure under \initA and the broad high-accuracy region under \initAA.

\Cref{fig:more_models_lora} shows the same effect across various models. Under standard \initA, increasing the LR ratio becomes fragile as $\beta$ grows: Llama 3.2-1B, Qwen2.5-3B, Qwen3-0.6B, and Mistral-v0.3 all collapse to near-zero accuracy in the high-ratio, large-$\beta$ region, and Llama 3.2-3B shows the same degradation pattern at $\beta \ge 4$. In contrast, \initAA removes most of this diagonal failure region. The high-ratio columns remain competitive across the model families, reaching roughly $58$--$60\%$ on Llama 3.2-1B, $72$--$74\%$ on Llama 3.2-3B, $57$--$66\%$ on Qwen2.5-3B, $72$--$74\%$ on Qwen3-0.6B, and $71$--$73\%$ on Mistral-v0.3. These results indicate that the noise-suppression benefit of \initAA is not specific to one backbone, but appears consistently across Llama, Qwen, and Mistral models.

Panel~(a) of \cref{fig:llama_additional_subfigures} repeats the same stress test on Llama 3.2-1B and extends the ratio grid to $20000$. Standard \initA again exhibits a drifting optimum and severe high-ratio collapse: the best cells lie around ratios $10$--$200$, while the ratio-$20000$ column is near zero for all tested $\beta$. \initAA substantially improves the stable region, with peak accuracies around $60$--$62\%$ at large ratios, but very large $\beta$ still degrades at the extreme ratio. More aggressive scalings, \texttt{Init[$\text{A}^3$]} and \texttt{Init[$\text{A}^4$]}, further flatten the dependence on $\beta$ and keep accuracy near $61$--$62\%$ at the largest ratios. The tradeoff is visible at small ratios: these stronger scalings start near $30\%$ at ratio $1$, indicating that excessive suppression of the initial $A$ scale requires a larger $\eta_B/\eta_A$ ratio to recover signal learning. Overall, the smaller-model experiment supports the same mechanism: lower initialization variance suppresses high-ratio noise, while the exponent balances noise control and adaptation speed.

\subsection{Long-Horizon Arithmetic Fine-Tuning}
\label{app:long_horizon}

The long-horizon runs compare training budgets of up to 10 epochs. Each horizon is trained separately from initialization, with its LR schedule set for that horizon. We first use a Llama 3-8B arithmetic stress configuration with $r=\alpha=8$, $\beta=1$, $\eta_A=10^{-5}$, and $\rho=\eta_B/\eta_A=4000$; other settings follow \cref{app:decoder_only_setup}.

\begin{table}[!ht]
\centering
\small
\setlength{\tabcolsep}{4pt}
\caption{Separate training-horizon runs on Llama 3-8B arithmetic at $\rho=4000$. Entries report final accuracy (\%) / training loss; the gap is \initAA minus \initA accuracy.}
\label{tab:long_horizon_ratio}
\begin{tabular}{rccc}
\toprule
Epoch & \initA & \initAA & Accuracy gap \\
\midrule
1  & 55.06 / 0.687 & 75.61 / 0.348 & $+20.55$ \\
3  & 56.71 / 0.697 & 77.57 / 0.268 & $+20.86$ \\
6  & 55.73 / 0.687 & 77.30 / 0.182 & $+21.58$ \\
10 & 55.58 / 0.661 & 75.68 / 0.128 & $+20.10$ \\
\bottomrule
\end{tabular}
\end{table}

Across the four horizons in \cref{tab:long_horizon_ratio}, the reported accuracy gap is about $20$--$22$ points. Standard \initA remains below \initAA in the 10-epoch run, while \initAA reaches lower final training loss at longer horizons. Thus, the gap persists across the tested training budgets; these runs do not establish that all \initA configurations behave similarly.

We next compare training horizons on Llama 3-8B, using the nominal LRs and initialization scale from \cref{fig:alpha_plot}: $\beta=0.5$, $\eta_A=10^{-5}$, and $\eta_B=2\times10^{-3}$ with $\alpha/r\in\{4,12\}$. Each run has its own horizon-dependent schedule.

\begin{table*}[t]
\centering
\small
\setlength{\tabcolsep}{7pt}
\caption{Training-horizon comparison for the Llama 3-8B multiplier experiment. Each entry reports final arithmetic accuracy (\%) / training loss from a separate run.}
\label{tab:long_horizon_multiplier}
\begin{tabular}{rcccc}
\toprule
Epoch & \makecell{$\alpha/r=4$\\\initA} & \makecell{$\alpha/r=4$\\\initAA} & \makecell{$\alpha/r=12$\\\initA} & \makecell{$\alpha/r=12$\\\initAA} \\
\midrule
1  & 75.40 / 0.360 & 74.43 / 0.360 & 71.33 / 0.419 & 75.24 / 0.356 \\
3  & 75.27 / 0.256 & 77.40 / 0.294 & 71.18 / 0.339 & 76.89 / 0.279 \\
5  & 74.39 / 0.199 & 76.34 / 0.239 & 70.57 / 0.298 & 75.75 / 0.216 \\
10 & 68.72 / 0.135 & 75.63 / 0.144 & 68.33 / 0.249 & 72.64 / 0.132 \\
\bottomrule
\end{tabular}
\end{table*}

At $\alpha/r=4$, \initA is $0.97$ points higher in the 1-epoch run, but \initAA is higher at the longer reported horizons. At $\alpha/r=12$, \initAA is higher at every horizon. Both methods reach lower training loss with longer budgets, while an accuracy gap remains. These separate runs vary both duration and schedule and lack seed-level uncertainty; they do not isolate early stopping as a cause. They provide a limited horizon comparison, with the broader robustness claim supported by the full sweeps.

\subsection{Multi-Seed Accuracy and Tuning Robustness}
\label{app:multiseed_validation}

\paragraph{Five-seed multiplier validation.}
We repeat selected multiplier configurations on Llama 3-8B arithmetic fine-tuning with five seeds, $r=8$, $\beta=0.5$, $\eta_A=10^{-5}$, and $\rho=\eta_B/\eta_A=200$. \Cref{tab:multiseed_multiplier} reports accuracy and the 95\% confidence interval for the \initAA-minus-\initA difference.

\begin{table}[htbp]
\centering
\small
\setlength{\tabcolsep}{7pt}
\caption{Five-seed Llama 3-8B arithmetic accuracy (\%) at fixed LR ratio $\rho=200$. Accuracy entries are mean $\pm$ SD. The final column gives the reported 95\% CI for the accuracy difference, in percentage points.}
\label{tab:multiseed_multiplier}
\begin{tabular}{rccc}
\toprule
$\alpha/r$ & \initA & \initAA & \makecell{95\% CI for\\\initAA $-$ \initA} \\
\midrule
1  & $76.35\pm0.63$ & $76.42\pm0.67$ & $[-0.28,\,0.43]$ \\
3  & $75.71\pm0.23$ & $76.71\pm0.70$ & $[0.15,\,1.84]$ \\
10 & $71.69\pm0.70$ & $76.42\pm0.16$ & $[3.80,\,5.65]$ \\
12 & $71.44\pm0.52$ & $76.87\pm0.48$ & $[4.68,\,6.20]$ \\
\bottomrule
\end{tabular}
\end{table}

At $\alpha/r=1$, the interval includes zero, so the small difference does not support an improvement claim. At $\alpha/r=10$ and $12$, the intervals remain above zero: the large-multiplier gap persists across seeds. These fixed-LR results support stress-regime robustness without establishing best-tuned performance.

\paragraph{Five-seed usable-region validation.}
For RoBERTa-base on SST-2, we run five seeds at each of 25 LR-grid cells per initialization with $\beta=8$. \Cref{tab:multiseed_roberta} reports peak accuracy and usable-cell counts.

\begin{table}[htbp]
\centering
\small
\setlength{\tabcolsep}{5pt}
\caption{Five-seed RoBERTa/SST-2 LR-grid summary at $\beta=8$. Best accuracy (\%) is mean $\pm$ SD. Counts are reported out of 25 cells; the near-best threshold is one percentage point below each method's own best.}
\label{tab:multiseed_roberta}
\begin{tabular}{lcccc}
\toprule
Initialization & Best accuracy & \makecell{Within 1 point\\of own best} & \makecell{Above 90\%\\accuracy} & \makecell{Chance-level\\cells} \\
\midrule
\initA  & $93.28\pm0.64$ & $5/25$ & $8/25$  & $14/25$ \\
\initAA & $93.53\pm0.52$ & $8/25$ & $16/25$ & $5/25$ \\
\bottomrule
\end{tabular}
\end{table}

The best scores are similar relative to the reported uncertainty, while \initAA increases near-best cells from $5$ to $8$, doubles cells above $90\%$ accuracy, and reduces chance-level cells from $14$ to $5$. This supports a wider usable LR region rather than a universal gain in peak accuracy.

\paragraph{Three-seed matched LR retuning.}
To distinguish initialization robustness from a fixed-LR mismatch, we retune $\eta_B$ in the multiplier experiment while keeping $\eta_A=10^{-5}$. Both methods search the same ratios $\rho\in\{0.1,0.5,1,50,100\}$ with three seeds per cell; \cref{tab:matched_lr_multiseed} reports the best settings.

\begin{table}[htbp]
\centering
\small
\setlength{\tabcolsep}{8pt}
\caption{Three-seed matched LR retuning at two LoRA multipliers. Accuracy (\%) is mean $\pm$ SD; the best ratio is selected from the same grid for both methods.}
\label{tab:matched_lr_multiseed}
\begin{tabular}{rlrc}
\toprule
$\alpha/r$ & Initialization & Best $\rho$ & Best accuracy \\
\midrule
4  & \initA  & 50  & $76.39\pm0.62$ \\
4  & \initAA & 50  & $76.83\pm0.36$ \\
\midrule
12 & \initA  & 50  & $75.51\pm0.34$ \\
12 & \initAA & 100 & $76.84\pm0.31$ \\
\bottomrule
\end{tabular}
\end{table}

At $\alpha/r=4$, the best scores are similar after retuning. At $\alpha/r=12$, retuning recovers much of the fixed-ratio loss of \initA, while \initAA remains $1.33$ points higher within the searched grid. Together, these experiments separate robustness to fixed aggressive HPs from the best attainable score after matched tuning. Most other LR-ratio sweeps remain single-run, so small differences in those sweeps should not be interpreted as established improvements.

\subsection{Signed Signal--Noise Energy Measurements}
\label{app:empirical_energy_decomposition}

We measure the realized noise, signal, and signed cross energies defined after Lemma~1 on Llama 3.2-1B and Llama 3-8B, using Math-10K, three seeds, and three epochs. \Cref{tab:empirical_energy_decomposition} reports each component's final mean percentage contribution to $E_{\mathrm{total}}$. The identity was checked to relative error below $6\times10^{-7}$, confirming numerical consistency but not the mechanism alone.

\begin{table}[htbp]
\centering
\small
\setlength{\tabcolsep}{7pt}
\caption{Final mean energy contributions (\%) over three seeds. Noise and signal energies are nonnegative, while the cross term is signed; noise can exceed 100\% when the cross term is destructive. Rounded entries need not sum to exactly 100\%.}
\label{tab:empirical_energy_decomposition}
\begin{tabular}{llrrr}
\toprule
Model & Initialization & Noise & Signal & Cross \\
\midrule
Llama-3.2-1B & \initA  & 117.2 & 4.9  & $-22.1$ \\
Llama-3.2-1B & \initAA & 3.1   & 85.9 & $+11.1$ \\
\midrule
Llama 3-8B & \initA  & 117.4 & 8.1  & $-25.5$ \\
Llama 3-8B & \initAA & 1.5   & 90.4 & $+8.1$ \\
\bottomrule
\end{tabular}
\end{table}

Under \initA, initialization noise dominates and the cross term is destructive. Under \initAA, the signal contributes approximately $86$--$90\%$ of total energy and the cross term is constructive. The cross term is nonzero under both initializations, supporting the use of deterministic dominance bounds rather than cancellation. The noise/signal ordering emerges within the first $5\%$ of an epoch and persists through training, consistent with \cref{fig:llama_feature_decomposition}. These two finite widths show a transition to signal dominance, without directly verifying the asymptotic $1/n$ noise-suppression rate.

\section{Comparison with Other LoRA Initializations}
\label{app:comparison_with_others}
We report matched-setting comparisons that avoid method-specific tuned LR advantages.

\subsection{Comparison with \initAB}
\label{app:comparison_with_AB}

\begin{figure}[!h]
    \centering
    \includegraphics[width=1\linewidth]{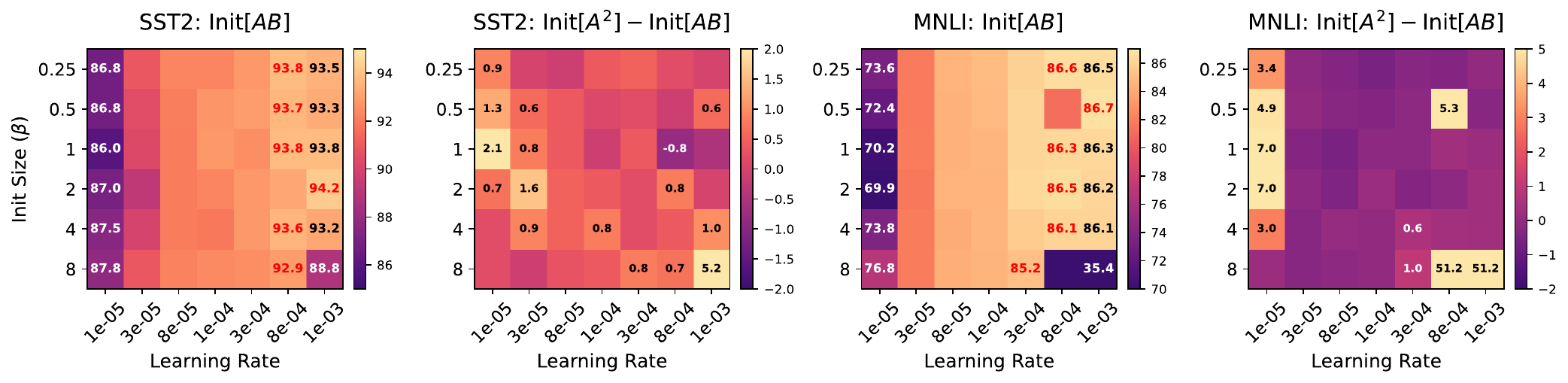}
    \caption{MNLI and SST-2 accuracy under \initAA and \initAB with symmetric LRs ($\eta_A = \eta_B$). Annotated cells have gaps $|\Delta| > 0.5\%$.}
    \label{fig:AB_A_comparison_heatmap}
\end{figure}

\citet{libeyond} apply the $\gamma$-calculus to derive another initialization strategy, \initAB, that can be more robust to suboptimal LRs. The robustness is attributed to the condition:
\begin{equation*}
    \gamma[A_0] + \gamma[B_0] = -1.
\end{equation*}
Strictly speaking, since $B_0=0$, $\gamma[B_0]=-\infty$ under the $\gamma$-calculus convention. The relevant comparison is instead with the first nonzero $B$ update, whose target/update scale under $\eta_B=\Theta(1)$ is $\gamma[\Delta B]=0$. Thus \initAA attains the desired post-update scaling, $\gamma[A_0]=-1$ and $\gamma[\Delta B]=0$, with zero initial adapter output. Its two advantages over \initAB are:
\begin{enumerate}
    \item \textbf{Internal Stability}: \initAA maintains bounded internal features $Z_A$, preventing the feature explosion characteristic of \initAB (\cref{fig:norms})
    \item \textbf{Implementation Efficiency}: Unlike \initAB, which necessitates nonzero initialization for both matrices and subsequent weight subtraction to preserve the pretrained mapping, \initAA allows $B=0$. This simplifies implementation and avoids inference overhead.
\end{enumerate}
To validate these advantages, we compare the two strategies on SST-2 and MNLI across varying initialization scales $\beta \in \{0.25,0.5,1,2,4,8\}$ and symmetric LRs $\eta_A = \eta_B = \eta \in \{1 \times 10^{-5}, 3 \times 10^{-5}, 8 \times 10^{-5}, 1 \times 10^{-4}, 3 \times 10^{-4}, 8 \times 10^{-4}, 1 \times 10^{-3}\}$. In \cref{fig:AB_A_comparison_heatmap}, the methods perform similarly in standard regimes, while \initAA has higher accuracy at the extreme LRs ($10^{-5}, 10^{-3}$) when $\beta=8$. This is consistent with greater internal stability under \initAA.

\subsection{Equal-LR Sweep on Math Benchmark}
\label{app:math_equal_lr}

\begin{table}[t]
\centering
\small
\setlength{\tabcolsep}{4pt}
\caption{Average accuracy (\%) on the math benchmark across initialization methods under equal LRs $\eta_A=\eta_B=\eta$. Best result in each column is bolded.}
\label{tab:math-init-lr}
\resizebox{\linewidth}{!}{%
\begin{tabular}{lccccccccc}
\toprule
Initialization & $1\mathrm{e}{-6}$ & $3\mathrm{e}{-6}$ & $1\mathrm{e}{-5}$ & $3\mathrm{e}{-5}$ & $1\mathrm{e}{-4}$ & $3\mathrm{e}{-4}$ & $1\mathrm{e}{-3}$ & $3\mathrm{e}{-3}$ & $8\mathrm{e}{-3}$ \\
\midrule
EVA & 24.23 & 29.36 & 34.01 & 38.82 & 46.01 & 55.95 & 61.78 & 60.15 & 0.00 \\
OLoRA & 29.19 & 30.29 & 33.19 & 38.61 & 52.13 & 58.23 & 60.07 & 58.22 & 7.28 \\
PiSSA & 30.36 & \textbf{34.03} & \textbf{40.25} & 47.19 & 54.77 & 58.42 & 60.97 & 58.80 & \textbf{22.30} \\
\initA ($\beta=1$) & 23.51 & 28.69 & 34.74 & 40.91 & 48.26 & 57.83 & \textbf{62.15} & 60.72 & 2.41 \\
\initA ($\beta=8$) & \textbf{30.88} & 32.85 & 39.56 & \textbf{47.90} & \textbf{56.08} & \textbf{60.27} & 60.15 & 57.10 & 0.53 \\
\initAA ($\beta=1$) & 24.13 & 28.33 & 31.02 & 35.22 & 45.51 & 51.48 & 60.77 & \textbf{60.98} & 2.14 \\
\initAA ($\beta=8$) & 23.35 & 28.57 & 31.84 & 36.72 & 45.62 & 52.81 & 61.93 & 60.43 & 5.43 \\
\initAB ($\beta=1$) & 23.00 & 31.49 & 35.38 & 42.38 & 52.43 & 58.78 & 61.64 & 60.22 & 4.09 \\
\initAB ($\beta=8$) & 16.42 & 0.91 & 2.86 & 0.00 & 2.01 & 4.13 & 5.43 & 4.19 & 1.25 \\
\bottomrule
\end{tabular}%
}
\end{table}

\Cref{tab:math-init-lr} shows that no initialization dominates across equal LRs; the best method varies with the LR. The smaller initial $A$ scale can slow early adaptation for \initAA, but it remains competitive near the best equal-LR results and avoids the severe large-$\beta$ degradation of \initAB.

\subsection{Variance-Compressed Structured Initializers under Asymmetric LRs}
\label{app:variance_compressed_initializers}

The \initAA philosophy is not tied to Gaussian initialization. For a layer with fan-in $n_\ell$, define
\[
\operatorname{RMS}(A_{0,\ell})
:=\frac{\|A_{0,\ell}\|_{F}}{\sqrt{r n_\ell}}.
\]
An $A^2$-style variant targets $\operatorname{RMS}(A_{0,\ell})=\beta/n_\ell$ while retaining the base initializer's structure. We apply this to three initializers:
\begin{itemize}
    \item \textbf{Orthogonal-$A^2$.} For $Q_A\in\mathbb{R}^{r\times n_\ell}$ with orthonormal rows, set $A_{0,\ell}^{(2)}=(\beta/\sqrt{n_\ell})Q_A$ and $B_{0,\ell}^{(2)}=0$. Then $\operatorname{RMS}(A_{0,\ell}^{(2)})=\beta/n_\ell$ with orthogonal $A$-directions and zero initial adapter output.
    \item \textbf{EVA-$A^2$.} Rescale EVA's activation-derived $A_{0,\ell}$ \citep{paischer2024parameter} by $c_\ell=(\beta/n_\ell)/\operatorname{RMS}(A_{0,\ell})$. Set $A_{0,\ell}^{(2)}=c_\ell A_{0,\ell}$ and $B_{0,\ell}^{(2)}=0$, preserving its row space.
    \item \textbf{PiSSA-$A^2$.} PiSSA initializes both factors from leading singular components of the pretrained weight and compensates for their initial product in the stored base weight \citep{meng2024pissa}. We use the same $c_\ell$ but apply the gauge transformation $A_{0,\ell}^{(2)}=c_\ell A_{0,\ell}$ and $B_{0,\ell}^{(2)}=c_\ell^{-1}B_{0,\ell}$. Hence $B_{0,\ell}^{(2)}A_{0,\ell}^{(2)}=B_{0,\ell}A_{0,\ell}$: the PiSSA subspaces, initialized low-rank product, and initial model function are preserved, although the optimization geometry changes.
\end{itemize}

We evaluate the three base initializers and their variance-compressed variants on Llama 3-8B arithmetic fine-tuning using the setup in \cref{app:decoder_only_setup}. We sweep $\rho=\eta_B/\eta_A\in\{1,200,4000\}$ and use $\beta\in\{0.5,1\}$ for the compressed variants.

\begin{table}[!h]
\centering
\small
\setlength{\tabcolsep}{7pt}
\caption{Arithmetic accuracy (\%) for structured LoRA initializers and their variance-compressed $A^2$ variants under asymmetric LR ratios. Results are single-run values; the best result within each initializer family and ratio is bolded.}
\label{tab:variance_compressed_initializers}
\begin{tabular}{llccc}
\toprule
Initializer & $\beta$ & $\rho=1$ & $\rho=200$ & $\rho=4000$ \\
\midrule
\initAA & 0.5 & \textbf{71.44} & \textbf{76.72} & 76.70 \\
\initAA & 1 & 71.24 & 76.63 & \textbf{77.93} \\
\midrule
PEFT Orthogonal & -- & \textbf{72.73} & 70.06 & 2.49 \\
Orthogonal-$A^2$ & 0.5 & 71.35 & 76.76 & 77.72 \\
Orthogonal-$A^2$ & 1 & 71.66 & \textbf{77.23} & \textbf{77.74} \\
\midrule
EVA & -- & \textbf{75.19} & 74.26 & 5.10 \\
EVA-$A^2$ & 0.5 & 71.56 & \textbf{77.28} & \textbf{77.19} \\
EVA-$A^2$ & 1 & 71.82 & 76.85 & 76.53 \\
\midrule
PiSSA & -- & \textbf{72.87} & \textbf{74.80} & 48.10 \\
PiSSA-$A^2$ & 0.5 & 68.90 & 69.52 & 69.90 \\
PiSSA-$A^2$ & 1 & 72.74 & 73.59 & \textbf{73.80} \\
\bottomrule
\end{tabular}
\end{table}

At $\rho=4000$, the uncompressed Orthogonal, EVA, and PiSSA initializers fall to $2.49\%$, $5.10\%$, and $48.10\%$, while their best compressed variants retain $77.74\%$, $77.19\%$, and $73.80\%$. Compression is not uniformly beneficial at $\rho\in\{1,200\}$. These single-run results suggest that controlling $A_0$ scale helps structured initializers tolerate aggressive asymmetric LRs; small differences remain uncertain.


\section{Responsible Research Statements}
\label{app:responsible_research}
\subsection{Broader Impact}
\label{app:broader_impact}
We study LoRA initialization to improve the understanding and robustness of parameter-efficient fine-tuning. We identify no direct negative societal impacts unique to this initialization; broader impacts depend on the underlying pretrained models and their downstream uses.

\subsection{LLM Usage}
\label{app:llm_usage}
We used large language models to polish the language and improve presentation clarity. We also use Codex to clean up the code and maintain the code base. The core research ideas, technical content, theoretical analysis, experimental design, and scientific conclusions were generated by the authors.

\clearpage
\section*{NeurIPS Paper Checklist}

\begin{enumerate}

\item {\bf Claims}
    \item[] Question: Do the main claims made in the abstract and introduction accurately reflect the paper's contributions and scope?
    \item[] Answer: \answerYes{}
    \item[] Justification: The abstract and introduction state the proposed \initAA initialization, the signal--noise mechanism, and the empirical claims. The analysis and experiments in the main paper and appendix support these claims within the stated assumptions and evaluated settings.
    \item[] Guidelines:
    \begin{itemize}
        \item The answer \answerNA{} means that the abstract and introduction do not include the claims made in the paper.
        \item The abstract and/or introduction should clearly state the claims made, including the contributions made in the paper and important assumptions and limitations. A \answerNo{} or \answerNA{} answer to this question will not be perceived well by the reviewers. 
        \item The claims made should match theoretical and experimental results, and reflect how much the results can be expected to generalize to other settings. 
        \item It is fine to include aspirational goals as motivation as long as it is clear that these goals are not attained by the paper. 
    \end{itemize}

\item {\bf Limitations}
    \item[] Question: Does the paper discuss the limitations of the work performed by the authors?
    \item[] Answer: \answerYes{}
    \item[] Justification: The conclusion discusses open limitations, including the focus on supervised fine-tuning, the lack of a direct rule for choosing asymmetric LR ratios across arbitrary models and tasks, the need for a fuller theory of other LoRA HPs such as rank $r$, and the uncertainty around more aggressive initialization scaling.
    \item[] Guidelines:
    \begin{itemize}
        \item The answer \answerNA{} means that the paper has no limitation while the answer \answerNo{} means that the paper has limitations, but those are not discussed in the paper. 
        \item The authors are encouraged to create a separate ``Limitations'' section in their paper.
        \item The paper should point out any strong assumptions and how robust the results are to violations of these assumptions (e.g., independence assumptions, noiseless settings, model well-specification, asymptotic approximations only holding locally). The authors should reflect on how these assumptions might be violated in practice and what the implications would be.
        \item The authors should reflect on the scope of the claims made, e.g., if the approach was only tested on a few datasets or with a few runs. In general, empirical results often depend on implicit assumptions, which should be articulated.
        \item The authors should reflect on the factors that influence the performance of the approach. For example, a facial recognition algorithm may perform poorly when image resolution is low or images are taken in low lighting. Or a speech-to-text system might not be used reliably to provide closed captions for online lectures because it fails to handle technical jargon.
        \item The authors should discuss the computational efficiency of the proposed algorithms and how they scale with dataset size.
        \item If applicable, the authors should discuss possible limitations of their approach to address problems of privacy and fairness.
        \item While the authors might fear that complete honesty about limitations might be used by reviewers as grounds for rejection, a worse outcome might be that reviewers discover limitations that aren't acknowledged in the paper. The authors should use their best judgment and recognize that individual actions in favor of transparency play an important role in developing norms that preserve the integrity of the community. Reviewers will be specifically instructed to not penalize honesty concerning limitations.
    \end{itemize}

\item {\bf Theory assumptions and proofs}
    \item[] Question: For each theoretical result, does the paper provide the full set of assumptions and a complete (and correct) proof?
    \item[] Answer: \answerYes{}
    \item[] Justification: The main text states the informal theorem assumptions, and the appendix provides the full decomposition, normalized-update assumptions, Adam/AdamW conditions, and proofs.
    \item[] Guidelines:
    \begin{itemize}
        \item The answer \answerNA{} means that the paper does not include theoretical results. 
        \item All the theorems, formulas, and proofs in the paper should be numbered and cross-referenced.
        \item All assumptions should be clearly stated or referenced in the statement of any theorems.
        \item The proofs can either appear in the main paper or the supplemental material, but if they appear in the supplemental material, the authors are encouraged to provide a short proof sketch to provide intuition. 
        \item Inversely, any informal proof provided in the core of the paper should be complemented by formal proofs provided in appendix or supplemental material.
        \item Theorems and Lemmas that the proof relies upon should be properly referenced. 
    \end{itemize}

    \item {\bf Experimental result reproducibility}
    \item[] Question: Does the paper fully disclose all the information needed to reproduce the main experimental results of the paper to the extent that it affects the main claims and/or conclusions of the paper (regardless of whether the code and data are provided or not)?
    \item[] Answer: \answerYes{}
    \item[] Justification: The experimental setup, datasets, model choices, LoRA HPs, optimizer settings, and sweep ranges are described in the main experiment section and in the experimental-details appendix. Reconstruction details and limitations are documented in the public experiment guide and \cref{app:toy_setup}.
    \item[] Guidelines:
    \begin{itemize}
        \item The answer \answerNA{} means that the paper does not include experiments.
        \item If the paper includes experiments, a \answerNo{} answer to this question will not be perceived well by the reviewers: Making the paper reproducible is important, regardless of whether the code and data are provided or not.
        \item If the contribution is a dataset and\slash or model, the authors should describe the steps taken to make their results reproducible or verifiable. 
        \item Depending on the contribution, reproducibility can be accomplished in various ways. For example, if the contribution is a novel architecture, describing the architecture fully might suffice, or if the contribution is a specific model and empirical evaluation, it may be necessary to either make it possible for others to replicate the model with the same dataset, or provide access to the model. In general, releasing code and data is often one good way to accomplish this, but reproducibility can also be provided via detailed instructions for how to replicate the results, access to a hosted model (e.g., in the case of a large language model), releasing of a model checkpoint, or other means that are appropriate to the research performed.
        \item While NeurIPS does not require releasing code, the conference does require all submissions to provide some reasonable avenue for reproducibility, which may depend on the nature of the contribution. For example
        \begin{enumerate}
            \item If the contribution is primarily a new algorithm, the paper should make it clear how to reproduce that algorithm.
            \item If the contribution is primarily a new model architecture, the paper should describe the architecture clearly and fully.
            \item If the contribution is a new model (e.g., a large language model), then there should either be a way to access this model for reproducing the results or a way to reproduce the model (e.g., with an open-source dataset or instructions for how to construct the dataset).
            \item We recognize that reproducibility may be tricky in some cases, in which case authors are welcome to describe the particular way they provide for reproducibility. In the case of closed-source models, it may be that access to the model is limited in some way (e.g., to registered users), but it should be possible for other researchers to have some path to reproducing or verifying the results.
        \end{enumerate}
    \end{itemize}

\item {\bf Open access to data and code}
    \item[] Question: Does the paper provide open access to the data and code, with sufficient instructions to faithfully reproduce the main experimental results, as described in supplemental material?
    \item[] Answer: \answerYes{}
    \item[] Justification: The paper uses public datasets and models. Code, setup instructions, and data-access recipes are publicly available at \url{https://github.com/liaodisen/LoRA-initA2}. The experimental setup and hyperparameter sweeps are documented in \cref{app:experimental_setup}; the public experiment guide documents reconstruction details and limitations.
    \item[] Guidelines:
    \begin{itemize}
        \item The answer \answerNA{} means that paper does not include experiments requiring code.
        \item Please see the NeurIPS code and data submission guidelines (\url{https://neurips.cc/public/guides/CodeSubmissionPolicy}) for more details.
        \item While we encourage the release of code and data, we understand that this might not be possible, so \answerNo{} is an acceptable answer. Papers cannot be rejected simply for not including code, unless this is central to the contribution (e.g., for a new open-source benchmark).
        \item The instructions should contain the exact command and environment needed to run to reproduce the results. See the NeurIPS code and data submission guidelines (\url{https://neurips.cc/public/guides/CodeSubmissionPolicy}) for more details.
        \item The authors should provide instructions on data access and preparation, including how to access the raw data, preprocessed data, intermediate data, and generated data, etc.
        \item The authors should provide scripts to reproduce all experimental results for the new proposed method and baselines. If only a subset of experiments are reproducible, they should state which ones are omitted from the script and why.
        \item At submission time, to preserve anonymity, the authors should release anonymized versions (if applicable).
        \item Providing as much information as possible in supplemental material (appended to the paper) is recommended, but including URLs to data and code is permitted.
    \end{itemize}

\item {\bf Experimental setting/details}
    \item[] Question: Does the paper specify all the training and test details (e.g., data splits, hyperparameters, how they were chosen, type of optimizer) necessary to understand the results?
    \item[] Answer: \answerYes{}
    \item[] Justification: The main text and appendix specify the benchmark tasks, models, LoRA modules, rank, scaling factor, LR sweeps, optimizers, batch sizes, sequence lengths, and training epochs.
    \item[] Guidelines:
    \begin{itemize}
        \item The answer \answerNA{} means that the paper does not include experiments.
        \item The experimental setting should be presented in the core of the paper to a level of detail that is necessary to appreciate the results and make sense of them.
        \item The full details can be provided either with the code, in appendix, or as supplemental material.
    \end{itemize}

\item {\bf Experiment statistical significance}
    \item[] Question: Does the paper report error bars suitably and correctly defined or other appropriate information about the statistical significance of the experiments?
    \item[] Answer: \answerNo{}
    \item[] Justification: The appendix reports selected three- and five-seed accuracy summaries and confidence intervals, while most main language-understanding and generation heatmaps remain single-run. The reported uncertainty types and interval-construction methods are not yet fully documented.
    \item[] Guidelines:
    \begin{itemize}
        \item The answer \answerNA{} means that the paper does not include experiments.
        \item The authors should answer \answerYes{} if the results are accompanied by error bars, confidence intervals, or statistical significance tests, at least for the experiments that support the main claims of the paper.
        \item The factors of variability that the error bars are capturing should be clearly stated (for example, train/test split, initialization, random drawing of some parameter, or overall run with given experimental conditions).
        \item The method for calculating the error bars should be explained (closed form formula, call to a library function, bootstrap, etc.)
        \item The assumptions made should be given (e.g., Normally distributed errors).
        \item It should be clear whether the error bar is the standard deviation or the standard error of the mean.
        \item It is OK to report 1-sigma error bars, but one should state it. The authors should preferably report a 2-sigma error bar than state that they have a 96\% CI, if the hypothesis of Normality of errors is not verified.
        \item For asymmetric distributions, the authors should be careful not to show in tables or figures symmetric error bars that would yield results that are out of range (e.g., negative error rates).
        \item If error bars are reported in tables or plots, the authors should explain in the text how they were calculated and reference the corresponding figures or tables in the text.
    \end{itemize}

\item {\bf Experiments compute resources}
    \item[] Question: For each experiment, does the paper provide sufficient information on the computer resources (type of compute workers, memory, time of execution) needed to reproduce the experiments?
    \item[] Answer: \answerYes{}
    \item[] Justification: The appendix reports the GPU type used for the experiments. Each individual fine-tuning run is relatively cheap and can be completed within roughly three hours on a single L40s GPU, but the full set of reported experiments requires substantially more total compute because we sweep many combinations of initialization scale, LR ratio, rank, and LoRA multiplier.
    \item[] Guidelines:
    \begin{itemize}
        \item The answer \answerNA{} means that the paper does not include experiments.
        \item The paper should indicate the type of compute workers CPU or GPU, internal cluster, or cloud provider, including relevant memory and storage.
        \item The paper should provide the amount of compute required for each of the individual experimental runs as well as estimate the total compute. 
        \item The paper should disclose whether the full research project required more compute than the experiments reported in the paper (e.g., preliminary or failed experiments that didn't make it into the paper). 
    \end{itemize}
    
\item {\bf Code of ethics}
    \item[] Question: Does the research conducted in the paper conform, in every respect, with the NeurIPS Code of Ethics \url{https://neurips.cc/public/EthicsGuidelines}?
    \item[] Answer: \answerYes{}
    \item[] Justification: The work studies initialization and optimization behavior for parameter-efficient fine-tuning and does not involve data collection from human subjects, unsafe data release, or intentionally harmful applications.
    \item[] Guidelines:
    \begin{itemize}
        \item The answer \answerNA{} means that the authors have not reviewed the NeurIPS Code of Ethics.
        \item If the authors answer \answerNo{}, they should explain the special circumstances that require a deviation from the Code of Ethics.
        \item The authors should make sure to preserve anonymity (e.g., if there is a special consideration due to laws or regulations in their jurisdiction).
    \end{itemize}

\item {\bf Broader impacts}
    \item[] Question: Does the paper discuss both potential positive societal impacts and negative societal impacts of the work performed?
    \item[] Answer: \answerYes{}
    \item[] Justification: The broader-impact section states that the work aims to improve understanding and robustness of LoRA and that any downstream impacts are inherited from the underlying pretrained models and applications.
    \item[] Guidelines:
    \begin{itemize}
        \item The answer \answerNA{} means that there is no societal impact of the work performed.
        \item If the authors answer \answerNA{} or \answerNo{}, they should explain why their work has no societal impact or why the paper does not address societal impact.
        \item Examples of negative societal impacts include potential malicious or unintended uses (e.g., disinformation, generating fake profiles, surveillance), fairness considerations (e.g., deployment of technologies that could make decisions that unfairly impact specific groups), privacy considerations, and security considerations.
        \item The conference expects that many papers will be foundational research and not tied to particular applications, let alone deployments. However, if there is a direct path to any negative applications, the authors should point it out. For example, it is legitimate to point out that an improvement in the quality of generative models could be used to generate Deepfakes for disinformation. On the other hand, it is not needed to point out that a generic algorithm for optimizing neural networks could enable people to train models that generate Deepfakes faster.
        \item The authors should consider possible harms that could arise when the technology is being used as intended and functioning correctly, harms that could arise when the technology is being used as intended but gives incorrect results, and harms following from (intentional or unintentional) misuse of the technology.
        \item If there are negative societal impacts, the authors could also discuss possible mitigation strategies (e.g., gated release of models, providing defenses in addition to attacks, mechanisms for monitoring misuse, mechanisms to monitor how a system learns from feedback over time, improving the efficiency and accessibility of ML).
    \end{itemize}
    
\item {\bf Safeguards}
    \item[] Question: Does the paper describe safeguards that have been put in place for responsible release of data or models that have a high risk for misuse (e.g., pre-trained language models, image generators, or scraped datasets)?
    \item[] Answer: \answerNA{}
    \item[] Justification: The paper does not release new pretrained models, high-risk datasets, or other assets requiring release safeguards.
    \item[] Guidelines:
    \begin{itemize}
        \item The answer \answerNA{} means that the paper poses no such risks.
        \item Released models that have a high risk for misuse or dual-use should be released with necessary safeguards to allow for controlled use of the model, for example by requiring that users adhere to usage guidelines or restrictions to access the model or implementing safety filters. 
        \item Datasets that have been scraped from the Internet could pose safety risks. The authors should describe how they avoided releasing unsafe images.
        \item We recognize that providing effective safeguards is challenging, and many papers do not require this, but we encourage authors to take this into account and make a best faith effort.
    \end{itemize}

\item {\bf Licenses for existing assets}
    \item[] Question: Are the creators or original owners of assets (e.g., code, data, models), used in the paper, properly credited and are the license and terms of use explicitly mentioned and properly respected?
    \item[] Answer: \answerNo{}
    \item[] Justification: The paper cites the existing datasets, models, and software used in the experiments, but the current draft does not explicitly list their licenses or terms of use.
    \item[] Guidelines:
    \begin{itemize}
        \item The answer \answerNA{} means that the paper does not use existing assets.
        \item The authors should cite the original paper that produced the code package or dataset.
        \item The authors should state which version of the asset is used and, if possible, include a URL.
        \item The name of the license (e.g., CC-BY 4.0) should be included for each asset.
        \item For scraped data from a particular source (e.g., website), the copyright and terms of service of that source should be provided.
        \item If assets are released, the license, copyright information, and terms of use in the package should be provided. For popular datasets, \url{paperswithcode.com/datasets} has curated licenses for some datasets. Their licensing guide can help determine the license of a dataset.
        \item For existing datasets that are re-packaged, both the original license and the license of the derived asset (if it has changed) should be provided.
        \item If this information is not available online, the authors are encouraged to reach out to the asset's creators.
    \end{itemize}

\item {\bf New assets}
    \item[] Question: Are new assets introduced in the paper well documented and is the documentation provided alongside the assets?
    \item[] Answer: \answerNA{}
    \item[] Justification: We do not release a new dataset, pretrained model, or benchmark. The public repository provides the implementation of the proposed method.
    \item[] Guidelines:
    \begin{itemize}
        \item The answer \answerNA{} means that the paper does not release new assets.
        \item Researchers should communicate the details of the dataset\slash code\slash model as part of their submissions via structured templates. This includes details about training, license, limitations, etc. 
        \item The paper should discuss whether and how consent was obtained from people whose asset is used.
        \item At submission time, remember to anonymize your assets (if applicable). You can either create an anonymized URL or include an anonymized zip file.
    \end{itemize}

\item {\bf Crowdsourcing and research with human subjects}
    \item[] Question: For crowdsourcing experiments and research with human subjects, does the paper include the full text of instructions given to participants and screenshots, if applicable, as well as details about compensation (if any)? 
    \item[] Answer: \answerNA{}
    \item[] Justification: The research does not involve crowdsourcing experiments or human subjects.
    \item[] Guidelines:
    \begin{itemize}
        \item The answer \answerNA{} means that the paper does not involve crowdsourcing nor research with human subjects.
        \item Including this information in the supplemental material is fine, but if the main contribution of the paper involves human subjects, then as much detail as possible should be included in the main paper. 
        \item According to the NeurIPS Code of Ethics, workers involved in data collection, curation, or other labor should be paid at least the minimum wage in the country of the data collector. 
    \end{itemize}

\item {\bf Institutional review board (IRB) approvals or equivalent for research with human subjects}
    \item[] Question: Does the paper describe potential risks incurred by study participants, whether such risks were disclosed to the subjects, and whether Institutional Review Board (IRB) approvals (or an equivalent approval/review based on the requirements of your country or institution) were obtained?
    \item[] Answer: \answerNA{}
    \item[] Justification: The research does not involve crowdsourcing experiments or human subjects, so IRB approval is not applicable.
    \item[] Guidelines:
    \begin{itemize}
        \item The answer \answerNA{} means that the paper does not involve crowdsourcing nor research with human subjects.
        \item Depending on the country in which research is conducted, IRB approval (or equivalent) may be required for any human subjects research. If you obtained IRB approval, you should clearly state this in the paper. 
        \item We recognize that the procedures for this may vary significantly between institutions and locations, and we expect authors to adhere to the NeurIPS Code of Ethics and the guidelines for their institution. 
        \item For initial submissions, do not include any information that would break anonymity (if applicable), such as the institution conducting the review.
    \end{itemize}

\item {\bf Declaration of LLM usage}
    \item[] Question: Does the paper describe the usage of LLMs if it is an important, original, or non-standard component of the core methods in this research? Note that if the LLM is used only for writing, editing, or formatting purposes and does \emph{not} impact the core methodology, scientific rigor, or originality of the research, declaration is not required.
    \item[] Answer: \answerNA{}
    \item[] Justification: LLMs are the experimental backbone models, but LLM usage is not an original or non-standard component of the proposed method development.
    \item[] Guidelines:
    \begin{itemize}
        \item The answer \answerNA{} means that the core method development in this research does not involve LLMs as any important, original, or non-standard components.
        \item Please refer to our LLM policy in the NeurIPS handbook for what should or should not be described.
    \end{itemize}

\end{enumerate}

\end{document}